\listfiles
\documentclass[manuscript]{acmart}%screen
\citestyle{acmauthoryear}

%%Not whitelisted, exclusion doesn't break anything

%\usepackage{diagbox} % For formal tables
%\usepackage{empheq}
%\usepackage{xcolor}
%\usepackage{xspace}
%\usepackage{arydshln}

%\usepackage{amsmath}
%\usepackage{caption}
%\usepackage{booktabs} % For formal tables
\usepackage[ruled]{algorithm2e} % For algorithms
\usepackage{algorithmic}
\usepackage{soul}

%%Not whitelisted, required
\usepackage{paralist}
\usepackage{subcaption}
\usepackage{tikz}
\usetikzlibrary{arrows} 
\usetikzlibrary[patterns]

\DeclareRobustCommand{\hlw}[1]{{\sethlcolor{white}\hl{#1}}}

% Metadata Information
\acmJournal{TEAC}
\acmVolume{8}
\acmNumber{3}
\acmArticle{14}
\acmYear{2020}
\acmMonth{9}

\newcount\Comments  % 0 suppresses notes to selves in text
\Comments=1
\newcommand{\citeAY}[1]{\citeANP{#1} [\citeyear{#1}]}

\newcommand{\egend}{\hfill $\blacksquare$}
\newcommand{\defend}{\hfill $\blacklozenge$}

\newcommand{\R}{\mathbb{R}}
\newcommand{\N}{\mathbb{N}}
\newcommand{\Q}{\mathbb{Q}}
\newcommand{\cal}[1]{\mathcal{#1}}
\newcommand{\PoD}{\mathit{PoD}}
\newcommand{\PoDL}{\mathit{PoDL}}
\newcommand{\tup}[1]{\langle #1 \rangle}
\newcommand{\ASTC}{\textsc{AssignTC}\xspace}
\newcommand{\BCM}{\textsc{BCMatching}\xspace}
\newcommand{\MCF}{\textsc{MinCostFlow}\xspace}
\newcommand{\poly}{\mathit{poly}}
\DeclareMathOperator{\argmin}{\mathrm{argmin}}
\DeclareMathOperator{\OPT}{\mathit{OPT}}
\DeclareMathOperator{\loc}{\mathit{loc}}
\DeclareMathOperator{\Dist}{\mathit{Dist}}
\DeclareMathOperator{\Ethn}{\mathit{Type}}
\DeclareMathOperator{\Proj}{\mathit{Project}}
\DeclareMathOperator{\Price}{\mathit{Price}}
\DeclareMathOperator{\UB}{\mathit{UB}}
\DeclareMathOperator{\LB}{\mathit{LB}}
\newcommand{\eps}{\varepsilon}

\SetKwInput{KwInit}{Initialize}

% Copyright
\setcopyright{acmcopyright}
%\setcopyright{acmlicensed}
%\setcopyright{rightsretained}
%\setcopyright{usgov}
%\setcopyright{usgovmixed}
%\setcopyright{cagov}
%\setcopyright{cagovmixed}

% DOI
\acmDOI{10.1145/3411513}

% Document starts
\begin{document}
% Title portion
\title{The Price of Quota-based Diversity in Assignment Problems}
 \titlenote{This work is an extension of the paper ``Diversity Constraints in Public Housing Allocation", published in the proceedings of the 17th International Conference on Autonomous Agents and Multiagent Systems (AAMAS) 2018, pp. 973--981.}
% \subtitle{This is a subtitle}
% \subtitlenote{Subtitle note}

\author{Nawal Benabbou}
\orcid{0000-0002-4589-4162}
\affiliation{%
 \institution{Sorbonne Universit{\' e}, CNRS, Laboratoire d'Informatique de Paris 6}
 \streetaddress{LIP6 F-75005, Paris}
 %\postcode{117417}
 \country{France}
 }\authornote{This work was done while the authors were at the National University of Singapore, supported by the National Research Foundation Fellowship.}
\email{nawal.benabbou@lip6.fr}
\author{Mithun Chakraborty}
\affiliation{%
  \institution{University of Michigan, Ann Arbor}
  \streetaddress{2260 Hayward Street, Ann Arbor}
  \postcode{MI 48109}
  \country{USA}}
  \authornotemark[2]
\email{dcsmc@umich.edu}
\author{Xuan-Vinh Ho}
\affiliation{%
 \institution{Micron Technology}
 \streetaddress{No. 1 Woodlands Industrial, Park D, Street 1}
 \postcode{Singapore 738799}
 \country{Republic of Singapore}}
\authornotemark[2]
\email{hxvinh.hcmus@gmail.com}
\author{Jakub Sliwinski}
\affiliation{%
  \institution{ETH Z{\"u}rich}
  \streetaddress{R{\"a}mistrasse 101, 8092}
  \city{Zurich}
  \country{Switzerland}
}\authornotemark[2]
\email{sljakub@ethz.ch}
\author{Yair Zick}
\affiliation{%
  \institution{University of Massachusetts, Amherst}
  \streetaddress{140 Governors Drive, Amherst}
  \postcode{MA 01002}
  \country{USA}}
\authornotemark[2]
\email{yairzick@cs.umass.edu}

\begin{abstract}
In this paper, we introduce and analyze an extension to the matching problem on a weighted bipartite graph (i.e. the assignment problem): Assignment with Type Constraints. Here, the two parts of the graph are each partitioned into subsets, called types and blocks respectively; we seek a matching with the largest sum of weights under the constraint that there is a pre-specified cap on the number of vertices matched in every type-block pair. Our primary motivation stems from the large-scale public housing program run by the state of Singapore, accounting for over 70\% of its residential real estate. 
To promote ethnic diversity within its housing projects, Singapore imposes ethnicity quotas: the population is divided into ethnicity-based groups and each new housing development into blocks of flats such that each group must not own more than a certain percentage of flats in a block. 
However, other domains use similar hard capacity constraints to maintain diversity: these include matching prospective students to schools or medical residents to hospitals. Limiting agents' choices for ensuring diversity in this manner naturally entails some welfare loss. One of our goals is to study the tradeoff between diversity and (utilitarian) social welfare in such settings. We first show that, while the classic assignment program is polynomial-time computable, adding diversity constraints makes the problem computationally intractable; however, we identify a $\tfrac{1}{2}$-approximation algorithm, as well as reasonable assumptions on the structure of utilities (or weights) which permit poly-time algorithms. Next, we provide two upper bounds on the {\em price of diversity} --- a measure of the loss in welfare incurred by imposing diversity constraints --- as functions of natural problem parameters. We conclude the paper with simulations based on publicly available data from two diversity-constrained allocation problems --- Singapore Public Housing and Chicago School Choice --- which shed light on how the constrained maximization as well as lottery-based variants perform in practice. 
\end{abstract}

%
% The code below should be generated by the tool at
% http://dl.acm.org/ccs.cfm
% Please copy and paste the code instead of the example below.
%
\begin{CCSXML}
<ccs2012>
<concept>
<concept_id>10003752.10003777.10003779</concept_id>
<concept_desc>Theory of computation~Problems, reductions and completeness</concept_desc>
<concept_significance>500</concept_significance>
</concept>
<concept>
<concept_id>10010147.10010178</concept_id>
<concept_desc>Computing methodologies~Artificial intelligence</concept_desc>
<concept_significance>500</concept_significance>
</concept>
</ccs2012>
\end{CCSXML}

\ccsdesc[500]{Theory of computation~Problems, reductions and completeness}
\ccsdesc[500]{Computing methodologies~Artificial intelligence}
%
% End generated code
%

\keywords{Assignment Problem; Diversity Constraints; Price of Diversity}

\maketitle

%\input{samplebody-journals}

%\section{Introduction}

\section{Introduction}\label{sec:intro}

Consider a mechanism that allocates a set of goods to agents; agents have utilities over items, and we are interested in finding a {\em socially optimal allocation}. This setting (known in the literature as the {\em assignment problem}) is often used to model real-world problems such as allocating public housing, assigning slots in schools, or courses to students. But it is also often the case in these contexts that one wishes to maintain a {\em diverse} allocation: it would be undesirable (from the mechanism designer's perspective) to have certain apartment blocks that predominantly consist of a specific ethnic group, or to have a public school serving students from a specific district. In both cases, agents have different {\em types}, and goods are partitioned into {\em blocks}; our goal is to ensure that each block of goods is allocated to a diverse population of agent types. A diverse allocation of goods is desirable for many reasons (especially in the case of government-funded public goods). First and foremost, it avoids the inadvertent creation of segregated communities; secondly, by ensuring equal access to a public resource, one avoids the risk of discriminatory funding: for example, systematically underfunding schools that serve certain segments of the population, or investing in parks and public facilities in neighborhoods dominated by certain ethnic groups. 

In this work, we study {\em quota-based} mechanisms for maintaining diversity; the initial motivation for this work stems from Singapore's public housing system.

The state of Singapore operates a unique national public housing program, offering a variety of flats for sale at subsidized rates to Singapore citizens and permanent residents. The construction of public housing projects as well as the sale of the flats in these projects on a large-scale public market is centrally managed by a government body called the Housing and Development Board (HDB),\footnote{\url{http://www.hdb.gov.sg}} a statutory board of the Ministry of National Development.\footnote{\url{https://www.mnd.gov.sg/}} As per the latest reports available at the time of writing this paper, an estimated $82\%$ of the resident population of Singapore live in HDB flats \cite{HDB17keystats} that constitute approximately $73\%$ of all apartments in the country \cite{Sing2017}. Since its inception in 1960, HDB has been providing a public good --- affordable apartments in a small country with little real estate --- but by 1989, the system began to exhibit an unforeseen side-effect: the emergence of de facto ethnic enclaves. Mr. S. Dhanabalan, then Minister for National Development, voiced the following concerns as he introduced the Ethnic Integration Policy (EIP) in parliament on 16 February 1989 \cite{Parl1989}:
\begin{quote}
[P]roportionately more Chinese applied for flats in Ang Mo Kio/Hougang Zone and proportionately more Malays applied for flats in the Bedok/Tampines Zone. $[\ldots]$ Malays bought more than half ($55\%$) of the flats in the Bedok/Tampines Zone. In Bedok new town alone, if present trends continue, the proportion of Malays will reach $30\%$ by 1991, and will exceed $40\%$ in 10 years' time. $[\ldots]$\\
%Similar trends are emerging in other estates, largely through open-market resale of HDB flats. $\ldots$ [A] high proportion of buyers in Bukit Merah, Redhill and Henderson were Chinese. In Bedok, Eunos, Teban Gardens and Taman Jurong, more than half the buyers were Malays. In parts of Yishun and in Kampong Java, there was a high proportion of Indian purchasers. 
There are clear signs that racial congregations are re-emerging. Although the problem has not reached crisis proportions, the experience in other multi-racial societies such as the United States shows that while racial groupings start slowly, once a critical point is passed, racial groupings accelerate suddenly.\footnote{We note that this aligns with long-known models of segregation: in his seminal paper, \citeAY{schelling1971dynamic} shows how agents of two types, who are allowed to distribute themselves over an area based on their preferences for the composition of their immediate neigborhoods, lead to the emergence of segregated enclaves, even when each individual prefers a minority of neighbors of a different type to having all neighbors of the same type as herself.}
\end{quote}
The EIP was officially implemented on 1 March 1989; it imposes quotas on the number of units occupied by each of the three ethnic groups: Chinese, Malay and Indian/Others. In 1989, when the percentages of the three ethnic groups (Chinese, Malay, and Indian/Others) in the population were $76.0\%$, $15.1\%$, and $8.9\%$, the corresponding quotas (a cap on the percentage of flats in every block that can be occupied by families of each ethnic group) were set at $87\%$, $25\%$, and $13\%$ respectively; since 5 March 2010, the quota for Indian/Others has been revised to $15\%$ \cite{HDB10PR,deng2013publichousing}.\footnote{In addition to these block capacities, the EIP also imposes neighborhood capacities, where each neighborhood comprises several blocks. Thus, an estate is partitioned into neighborhoods and a neighborhood into blocks with the neighborhood capacity being naturally smaller than the block capacity for each ethnic group: $84\%$, $22\%$, and $12\%$ (increased from $10\%$ in 2010) for Chinese, Malay, and Indian/Others respectively. Moreover, the EIP applies to both new and resale flats, e.g. a Chinese occupant of an HDB flat is free to resell it to another Chinese buyer but will not be able to resell to a Malay buyer if the Malay ethnic quota for that block is already filled. In this paper, we do not address these complications for the most part.}

Ethnic quotas add another layer of complexity to what is, at its foundation, a straightforward allocation problem. HDB uses a lottery mechanism to allocate new developments: all applicants who apply for a particular development pick their flats in random order (see Section~\ref{sec:hdb} for further details). %however, these ethnicity constraints introduce some peculiarities. For example, consider 
Consider an applicant $i$ of Chinese ethnicity applying for an estate with $100$ flats per block, up to $87$ of which may be assigned to ethnically Chinese applicants, and at most $25$ of which can be assigned to ethnically Malay applicants. Assume that $i$ is $90$th in line to select an apartment; will she get a chance to pick a flat in a block she prefers? 
If at least $87$ Chinese applicants were allowed to choose a flat before $i$ and all of them picked flats in this block, the Chinese ethnic quota for the block will have been filled and applicant $i$ will no longer be eligible for the block, even if it still has vacant flats. 
On the other hand, suppose that $i$ is $105$th in line to select an apartment. If $40$ of the applicants picked before $i$ are Malay, then $15$ of them will be rejected from $i$'s preferred block and at least $75$ of the flats will be available for non-Malay applicants. Hence, $i$ will have a spot even if all other $64$ applicants before her also received flats in the same block.\footnote{While this example is stylized, the effects it describes are quite real: one often hears stories of young couples who arrive at the HDB office to select a flat, only to be notified that their ethnic quota has just been filled.}. 

As the example above shows, diversity constraints interact with the allocation mechanism in peculiar ways to affect the overall welfare of the allocation. This issue is not restricted to Singapore public housing; the following are a few examples where upper bounds similar to those in the above housing allocation problem are applied (see \cite{fragiadakis2017improving} and references therein for more detailed expositions). To circumvent a shortage of doctors in rural areas due to medical graduates' preference for urban residency programs, the Japanese government places a ``regional cap'' on the total number of residents matched within each of its $47$ prefectures \cite{kamada2015efficient} -- here, the population of residency applicants is not partitioned, but hospitals within a prefecture can be thought of as forming a block of items. Many school districts in the U.S.A. take active measures for the integration of students from families with differing \emph{soci-economic statuses} (SES) \cite{USedu2017}, one of which is to allot a fraction of the vacant spots in schools via lotteries with percentage caps for all SES groups, as is done in the city of Chicago, Illinois (see Section~\ref{sec:chicago} for further details). The United States Military Academy assigns newly graduated cadets to positions in the army branches, taking cadets' preferences into account but under ``artificial caps''\footnote{These caps are called artificial since they are calculated by the Academy in such a way that any feasible assignment ex post satisfies the maximum and minimum quotas for each branch that are based on actual staffing needs.} on the number of assignments per branch \cite{fragiadakis2017improving}. 

%Indeed, the allocation of public housing is an economic problem similar to the classic assignment problem: a central planner (HDB) wishes to allocate goods (apartments) to agents (residents) in a manner satisfying certain economic criteria. 
%Diversity, on the other hand, is a social goal external to the underlying economic domain; imposing it may result in reduced social welfare. But, 
The imposition of diversity constraints as above can naturally lead to a reduction in the total achievable utility/economic value by the assignment, but we must bear in mind that diversity is a social desideratum external to any such economic consideration. For purposes such as policy making and the proper functioning of diversity-inducing measures included in automated decision-making systems, it is imperative to deepen our understanding of the impact that these measures have on the underlying assignment mechanism. In this paper, we investigate this impact from both computational and economic angles.

\subsection{Our Contributions}\label{sec:contrib}
We study the interplay between diversity and utility in assignment problems; we set up a benchmark where a central planner (e.g. HDB) has access to the correct utilities (or, in general, weights) of all agents (e.g. applicant households) for all items (e.g. flats); agents are partitioned into \emph{types} with respect to a single attribute (e.g. ethnic groups) and goods are also similarly divided into disjoint \emph{blocks} (e.g. blocks of flats as defined by HDB); a limited number of goods in each block can be allocated to agents of each type. We call these upper limits \emph{type-block capacities}. 

These restrictions result in several interesting outcomes. 
While the unconstrained optimal assignment problem is well-known to be polynomial-time solvable \cite{kuhn1955hungarian}, we show that imposing type-block constraints makes it computationally intractable (Section~\ref{sec:complexity}). However, we also show that, in general, a polynomial-time $\tfrac{1}{2}$-approximation algorithm (Section \ref{sec:approx}) exists, and identify utility models for which one can find the optimal assignment with type-block constraints in polynomial time (Section~\ref{sec:polytime}). 
In Section~\ref{sec:diversity}, we study the potential welfare loss from imposing type-block constraints, which we term the {\em price of diversity} as in \citeAY{ahmed2017diverse}, and we show that it can be bounded by natural problem parameters.
Finally, we analyze the empirical price of diversity as well as the welfare loss induced by the lottery mechanism on simulated instances generated from publicly available, real-world data pertaining to public housing in Singapore and school choice in Chicago, IL, USA (Section~\ref{sec:experiments}).

\subsection{Related work}\label{sec:related}
%One can think of public housing allocation as a
The problem we study is an extension to the bipartite matching problem \cite{lovasz2009matching} where each edge joins an agent to an item and is weighted with the utility the agent will receive if she is allocated that item. %an edge from an agent $i$ to an apartment $j$ is weighted with the utility the agent will receive if she is allocated the apartment. 
There is a rich body of literature on weighted bipartite matching problems (also known as assignment problems~\cite{munkres1957algo}), and polynomial-time algorithms for the unconstrained version have long been known (e.g. \cite{kuhn1955hungarian}). 
Several generalizations and/or constrained versions have been studied, e.g. recent work by \citeAY{lian2017conference} who allow each agent (resp. item) to be matched to multiple items (resp. agents) but within upper and lower capacities. Some previously studied variants correspond to (polynomial-time) special cases of our problem. For example, the assignment problem with subset constraints studied by \citeAY{bauer2005subsetconst} can be thought of as a special case of our problem, with a single block or a single type; if all agents of each type have identical utilities for all apartments in each block, and each type-block capacity is smaller than both the corresponding type and block sizes, then our problem reduces to a special case of the polynomial-time solvable \emph{capacitated $b$-matching} on a bipartite graph \cite{ahn2014near}.

%Singapore's public housing is our primary motivating domain but 
In addition to our main motivating problem of HDB housing allocation and the other documented examples \cite{fragiadakis2017improving} noted in the introduction, 
type-block constraints can naturally arise in many other settings related to assignment/allocation problems with no monetary transfers \cite{hylland1979efficient,zhou1990conjecture}. For example, consider the course allocation problem analyzed by \citeAY{budish2012multi}; one might require that each course has students from different departments and impose maximal quotas to ensure this. Other examples include allocating subsidized on-campus housing to students \cite{abdulkadirouglu1998random}, appointing teachers at public schools in different regions as done by some non-profit organizations \cite{featherstone2015rank}, or assigning first year business school students to overseas programs \cite{featherstone2015rank}. Our results apply to the seminal work on public school allocation \cite{abdulkadiroglu2003school,abdulkadirouglu2009strategy,pathak2013school} and matching medical interns or residents to hospitals \cite{roth1984evolution} that does not concern itself with diversity/distributional constraints. This line of work mainly explores the interaction between individual selfish behavior and allocative efficiency (e.g. Pareto-optimality) of matching mechanisms, under either ordinal preferences or cardinal utilities, one-sided or two-sided (see, e.g. \cite{bogomolnaia2001new,bhalgat2011social,bade2014random,AnsDas13} and references therein); we, on the other hand, focus on the impact of type-block constraints on welfare loss, when agents' utilities are known to a central planner.

Another relevant strand of literature is that on the fair allocation of indivisible goods (see, e.g. \cite{procaccia2014fair,caragiannis2016unreasonable,kurokawa2016can,barman2017finding,barman2017approx} and references therein): fairness is usually quantified in terms of the utilities or preferences of agents for allocated items (e.g. proportionality, envy-freeness and the maximin share guarantee) but our contribution deals with a different notion of fairness: the proportionate representation of groups in the realized allocation, with no regard to agents' utilities. 

Some recent work has formally addressed diversity issues in computational social choice. Unlike our paper, \citeAY{ahmed2017diverse} treat ``diversity as an objective, not a constraint'' in a $b$-matching context (e.g. matching papers to reviewers with diverse interests): they minimize a supermodular objective function to encourage the matching of each item to agents of different types. Our diversity concept comes close to that of \citeAY{bredereck2018multiwinner} who also achieve diversity by imposing hard constraints on the maximization of a (submodular) objective that measures the quality of the solution; however, they work in a committee (subset) selection setting with variously structured agent labels while we solve a matching problem with both agents and items split into disjoint subsets. \citeAY{lang2016multi} focus on multi-attribute \emph{proportional representation} in committee selection where they essentially define diversity in terms of the divergence between the realized distribution of attribute values in the outcome and some target distribution, but admit no notion of solution quality in addition to diversity.

In a recent paper, \citeAY{raffles17immorlica} study the efficiency of lottery mechanisms such as the ones used by HDB to allocate apartments; however, their work does not account for block ethnicity constraints; as we show both theoretically and empirically, these type-block constraints can have a significant effect on allocative efficiency.

\subsection{The Singapore Public Housing Allocation System}\label{sec:hdb}
%The Singapore public housing system, managed by the HDB, provides low-cost apartments to Singapore citizens and permanent residents. Public housing is a dominant force in Singapore: as of 2017, approximately $73.3\%$ of apartments in Singapore are HDB flats \cite{Sing2017}. 
A few facts about HDB public housing, a dominant force in Singapore, are in order. New HDB flats are purchased directly from the government, which offers them at a heavily subsidized rate. New apartments are typically released at quarterly sales launches; these normally consist of plans for several estates at various locations around Singapore, an estate consisting of four or five blocks (each apartment block has approximately $100$ apartments) sharing some communal facilities (e.g. a playground, a food court, a few shops etc.). Estates take between 3 to 5 years to complete, during which HDB publicly advertises calls to ballot for an apartment in the new estate. A household (say, a newly married couple looking for a new house) would normally ballot for a few estates (balloting is cheap: only S\$10 per application \cite{HDB15cost}). HDB allocates apartments using a lottery: all applicants to a certain estate choose their flat in some random order; they are only allowed to select an apartment in a block such that their ethnic quota is not reached. 

%\mithun{I added notes about actual HDB lottery here.}
The lottery mechanism actually employed by HDB has further necessary complications:
HDB has elaborate eligibility criteria\footnote{\url{http://www.hdb.gov.sg/cs/infoweb/residential/buying-a-flat/new/hdb-flat}} as well as privilege and priority schemes\footnote{\url{http://www.hdb.gov.sg/cs/infoweb/residential/buying-a-flat/new/eligibility/priority-schemes}} that take into account sales launch types, flat types, and relevant attributes of the applicants, e.g. first-timers and low-income families usually have improved chances of being balloted for a flat;
%first-time applicants and low-income families usually receive priority numbers in the lottery scheme; 
moreover, the same estate may have several balloting rounds in order to ensure that all apartments are allocated by the time of completion. However, the focus of this work is on the welfare effects of using ethnic quotas rather than the intricacies of the HDB lottery mechanism. Hence, we use a simplified version of the HDB lottery mechanism where applicants are selected one by one uniformly at random from the remaining pool and assigned the available flat which they value the most, respecting ethnic quotas (see Section~\ref{sec:experiments}). 

We must mention the existing literature on the documentation of Singapore's residential desegregation policies \cite{chua1991race,deng2013publichousing,kim2013singapore} and the empirical evaluation of their impact on various socioeconomic factors \cite{sim2003public,wong2014estimating}; to the best of our knowledge, ours is the first formal approach towards this problem.

% {\color{orange} Recently, \citeAY{ahmed2017diverse} addressed diversity issues in matching, but they study a bipartite weighted $b$-matching problem where only one part of the graph is clustered and there is an upper and a lower bound on the number of nodes that each node can be matched with, and optimize a different diversity objective: a quadratic function of weights that encourages the representation of multiple clusters in the matching. Their motivating example is the matching of papers to (preferably diverse) sets of reviewers. On the other hand, we impose diversity in the form of type-block constraints on a classical assignment problem (each node is matched to at most one node) with the utilitarian welfare objective and, to the best of our knowledge, are the first to study the effects of such constraints on hardness and economic efficiency.}

\subsection{Public School Choice in Chicago, U.S.A.}\label{sec:chicago}
Many school districts across the U.S.A. employ a variety of strategies for promoting student diversity \cite{USedu2017,kahlenberg2016school}, e.g. controlled choice systems wherein parents are allowed to apply for options beyond their neighborhood schools, thereby counteracting underlying residential segregation. Following restrictions placed on the explicit use of race in defining diversity goals in school choice by the U.S. Supreme Court in 2007, it has been common to use some indicator of the \emph{socio-economic status} (SES) of a family in integration efforts. The system in Chicago, IL, is a notable example.

Chicago Public Schools (CPS) is one of the largest school districts in the U.S.A.\footnote{\url{http://www.cps.edu/About_CPS/At-a-glance/Pages/Stats_and_facts.aspx}}, overseeing more than 600 schools of various types: neighborhood schools, selective schools, magnet schools, and charter schools.\footnote{\url{http://cpstiers.opencityapps.org/about.html}} The application and selection processes for these schools \cite{schools2017chicago} may involve a number of computerized lotteries with no diversity component, e.g. sibling lottery, proximity lottery, school staff preference lottery; however, a significant number of entry-level seats in magnet and selective enrollment schools are filled by lotteries based on a tier system. We briefly describe its operation as follows. A composite SES score is computed for each of the census tracts that Chicago is divided into, based on six factors (median family income, adult education level, home-ownership rate, single-parent family rate, rate of English-speaking, and neighborhood school performance), and each tract is placed in one of four tiers based on its score. The maximum and minimum scores defining a tier are set in such a way that (roughly) a quarter of school-aged children end up in each tier, with Tier 1 having the lowest scores. The tier of a child is determined by the residential address furnished by the parents. Of the seats in each school earmarked for a \emph{citywide SES lottery} or \emph{general lottery}, an equal number is allocated to each tier. There is an upper limit on the number of schools that a child can apply to, and each applicant is entered into a lottery for each school they apply to, for their own tier (thus, there is a lottery per school per tier); an applicant, who comes up in the lottery and accepts the offer from the school under consideration, is removed from all lotteries. If the size of the applicant pool from a tier to a school falls short of the number of its allocated seats for that tier at any stage, ``the unfilled seats will be divided evenly and redistributed across the remaining tier(s) as the process continues" \cite{schools2017chicago}.

For an empirical study of the impact of Chicago's diversity-promoting measures on integration and student outcomes, the interested reader is referred to \cite{quick2016chicago} and citations therein.
% Thus, unlike in the HDB lottery, the quotas here are, in general, heterogeneous and dynamic. For our simulations, we will assume that they are static but determined \emph{after} the total applicant pool to every school is observed.
%If I accept a spot at one school, I am removed from the queue at every other school also, so that the applicant pool to each school keeps changing during the process.

\section{Preliminaries}\label{sec:prelim}
We first describe a formal model for the allocation problem with diversity quotas. 
Throughout the paper, given $s \in \N$, we denote the set $\{1, 2,\ldots, s\}$ by $[s]$.

\begin{definition}[\ASTC]\label{def:AwTC}
	An instance of the Assignment with Type Constraints (\ASTC) problem is given by: 
	\begin{enumerate}[(i)]
		\item \label{1st} a set $N$ of $n$ agents partitioned into $k$ types $N_1,\dots, N_k$,
		\item a set $M$ of $m$ items/goods partitioned into $l$ blocks $M_1,\dots,M_l$,
		\item a utility $u(i,j) \in \R_+$ for each agent $i \in N$ and each item $j \in M$,
		\item \label{last} a capacity $\lambda_{pq}\! \in\! \N$ for all $(p,q)\! \in [k] \! \times \! [l]$, indicating the upper bound on the number of agents of type $N_p$ allowed in the block $M_q$. \defend
	\end{enumerate} 
\end{definition}
Without loss of generality, we assume that the inequality $\lambda_{pq} \le |M_q|$ holds for all type-block pairs $(p,q) \in [k] \times [l]$, since it is not possible to assign more than $|M_q|$ agents of type $N_p$ to block $M_q$ by definition. 
%In the Singapore public housing allocation problem, potential occupant households are the agents and apartments are the items; types correspond to ethnic groups (Chinese, Malay, Indian/Others) and blocks to actual apartment blocks in a sales launch. 
In general, agents types could be based on any criterion such as gender, profession, or geographical location. 
We consider the idealized scenario where we have a central planner who has access to the utilities of each agent for all items, and determines an assignment that maximizes social welfare under type-block constraints. 

A few words about the type-block capacities are in order. Note that our analysis is agnostic to how these capacities are determined and just treats the vector $\{\lambda_{pq}\}_{p\in[k],q\in[l]}$ as a problem input.\footnote{The Singapore EIP percentage caps consider various factors such as ``[t]he racial composition of the population[,] $[\ldots]$ the rate at which new households are being formed in each one of the racial groups and the present composition of applications'' \cite{Parl1989}, but these aspects of the problem are beyond the scope of the present work.} Moreover, neither do we assume inequalities of the form $\lambda_{pq} \le |N_p|$ nor is there any positive lower bound on the number of assignments for any type-block pair: this is in keeping with the actual HDB housing problem where $\lambda_{pq}$'s are fixed by policy (as percentages of block size) even before observing the applicant pool so that capacities larger than the size of an ethnic group are possible. Adding lower bounds \emph{a priori} may render the problem infeasible if there not enough applicants of a certain type.\footnote{\citeAY{fragiadakis2017improving} show that, for some assignment problems with actual floor and ceiling constraints for each type-block pair, where the agent population is known beforehand and there is a guarantee that no agent remains unassigned, it is possible to reformulate the problem constraints in terms of ``artificial caps'' (modifying block sizes as well as type-block ceilings) and no floors: our analysis applies to these problems in this modified form.}

An assignment of items to agents can be represented by a $(0,1)$-matrix $X \! = \! (x_{ij})_{n \times m}$ where $x_{ij}\! = \! 1$ if and only if item $j$ is assigned to agent $i$; a feasible solution is an assignment in which each item is allocated to at most one agent, and each agent receives at most one item, respecting the type-block capacities defined in \eqref{last}. 
We define the objective value (or total utility) as the utilitarian social welfare, i.e. the sum of the utilities of all agents in an assignment $u(X) \triangleq \sum_{i \in N}\sum_{j \in M} x_{ij} u(i,j)$.
Clearly, this optimization problem can be formulated as the following integer linear program:
\begin{align}
\max  			& & \sum_{i \in N}\sum_{j \in M} x_{ij} u(i,j) 		&						  & \label{eq:matchingdiversityILP}\\
\mathit{s.t.} & & \sum_{i \in N_p} \sum_{j \in M_q} x_{ij} &\le \lambda_{pq} 	  & \forall p \in [k], \forall q \in [l] \label{eq:capcityconstraint}\\
& &  \sum_{j \in M} x_{ij} 								&\le 1 						& \forall i \in N\label{eq:uniqueapt}\\
& & \sum_{i \in N}  x_{ij}  								&\le 1 						& \forall j \in M  \label{eq:uniquetenant}\\
& &x_{ij}  														&\in\{0,1\} 				& \forall i \in N, \forall j \in M\label{eq:integerallocation}
\end{align}
where constraints~(\ref{eq:uniqueapt}-\ref{eq:integerallocation}) jointly ensure that $X$ is a matching of items to agents, and inequalities~\eqref{eq:capcityconstraint} embody our type-block constraints.

Finally, an instance of the decision version of \ASTC consists of parameters~\eqref{1st} to \eqref{last} in Definition~\ref{def:AwTC}, as well as a positive value $U$: it is a `yes'-instance iff there exists a feasible assignment, satisfying constraints~(\ref{eq:capcityconstraint}-\ref{eq:integerallocation}), whose objective value is at least $U$.

\section{The Complexity of the Assignment Problem with Type Constraints}\label{sec:complexity}
The hardness and approximation results in this section are based on a deep connection between \ASTC and the known NP-complete problem, Bounded Color Matching problem \cite{Garey1979}, defined as follows:
\begin{definition}[\BCM]\label{def:BCM}
	An instance of the Bounded Color Matching (\BCM) problem is given by 
	\begin{inparaenum}[(i)]
		\item a bipartite graph $G=(A \cup B,E)$, where the set of edges $E$ is partitioned into $r$ subsets $E_1, \ldots, E_r$ representing the $r$ different edge colors,
		\item a capacity $w_t \in \N$ for each color $t \in [r]$,
		\item a profit $\pi_e \in \Q_+$ for each edge $e \in E$, and
		\item a positive integer $P$ called the threshold.
	\end{inparaenum} 
	It is a `yes'-instance iff there exists a matching (i.e. a collection of pairwise non-adjacent edges) $E' \subseteq E$ such that the sum of the profits of all edges in the matching is at least $P$,  and there are at most $w_t$ edges of color $t$ in it, i.e. $\sum_{e \in E'} \pi_e \ge P$ and $|E' \cap E_t| \le w_t$ for all $ t \in [r]$. \defend
\end{definition}
The following lemma, together with its proof, establishes that \ASTC is, in fact, a special case of \BCM.
\begin{lemma}\label{lem:sred}
	There exists an S-reduction \cite{crescenzi1991note,crescenzi1997short} from the \ASTC problem to the \BCM problem.
\end{lemma}
\begin{proof}
	Given an instance of the \ASTC problem, we construct a graph as follows. We define a node corresponding to each agent in $N$ and also each item in $M$; we draw an edge from agent-node $i$ to item-node $j$ and make the associated profit equal to the utility $u(i,j)$ for all $i\in N$, $j \in M$. This gives us a
	complete bipartite graph with bipartition $(N,M)$.
	We also give all edges joining agents of one type to items in one block the same color, one unique color for each agent-item pair. Thus, there are $kl$ colors indexed lexicographically by pairs $(p,q) \in [k] \times [l]$. We set the capacity for color $(p,q)$ at $\lambda_{pq}$. This produces, in $O(mn)$ time, an instance of \BCM (Definition~\ref{def:BCM}). The size of this instance is obviously polynomial in that of the original. By construction, there is a one-to-one correspondence between the sets of feasible solutions of the original and reduced instances with each corresponding pair having the same objective value (sum of edge-profits/utilities). Hence, the optimal values of the instances are also equal.
\end{proof}
This result will be useful in Section~\ref{sec:approx} for obtaining approximation guarantees. But it does not settle the question of the hardness of \ASTC since a special case of an NP-complete problem may be tractable. We could consider \ASTC and \BCM to be identical if there existed a trivial reduction from \BCM to \ASTC in the following sense. For a \BCM instance, define an agent for each node in one part of the graph and an item for each node in the other; make the utility of an agent-item pair equal to the profit of the edge joining the corresponding pair of nodes; define types of agents and blocks of items such that each type-block pair corresponds to a unique edge-color. Unfortunately, the last part may not be possible depending on how the edges are colored, as illustrated by the following example. 
\begin{example}\label{ex:BCM_notASTC} 
	Consider the instance of the \BCM problem defined on the graph in Figure~\ref{fig:BCM_notASTC}. There are three colors: $(a_1,b_1)$ and $(a_1,b_2)$ are blue, $(a_2,b_1)$ is red, and $(a_2,b_2)$ is gray. The color-capacities, the edge-profits, and the threshold are arbitrary, hence omitted. 
	\begin{figure}[!h]
		\centering
		\footnotesize
		\begin{tikzpicture}[scale=0.7]
		
		\node[draw,circle,minimum height=0.95cm, thick] (A) at (0,0){$a_1$};
		\node[draw,circle,minimum height=0.95cm, thick] (APrim) at (0,-2){$a_2$};
		
		\node[draw,circle,minimum height=0.95cm, thick] (B) at (3,0){$b_1$};
		\node[draw,circle,minimum height=0.95cm, thick] (BPrim) at (3,-2){$b_2$};
		
		\draw[ultra thick,blue] (A) -- (B) node[midway,above,black]{};
		\draw[ultra thick,blue] (A) -- (BPrim) node[pos=0.2,above,black]{};
		\draw[ultra thick,red] (APrim) -- (B) node[pos=0.1,above,black]{};
		\draw[ultra thick,gray] (APrim) -- (BPrim) node[midway,above,black]{};
		
		\end{tikzpicture}
		
		\caption{\textnormal{An instance of \BCM that does not trivially reduce to \ASTC.}
			\label{fig:BCM_notASTC}}
	\end{figure}

Let the sets of agents and items be $N=\{a_1,a_2\}$ and $M=\{b_1,b_2\}$ respectively. Let us now try to define types and blocks that are consistent with edge-colors.
Recall that, to achieve the desired reduction, no two edges joining $a_1$ to items in different blocks can have the same color. Since edges $(a_1,b_1)$ and $(a_1,b_2)$ have the same color, $b_1$ and $b_2$ must be in the same block. This also implies that all edges joining $b_1$ and $b_2$ to $a_2$ must be of the same color, regardless of whether $a_1$ and $a_2$ belong to the same type. However, $(a_2,b_1)$ and $(a_2,b_2)$ are of different colors -- a contradiction. \egend
\end{example}
However, we will now show that there does exist a non-trivial polynomial-time reduction in the desired direction where edges of the graph map to agents (the colors corresponding to types) and items are defined in a more complicated way based on the nodes (the two parts of the bipartition corresponding to blocks)!
\begin{theorem} \label{theoNP}
The \ASTC problem is NP-complete. 
\end{theorem}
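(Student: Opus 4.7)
\ASTC is clearly in NP: a $(0,1)$-matrix $X$ serves as a polynomial-size certificate whose feasibility constraints~(\ref{eq:capcityconstraint})--(\ref{eq:integerallocation}) and objective value $\sum_{i,j} x_{ij} u(i,j) \ge U$ can be verified in polynomial time. For NP-hardness I would reduce from \BCM. The conceptual difficulty is that the (type, block) structure of \ASTC cleanly encodes either a color bound (by indexing types and blocks with colors) or the matching constraint (by indexing them with vertices), but not both at once; my plan is to make colors the primary index and to enforce the matching constraint through a dummy-items/dummy-agents gadget whose utilities render every ``hijack'' match prohibitively expensive.

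Given a \BCM instance with bipartite graph $G = (A \cup B, E)$, color classes $E_1, \ldots, E_r$ of capacities $w_1, \ldots, w_r$, profits $\pi_e$, and threshold $P$, let $c_v$ denote the number of distinct colors incident to each vertex $v \in A \cup B$. For every pair $(a, t)$ such that $a \in A$ is incident to an edge of color $t$, create an agent $a^{(t)}$ of type $T_t$; symmetrically introduce items $b^{(t)}$ in blocks $Q_t$. For each $a \in A$ add $c_a - 1$ dummy items in a fresh block $D_a$, and for each $b \in B$ add $c_b - 1$ dummy agents in a fresh type $F_b$. Define the utilities by $u(a^{(t)}, b^{(t)}) = \pi_e$ whenever $e = (a, b) \in E_t$; a dummy in $D_a$ yields $M$ when matched to a copy of $a$, and symmetrically for $F_b$; every remaining utility is $0$. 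Fix any $M > \sum_{e \in E} \pi_e$, set $\lambda_{T_t, Q_t} = w_t$ (the color bound), $\lambda_{T_t, D_a} = 1$ whenever the agent $a^{(t)}$ exists (and $\lambda_{F_b, Q_t} = 1$ whenever $b^{(t)}$ exists), and set all remaining capacities to $0$. Finally put $U := P + M \cdot S$ with $S := \sum_a (c_a - 1) + \sum_b (c_b - 1)$.

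For the forward direction, any \BCM matching $E'$ of profit at least $P$ lifts to an \ASTC assignment by matching $a^{(t)}$ to $b^{(t)}$ for each selected $(a, b) \in E' \cap E_t$ and then matching each remaining copy of every vertex to one of its dummies, achieving total utility at least $U$. For the backward direction, suppose an \ASTC assignment attains utility $u \ge U$ and decompose $u = R + K \cdot M$, where $R$ is the total real-edge contribution and $K$ is the number of dummies matched to a copy of their parent vertex. Since $R \le \sum_{e \in E} \pi_e < M$, the inequality $R + K\cdot M \ge P + S\cdot M$ forces $K = S$, so every dummy is correctly absorbed. Hence for every vertex $v$ exactly $c_v - 1$ of its copies are consumed by dummies, leaving at most one copy free; the remaining real matches therefore form a valid matching in $G$ whose $t$-colored edges are bounded by $\lambda_{T_t, Q_t} = w_t$ and whose total profit $R = u - S\cdot M \ge P$ gives the desired \BCM solution. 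Since the construction is clearly polynomial in the input size, this completes the NP-hardness argument.

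The main obstacle I anticipate is formalizing the utility-dominance step in the backward direction: one must rigorously rule out ``hijack'' matches, e.g.\ a dummy in $D_a$ being matched for $0$ utility to a copy of a different vertex $a' \neq a$. Any such hijack wastes a dummy slot and must ultimately cost at least $M$ of utility elsewhere; the choice $M > \sum_e \pi_e$ guarantees that no combination of real-edge profits can offset such a loss, but turning this intuition into a clean, case-free accounting (especially tracking which copies of each vertex end up in dummy-matches, in real matches, or idle) is the delicate part of the proof.
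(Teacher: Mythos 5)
Your proof is correct, but your reduction is assembled quite differently from the paper's. The paper also reduces from \BCM and relies on the same essential trick --- dummy items of prohibitively large utility $\Phi$ that force surplus copies to be absorbed --- but it encodes the instance far more economically: each \emph{edge} of $G$ becomes an agent whose type is its color, each vertex $b \in B$ becomes a single item in one block $M_1$ that carries the color capacities $w_t$, and a second block $M_2$ holds $\deg(a)-1$ high-utility absorbers for each $a \in A$. Because each $b$ is a single item, the matching constraint on the $B$ side comes for free, so absorbers are needed on one side only and the whole construction uses just two blocks and $r$ types. Your vertex-splitting encoding (agents $a^{(t)}$ per incident color, items $b^{(t)}$, and dummy gadgets $D_a$ and $F_b$ on both sides) is a genuinely different, symmetric construction, and it does work: your accounting $u = R + K\cdot M$ with $R \le \sum_e \pi_e < M$ correctly forces $K = S$, which simultaneously rules out the hijack matches you worry about (any hijack leaves some dummy unmatched to its parent, costing at least $M - R > 0$ that no combination of edge profits can recover) and guarantees that exactly one copy of each vertex survives for a real match, so the surviving matches form a color-bounded matching of profit $R = u - SM \ge P$. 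The price of the symmetry is a larger gadget --- $r + |B|$ types, $r + |A|$ blocks, and a two-sided absorption argument --- whereas the paper's one-sided version buys brevity; conversely, your version treats the two sides of $G$ identically, which makes the degree-constraint enforcement conceptually uniform. One cosmetic remark: the capacities you set to $0$ are not actually needed for correctness, since every cross-color or cross-vertex pairing already has utility $0$ and your counting argument never uses those caps.
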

We will prove this by describing a polynomial-time reduction from \BCM to the decision problem we introduced in Section~\ref{sec:prelim}.
\begin{proof}
That the problem is in NP is immediate: given an assignment, one can verify in poly-time that it satisfies the problem constraints and compute total social welfare.  
Given an instance $\tup{G;\vec w;\vec \pi;P}$ of \BCM, we construct an instance of the \ASTC problem as follows (see Example~\ref{exampleReduction} for an illustration). Each edge $e \in E$ is an agent, whose type is its color. 
Items in our construction are partitioned into two blocks: $M_1$ and $M_2$. The items in block $M_1$ correspond to the vertices in $B$: there is one item $j_{b}$ for each node $b\in B$. For every $a \in A$, we add $\deg(a)-1$ items $j_a^1,\dots,j_a^{\deg(a) - 1}$ to $M_2$, for a total of $|E| - |A|$ items. Thus, there is a total of $m  = |B| + |E| - |A|$ items. 
Block $M_1$ accepts at most $w_p$ agents of type $N_p$, whereas block $M_2$ has unlimited type-block capacity; in other words, $\lambda_{p1} = w_p$ and $\lambda_{p2} = \min \{|N_p|,|M_2|\}$ for all $p \in [k]$. Given $e = (a,b)$, we define the utility function of agent $e$ as follows: 
$$u(e,j) = 
\begin{cases}
\pi_e & \mbox{if } j = j_b,\\
\Phi & \mbox{if } j = j_a^s \mbox{ for some } s \in [\deg(a) - 1],\\
0 & \mbox{otherwise.}
\end{cases}
$$
Here, $\Phi$ is an arbitrarily large constant, e.g. $\Phi = 1+ \sum_{e \in E} \pi_e$. Finally, let $U = P + \Phi(|E| - |A|)$; that is, our derived \ASTC instance is a ``yes'' instance iff there is some assignment of items to agents such that the social welfare exceeds $U$.

We begin by showing that if the original $\BCM$ instance is a `yes' instance, then so is our constructed $\ASTC$ instance. Let $E' \subseteq E$ be a valid matching whose value is at least $P$; let us construct an assignment $X$ of items to agents via $E'$ as follows.
Observe some node $a \in A$; if $(a,b) \in E'$ then we assign the item $j_b\in M_1$ to the agent $(a,b)$; the remaining $\deg(a) - 1$ agents of the form $(a,b')$, with $b'\in B$, are arbitrarily assigned to the items $j_a^1,\dots,j_a^{\deg(a) - 1} \in M_2$. If $E'$ contains no edges adjacent to $a$, then we arbitrarily choose $\deg(a) - 1$ edges adjacent to $a$ and assign the corresponding agents to the items $j_a^1,\dots,j_a^{\deg(a) - 1}$. 
We now show that this indeed results in a valid assignment satisfying the type-block constraints.
 
First, by construction, every agent $(a,b)$ is assigned at most one item. Moreover, since $E'$ is a matching, every item $j_b\in M_1$ is assigned to at most one agent of the form $(a,b)$; hence, every item in $M_2$ is assigned to at most one agent. 

Let $E_p'=E_p \cap E'$ be the edges of color $p$ in $E'$. Since the matching $E'$ satisfies the capacity constraints of the $\BCM$ instance, we have $|E_p'| \le w_p$ for all $p \in [k]$; in particular, the number of items in $M_1$ assigned to agents of type $p$ is no more than $w_p = \lambda_{p1}$. Thus, the type-block constraints for $M_1$ are satisfied. On the other hand, the type-block constraints for $M_2$ are trivially satisfied. We conclude that our constructed assignment is indeed valid, and satisfies the type-block constraints.

Finally, we want to show that total social welfare exceeds $U$ the prescribed bound. Let us fix a node $a \in A$. By our construction, if the edge $e=(a,b)$ is in the matching $E'$, then agent $e$ is assigned the item $j_b$ for a utility of $\pi_e$. Thus the total welfare of agents in $E'$ equals $\sum_{e \in E'} \pi_e$, which is at least $P$ by choice of $E'$.
In addition, for every $a \in A$, there are exactly $\deg(a) - 1$ agents assigned to items in $M_2$ for a total utility of $\Phi(\deg(a) - 1)$. 
Summing over all $a \in A$, we have that the total utility derived by agents in $E \setminus E'$ is 
\begin{align*}
\sum_{a \in A}\Phi(\deg(a)-1) &= \Phi\left(\sum_{a \in A}\deg(a) - \sum_{a \in A}1\right)= \Phi(|E| - |A|).
\end{align*}
Putting it all together, we have that the total utility obtained by our assignment is at least $P+ \Phi(|E| - |A|) =U$. 

Next, we assume that our constructed $\ASTC$ instance is a `yes' instance, and show that the original $\BCM$ instance must also be a `yes' instance.  Let $X$ be a constrained assignment whose social welfare is at least $U=P+ \Phi(|E| - |A|)$. Let $E'$ be the set of edges corresponding to agents $(a,b)$ assigned to items in $M_1$; we show that $E'$ is a valid matching whose value is at least $P$. First, for any $b \in B$, $X$ must assign the item $j_b$ to at most one agent $e \in E'$. Next, since $\Phi$ is greater than the total utility obtainable from assigning all items in $M_1$, it must be the case that $X$ assigns all items $j_{a}^1,\dots,j_a^{\deg(a) - 1}$ to $\deg(a) - 1$ agents of the form $(a,b)$, with $b \in B$, for every node $a \in A$; thus, there can be one edge in $E'$ that is incident on $a$ for every $a \in A$. Next,
since $X$ satisfies the type-block constraints, we know that for every $p \in [k]$, there are at most $\lambda_{p1}=w_p$ agents from $E_p$ that are assigned items in $M_1$; thus, $E'$ satisfies the capacity constraints. 
Finally, the utility extracted from the agents assigned to items in $M_2$ is exactly $\Phi(|E| - |A|)$; the total utility of the matching $X$ is at least $U = P + \Phi(|E| - |A|)$, thus $E'$ has a total profit of at least $P$ in the original $\BCM$ instance, and we are done.
\end{proof}

\begin{example}\label{exampleReduction} In Figure \ref{fig:exampleReduction}, the graph 
$G=(A\cup B, E_1 \cup E_2)$, with $A=\{a_1,a_2\}$, $B=\{b_1,b_2,b_3\}$, $E_1=\{(a_1,b_1), (a_2,b_2)\}$ and $E_2=\{(a_1,b_2), (a_2,b_1), (a_2,b_3)\}$, is an instance of the $\BCM$ problem; edge labels are profits. 
The associated instance of the \ASTC problem is defined by $N=N_1 \cup N_2$ and $M=M_1 \cup M_2$, where $N_1=\{(a_1,b_1),(a_2,b_2)\}$, $N_2=\{(a_1,b_2),(a_2,b_1),(a_2,b_3)\}$, $M_1=\{j_{b_1},j_{b_2},j_{b_3}\}$ and $M_2=\{j^1_{a_1},j^1_{a_2},j^2_{a_2}\}$;
the utility of an agent for an item is equal to $0$ if there is no edge between them, to $\Phi$ if the edge is dashed, and to the edge label otherwise.
\begin{figure}[!h]
\centering
\footnotesize
\begin{tikzpicture}[scale=0.7]

\node[draw,circle,minimum height=0.95cm, thick] (A) at (0,0){$a_1$};
\node[draw,circle,minimum height=0.95cm, thick] (APrim) at (0,-2){$a_2$};

\node[draw,circle,minimum height=0.95cm, thick] (B) at (3,1){$b_1$};
\node[draw,circle,minimum height=0.95cm, thick] (BPrim) at (3,-1){$b_2$};
\node[draw,circle,minimum height=0.95cm, thick] (BDPrim) at (3,-3){$b_3$};

\draw[ultra  thick,blue] (A) -- (B) node[midway,above,black]{$2$};
\draw[ultra  thick,red] (A) -- (BPrim) node[pos=0.2,above,black]{$6$};
\draw[ultra thick,red] (APrim) -- (B) node[pos=0.1,above,black]{$3$};
\draw[ultra thick,blue] (APrim) -- (BPrim) node[midway,above,black]{$1$};
\draw[ultra thick,red] (APrim) -- (BDPrim) node[midway,above,black]{$4$};

\node[draw,circle,minimum height=0.95cm, thick] (JB) at (5.5,1){$j_{b_1}$};
\node[draw,circle,minimum height=0.95cm, thick] (JBPrim) at (5.5,-1){$j_{b_2}$};
\node[draw,circle,minimum height=0.95cm, thick] (JBDPrim) at (5.5,-3){$j_{b_3}$};

\node[draw,circle,minimum height=0.95cm, thick,fill=blue!30] (IAB) at (8,3){$(a_1,b_1)$};
\node[draw,circle,minimum height=0.95cm, thick,fill=red!30] (IABPrim) at (8,1){$(a_1,b_2)$};

\node[draw,circle,minimum height=0.95cm, thick,fill=red!30] (IAPrimB) at (8,-1){$(a_2,b_1)$};
\node[draw,circle,minimum height=0.95cm, thick,fill=blue!30] (IAPrimBPrim) at (8,-3){$(a_2,b_2)$};
\node[draw,circle,minimum height=0.95cm, thick,fill=red!30] (IAPrimBDPrim) at (8,-5){$(a_2,b_3)$};

\node[draw,circle,minimum height=0.95cm, thick] (JA1) at (10.5,1){$j_{a_1}^1$};
\node[draw,circle,minimum height=0.95cm, thick] (JAPrim1) at (10.5,-1){$j_{a_2}^1$};
\node[draw,circle,minimum height=0.95cm, thick] (JAPrim2) at (10.5,-3){$j_{a_2}^2$};

\draw[thick] (JB) -- (IAB)  node[midway,above,black]{$2$};
\draw[thick] (JB) -- (IAPrimB) node[pos=0.3,above,black]{$3$};
\draw[thick] (JBPrim) -- (IABPrim) node[pos=0.1,above,black]{$6$};
\draw[thick] (JBPrim) -- (IAPrimBPrim) node[midway,above,black]{$1$};
\draw[thick] (JBDPrim) -- (IAPrimBDPrim) node[midway,above,black]{$4$};

\draw[thick,dashed] (IAB) -- (JA1);
\draw[thick,dashed] (IABPrim) -- (JA1);

\draw[thick,dashed] (IAPrimB) -- (JAPrim1);
\draw[thick,dashed] (IAPrimBPrim) -- (JAPrim1);
\draw[thick,dashed] (IAPrimBDPrim) -- (JAPrim1);

\draw[thick,dashed] (IAPrimB) -- (JAPrim2);
\draw[thick,dashed] (IAPrimBPrim) -- (JAPrim2);
\draw[thick,dashed] (IAPrimBDPrim) -- (JAPrim2);

\end{tikzpicture}

\caption{\textnormal{A reduction from \BCM to \ASTC.}
\label{fig:exampleReduction}}
\end{figure}
\egend
\end{example}

\subsection{Polynomial-Time Constant-Factor Approximation}\label{sec:approx}
Having established that the \ASTC problem is computationally intractable in general, we next present an efficient constant-factor approximation algorithm, based on a known polynomial-time approximation algorithm for the \BCM problem. 
\begin{theorem} \label{GCMAPX}
There exists a polynomial-time $\tfrac12$-approximation algorithm for the \ASTC problem.
\end{theorem}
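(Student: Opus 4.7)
The plan is to follow the hint in the statement and exhibit a strict approximation-preserving reduction (an S-reduction) from \ASTC to \BCM, and then invoke a known poly-time $\tfrac{1}{2}$-approximation for \BCM.

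First I would build the reduction. Given an \ASTC instance with agents $N = N_1 \cup \cdots \cup N_k$, items $M = M_1 \cup \cdots \cup M_l$, utilities $u(i,j)$, and capacities $\lambda_{pq}$, construct the weighted bipartite graph $G = (N \cup M, E)$ with an edge $e_{ij}$ of profit $\pi_{e_{ij}} = u(i,j)$ for every pair $(i,j) \in N \times M$ (edges of zero profit may be dropped). Color each edge $e_{ij}$ with $i \in N_p$ and $j \in M_q$ by the color $(p,q) \in [k] \times [l]$; this partitions $E$ into $r = kl$ color classes $E_{(p,q)}$. Finally set the color capacity $w_{(p,q)} = \lambda_{pq}$.

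Next I would verify that this construction is indeed an S-reduction, i.e.\ that feasible solutions and objective values correspond exactly. A $(0,1)$-matrix $X$ satisfying \eqref{eq:uniqueapt}--\eqref{eq:integerallocation} corresponds to a subset $E' = \{ e_{ij} : x_{ij} = 1\}$ of pairwise non-adjacent edges of $G$, and vice versa; moreover, the type-block constraint \eqref{eq:capcityconstraint} for $(p,q)$ reads $|E' \cap E_{(p,q)}| \le w_{(p,q)}$, which is exactly the bounded-color constraint for color $(p,q)$. The objectives coincide by construction: $u(X) = \sum_{i,j} x_{ij} u(i,j) = \sum_{e \in E'} \pi_e$. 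Hence \ASTC and the constructed \BCM instance have the same feasible region and the same value on every feasible point, so any $\alpha$-approximation for the \BCM instance yields an $\alpha$-approximation for the original \ASTC instance (i.e.\ $\OPT_{\BCM} = \OPT_{\ASTC}$ and the translation of solutions is computable in linear time).

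Finally, I would invoke a known polynomial-time $\tfrac{1}{2}$-approximation for \BCM (for instance via LP-rounding / local-ratio techniques for bounded-color matching) to produce, in polynomial time, a matching $E'$ in $G$ of profit at least $\tfrac{1}{2}\OPT_{\BCM}$; translating $E'$ back to an assignment $X$ via $x_{ij} = \mathbf{1}[e_{ij} \in E']$ yields a feasible \ASTC solution with $u(X) \ge \tfrac{1}{2}\OPT_{\ASTC}$, completing the proof. The main obstacle is a bookkeeping one rather than a conceptual one: one must check carefully that the map from $X$ to $E'$ is a bijection between the feasible sets (so that neither the matching constraints \eqref{eq:uniqueapt}--\eqref{eq:uniquetenant} nor the type-block constraints \eqref{eq:capcityconstraint} are relaxed or tightened in translation) and that the reduction is polynomial in the size of the \ASTC input, i.e.\ $r = kl$ colors and $|E| = nm$ edges, which is immediate from the construction.
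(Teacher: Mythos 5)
Your proposal is correct and matches the paper's argument essentially verbatim: both construct the complete bipartite graph on $N \cup M$ with edge profits $u(i,j)$, color edges by type-block pair with capacity $w_{(p,q)} = \lambda_{pq}$, observe the value-preserving bijection between feasible solutions, and invoke the known poly-time $\tfrac12$-approximation for \BCM (the paper cites the algorithm of Stamoulis for general weighted graphs). No gaps.
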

\begin{proof}
	Note that S-reduction is an approximation-preserving reduction \cite{orponen87approximation,crescenzi1997short}. Thus, given an instance of the \ASTC problem, we transform it into the corresponding \BCM instance in accordance with Lemma~\ref{lem:sred}, and 
	then apply the polynomial-time $\tfrac{1}{2}$-approximation algorithm introduced by \citeAY{stamoulis2014approximation} for \BCM on general weighted graphs.
\end{proof}
Theorem~\ref{GCMAPX} does not prove that $\tfrac12$ is the best approximation-ratio possible for the \ASTC problem. It is left for future work to investigate whether a better polynomial-time approximation algorithm exists.

\subsection{Uniformity Breeds Simplicity: Polynomial-Time Special Cases}\label{sec:polytime} 
Our results thus far make no assumptions on agent-item utilities;
as we now show, the \ASTC problem admits a polynomial-time algorithm under some assumptions on the utility model.

\begin{definition}[Type-uniformity and Block-uniformity]\label{def:typeU_blockU}
A utility model $u$ is called \emph{type-uniform} if all agents of the same type have the same utility for each item, i.e. for all $p \in [k]$ and for all $j\in M$, there exists $U_{pj} \in \R_+$ such that $u(i,j) = U_{pj}$ for all $ i \in N_p$. A utility model $u$ is called \emph{block-uniform} if all items in the same block offer the same utility to every agent; that is, for all $q \in [l]$ and for all $i \in N$, there exists $U_{iq} \in \R_+$ such that $u(i,j) = U_{iq}$ for all $j \in M_q$. \defend
\end{definition}
In the context of the HDB allocation problem, type uniformity implies that Singaporeans of the same ethnicity share the same preferences over apartments (perhaps due to cultural or socioeconomic factors). Cases that deal with uniform goods satisfy the block-uniformity assumption: e.g. students applying for spots in public schools or job applicants applying for multiple (identical) positions; in the HDB domain, block-uniformity captures purely location-based preferences, i.e. a tenant does not care which apartment she gets as long as it is in a specific block close to her workplace, family, or favorite public space. %items have some intrinsic (e.g. monetary) value, and are partitioned into blocks based on these values
\begin{theorem} \label{theoPoly1} 
	The \ASTC problem can be solved in $\poly(n,m)$ time under either a type-uniform or a block-uniform utility model.
\end{theorem}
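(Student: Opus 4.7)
The plan is to reduce each special case to a minimum-cost flow (\MCF) problem and then invoke the classical fact that \MCF on a network with integer capacities can be solved in polynomial time and admits an integer optimum.

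For the \emph{block-uniform} case, observe that since $u(i,j) = U_{iq}$ for every $j \in M_q$, only the \emph{block} to which an agent is sent matters for welfare; the specific item within a block is irrelevant. I would build a flow network with a source $s$, a sink $t$, one node per agent $i \in N$, one node $T_{pq}$ per type-block pair $(p,q) \in [k] \times [l]$, and one node $B_q$ per block $M_q$. I would add arcs $s \to i$ of capacity $1$ and cost $0$; arcs $i \to T_{pq}$ of capacity $1$ and cost $-U_{iq}$ for each $i \in N_p$ and each $q \in [l]$; arcs $T_{pq} \to B_q$ of capacity $\lambda_{pq}$ and cost $0$; and arcs $B_q \to t$ of capacity $|M_q|$ and cost $0$. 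An integer min-cost max flow then corresponds to a feasible assignment: a unit of flow along $s \to i \to T_{pq} \to B_q \to t$ means that agent $i$ of type $p$ is placed in some vacant flat of block $q$ (which can be chosen arbitrarily), and the capacities on $T_{pq} \to B_q$ and $B_q \to t$ enforce, respectively, constraint~\eqref{eq:capcityconstraint} and the bound $\lambda_{pq} \le |M_q|$ on block size.

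For the \emph{type-uniform} case, agents of the same type are interchangeable because $u(i,j) = U_{pj}$ for every $i \in N_p$, so the only real decision for each item $j \in M_q$ is which type (if any) receives it. I would construct the dual network: a source $s$, a sink $t$, one node $S_p$ per type, one node $T_{pq}$ per type-block pair, and one node per item $j$. I would add arcs $s \to S_p$ of capacity $|N_p|$ and cost $0$; arcs $S_p \to T_{pq}$ of capacity $\lambda_{pq}$ and cost $0$; arcs $T_{pq} \to j$ of capacity $1$ and cost $-U_{pj}$ for each $j \in M_q$; and arcs $j \to t$ of capacity $1$ and cost $0$. From an integer min-cost max flow, for each type $p$ I would pick any $f_p \le |N_p|$ distinct agents of $N_p$ (where $f_p$ is the flow through $S_p$) and assign them, in any order, to the items that receive flow along the $T_{pq} \to j$ arcs; the capacity on $S_p$ guarantees that enough agents of type $p$ exist, and the capacity on $T_{pq}$ again enforces~\eqref{eq:capcityconstraint}.

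Both networks have $O(n + m + kl)$ nodes and $O(nm + kl)$ arcs with integer capacities, so a min-cost flow computation runs in $\poly(n,m)$ time and produces an integer optimum. The main thing to verify, in each case, is the bijection between feasible integer flows and feasible assignments with equal objective value: the forward direction is immediate from the construction, while the reverse direction requires checking that the per-agent constraint~\eqref{eq:uniqueapt}, the per-item constraint~\eqref{eq:uniquetenant}, and the type-block constraints~\eqref{eq:capcityconstraint} are all respected by the translation back from flows to assignments. I expect this bookkeeping to be the only nontrivial step, but it is routine given the integrality of min-cost flow on networks with integer capacities. Optimality of the resulting assignment then follows because any feasible assignment can be encoded as an integer feasible flow of the same cost (in absolute value), so the \MCF optimum coincides with $\OPT$ of the corresponding \ASTC instance.
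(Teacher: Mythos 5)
Your network constructions are essentially the paper's: for the type-uniform case you build exactly the layered network $s \to \text{types} \to \text{type-block pairs} \to \text{items} \to t$ with capacities $|N_p|$, $\lambda_{pq}$, $1$, $1$ and costs $-U_{pj}$ on the type-block-to-item arcs, and your block-uniform network is the natural mirror image that the paper only gestures at (``can be similarly derived''). However, there is one genuine gap: you repeatedly invoke an integer \emph{min-cost max flow}, and maximizing the flow value is not the right objective here. Because the costs are the negated utilities, an optimal assignment need not have maximum cardinality, and forcing maximum flow can strictly hurt. Concretely (type-uniform case): take two types with $|N_1|=1$, $|N_2|=2$, two blocks $M_1=\{j_1\}$, $M_2=\{j_2,j_3\}$, all capacities $\lambda_{pq}=1$, and utilities $U_{1,j_1}=10$ with every other $U_{pj}=0$. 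The welfare-optimal feasible assignment gives $j_1$ to the type-$1$ agent (welfare $10$, two agents matched), but the maximum flow value is $3$, and every flow of value $3$ must route a type-$2$ agent to $j_1$ (to match both type-$2$ agents you need one in each block, and block $1$ contains only $j_1$), leaving welfare $0$. So the min-cost max flow does not coincide with $\OPT_C$, and your closing argument --- ``any feasible assignment encodes as a feasible flow of the same cost, so the \MCF optimum coincides with $\OPT$'' --- breaks down precisely because the encoding of the optimal assignment need not be a \emph{maximum} flow.

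The fix is exactly what the paper does: it defines \MCF with a prescribed flow value $F$, solves the $n$ instances $F\in[n]$, and takes the flow $f^*$ minimizing cost over all of them (equivalently, one can stop successive shortest-path augmentations once the cheapest augmenting path has nonnegative cost, or add a zero-cost bypass arc and solve a min-cost circulation). With that modification, the rest of your argument --- integrality of min-cost flow on integer-capacitated networks, the cost-preserving correspondence in both directions, and the translation back to an assignment respecting constraints~(\ref{eq:capcityconstraint})--(\ref{eq:uniquetenant}) --- matches the paper's Lemmas~\ref{lem:Xstar-assignment}--\ref{lem:forall-X-exists-f} and goes through.
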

We prove the result for a type-uniform utility model; the result for block-uniform utilities can be similarly derived. 
We propose a polynomial time algorithm based on the {\em Minimum-Cost Flow} problem which is known to be solvable in polynomial time. Recall that a flow network is a directed graph $G=(V,E)$ with a source node $s \in V$ and a sink node $t \in V$, where each arc $(a,b)\in E$ has a cost $\gamma(a,b) \in \R$ and a capacity $\psi(a,b) > 0$ representing the maximum amount that can flow on the arc; for convenience, we set $\gamma(a,b) = 0$ and $\psi(a,b) = 0$ for all $a,b \in V$ such that $(a,b) \not \in E$. Let us denote by $\Gamma$ and $\Psi$ the matrices of costs and capacities respectively defined by $\Gamma = (\gamma (a, b))_{|V|\times|V|}$ and $\Psi = (\psi (a, b))_{|V|\times|V|}$. 
A flow in the network is a function $f:V \times V \rightarrow \R_+$ satisfying: 
\begin{enumerate}[(i)]
	\item $f(a,b) \le \psi(a,b)$ for all $a,b \in V$ (capacity constraints),
	\item $f(a,b) = -f(b,a)$ for all $a,b \in V$ (skew symmetry), and 
	\item $\sum_{b \in V} f(a,b) = 0$ for all $a \in V \backslash \{s,t\}$ (flow conservation).
\end{enumerate}
The value $v(f)$ of a flow $f$ is defined by $v(f)=\sum_{a \in V} f(s,a) = \sum_{a \in V} f(a,t)$ and its cost is given by 
$\gamma(f)=\sum_{(a,b)\in E} f(a,b) \gamma(a,b)$.
The optimization problem can be formulated as follows. Given a value $F$, find a flow $f$ that minimizes the cost $\gamma(f)$ subject to $v(f) = F$. This optimization problem that takes as input the graph $G =(V,E)$, the matrices $\Gamma$ and $\Psi$, and the value $F$, will be denoted by \MCF hereafter; given an instance $\tup{G;\Gamma;\Psi;F}$ of the \MCF problem, we let $\gamma(G,\Gamma,\Psi,F)$ be the cost of the optimal flow for that instance. 

Given an instance $\cal I$ of \ASTC, we construct a flow network $G_{\cal I}(V,E)$ and matrices $\Gamma_{\cal I}$ and $\Psi_{\cal I}$ as follows  (see Figure \ref{exampleReductionPoly1} for an illustration). 
The node set $V$ is partitioned into layers: $V=\{s\}\cup A\cup B\cup C \cup \{t\}$. $A$ is the {\em agent type layer}: there is one node $a_p \in A$ for all agent types $N_p, p \in [k]$. $B$ is the {\em type-block layer}: it has a node $b_{pq} \in B$ for every type-block pair $(p,q) \in [k] \times [l]$. Finally, $C$ is the {\em item layer}: there is one node $c_j \in C$ for all items $j \in M$. The arcs in $E$ are as follows: for every $a_p$ in $A$, there is an arc from $s$ to $a_p$ whose capacity $\psi(s,a_p)$ is $|N_p|$. Fixing $p \in [k]$, there is an arc from $a_p \in A$ to every $b_{pq} \in B$, where the capacity of $(a_p,b_{pq})$ is the quota for type $N_p$ in block $M_q$ (i.e.  $\psi(a_p,b_{pq})=\lambda_{pq}$). Finally, given $q \in [l]$, there is an arc from $b_{pq}$ to $c_j$ iff $j \in M_q$; in that case, we have $\psi(b_{pq},c_j) = 1$. 
The costs associated with arcs from $B$ to $C$ (i.e. arcs of the form $(b_{pq},c_j)$ where $j \in M_q$) are $-U_{pj}$; recall that $U_{pj}$ is the utility that every agent of type $N_p$ assigns to item $j$. All other arc costs are set to $0$. We begin by proving a few technical lemmas on the above network.

Given a positive integer $F$, there exists an optimal flow that is integer-valued since $\tup{G_{\cal I};\Gamma_{\cal I};\Psi_{\cal I};F}$ is integer-valued as well. 
Let $f^*$ be an integer-valued optimal flow, taken over all possible values of $F$; that is:
\begin{align}
f^* \in  \underset{F \in [n]}{\argmin} \, \gamma(G_{\cal I},\Gamma_{\cal I},\Psi_{\cal I},F) \label{prop:fstar-poly}
\end{align}
Finding the flow $f^*$ involves solving $n$ instances of \MCF by definition; thus, one can find $f^*$ in polynomial time. 
Given $f^*$ as defined in (\ref{prop:fstar-poly}), let $X^* = (x_{ij}^*)_{n\times m}$ be defined as follows: for every item $j \in M_q$, if $f^*(b_{pq},c_j) = 1$ for some $p \in [k]$, then we choose an arbitrary unassigned agent $i \in N_p$ and set $x_{ij}^* = 1$. 
\begin{lemma}\label{lem:Xstar-assignment}
$X^*$ is a feasible solution of the \ASTC instance $\cal I$.
\end{lemma}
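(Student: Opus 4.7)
\begin{proofsketch}
The plan is to derive each of the feasibility conditions (constraints~\eqref{eq:capcityconstraint}--\eqref{eq:integerallocation}) directly from the flow conservation and the capacity constraints of the network $G_{\cal I}$. First, note that $f^*$ can be assumed integer-valued because all capacities in $\Psi_{\cal I}$ are integers, and so $f^*(b_{pq}, c_j) \in \{0,1\}$ on each arc from $B$ to $C$; this immediately makes the matrix $X^*$ well-defined as a $(0,1)$-matrix.

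Next, I would verify the three structural requirements in turn. For constraint~\eqref{eq:uniquetenant}, fix an item $j \in M_q$. Flow conservation at the node $c_j$ combined with the capacity of the arc $(c_j, t)$ being $1$ forces $\sum_{p \in [k]} f^*(b_{pq}, c_j) \le 1$, so at most one pair $(p,q)$ contributes the ``$1$'' that triggers an assignment of $j$ in our construction. For constraint~\eqref{eq:capcityconstraint}, fix $(p,q) \in [k]\times[l]$. By flow conservation at $b_{pq}$ and the capacity $\psi(a_p, b_{pq}) = \lambda_{pq}$, we have
\[
\sum_{j \in M_q} f^*(b_{pq}, c_j) \;=\; f^*(a_p, b_{pq}) \;\le\; \lambda_{pq},
\]
and the left-hand side is exactly $\sum_{i \in N_p}\sum_{j \in M_q} x^*_{ij}$. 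For constraint~\eqref{eq:uniqueapt}, I need to check that the procedure that assigns ``an arbitrary unassigned agent $i \in N_p$'' never runs out of agents: by flow conservation at $a_p$ and the capacity $\psi(s, a_p) = |N_p|$, the total number of items assigned to type $N_p$ is $\sum_{q \in [l]} f^*(a_p, b_{pq}) = f^*(s, a_p) \le |N_p|$, so the set of unassigned agents of type $N_p$ is never empty when the procedure calls for one. By construction, each agent receives at most one item, as the procedure explicitly selects an agent that has not yet been chosen.

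I do not anticipate a serious obstacle: once integrality of $f^*$ is invoked, all three constraints are one-line consequences of flow conservation paired with the matching capacity on the corresponding arc. The only subtlety is the slight gap between ``at most $f^*(a_p, b_{pq})$ items of block $M_q$ demand an agent of type $p$'' and ``we can actually realize this with distinct agents of type $p$''; this is resolved globally by the bound $\sum_q f^*(a_p, b_{pq}) \le |N_p|$, so a greedy type-wise assignment of distinct agents works without any matching argument at this stage.
\end{proofsketch}
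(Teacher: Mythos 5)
Your proposal is correct and follows essentially the same route as the paper: flow conservation at $c_j$ plus the unit capacity of $(c_j,t)$ gives constraint~\eqref{eq:uniquetenant}, and flow conservation at $b_{pq}$ plus the capacity $\psi(a_p,b_{pq})=\lambda_{pq}$ gives constraint~\eqref{eq:capcityconstraint}. In fact you are slightly more careful than the paper, which dismisses constraint~\eqref{eq:uniqueapt} with ``by construction''; your check that $\sum_{q}f^*(a_p,b_{pq})=f^*(s,a_p)\le|N_p|$ guarantees the greedy selection of distinct unassigned agents of type $N_p$ never fails, which is the one well-definedness point the paper leaves implicit.
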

\begin{proof}
	First, we assign at most one item to every agent by construction; next, let us show that each item $j\in M_q$ is assigned to at most one agent. 
	Since $f^*$ is a flow, we have $\sum_{p=1}^k f^*(b_{pq},c_j) = f^*(c_j,t)$ due to flow conservation; note that the capacity of the arc $(c_j,t)$ is $1$, thus at most one arc $(b_{pq},c_j)$ has $f^*(b_{pq},c_j) = 1$. Finally, since item $j$ is assigned to an agent in $N_p$ iff $f^*(b_{pq},c_j) = 1$, we conclude that item $j$ is assigned to at most one of the agents in $N$.
	
	Next, let us prove that assignment $X^*$ satisfies the type-block constraints; in other words, we need to show that:
	\begin{align}
	\sum_{i \in N_p}\sum_{j \in M_q} x_{ij}^* \le \lambda_{pq}, \, \forall p \in [k], \forall q \in [l]\label{eq:capacity-poly}
	\end{align}
	Since $f^*$ is a flow, we have $f^*(a_p,b_{pq}) = \sum_{j\in M_q} f^*(b_{pq},c_j)$  for every type-block pair $(p,q) \in [k] \times [l]$ due to flow conservation; moreover, we have $f^*(a_p,b_{pq}) \le  \psi(b_{pq},c_j)  = \lambda_{pq}$ by construction.
	As a consequence, we necessarily have $\sum_{j\in M_q} f^*(b_{pq},c_j) \le \lambda_{pq}$ for all $p \in [k]$. 
	Since an item $j \in M_q$ is matched with some agent $i\in N_p$ if and only if we have $f^*(b_{pq},c_j) = 1$, we conclude that \eqref{eq:capacity-poly} indeed holds.
\end{proof}
Now, let us establish a relation between the cost of $f^*$ and the utility of the feasible assignment $X^*$.
\begin{lemma}\label{lem:Xstar-fstar-same-util}
The cost of the flow $f^*$ satisfies $\gamma(f^*) = -u(X^*)$.
\end{lemma}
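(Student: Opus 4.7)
The plan is to unpack both sides of the claimed equality and show that they match term-by-term, relying on two key observations: (a) only arcs from $B$ to $C$ carry nonzero cost in $G_{\cal I}$, and (b) by type-uniformity, whenever an item $j \in M_q$ is routed through $b_{pq}$ in the flow, the corresponding agent assignment in $X^*$ contributes exactly $U_{pj}$ to the utility.

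First I would start from the definition
\[
\gamma(f^*) \;=\; \sum_{(a,b) \in E} f^*(a,b)\, \gamma(a,b).
\]
By the construction of $\Gamma_{\cal I}$, the cost $\gamma(a,b)$ vanishes for all arcs except those of the form $(b_{pq}, c_j)$ with $j \in M_q$, where $\gamma(b_{pq}, c_j) = -U_{pj}$. Hence
\[
\gamma(f^*) \;=\; -\sum_{p \in [k]} \sum_{q \in [l]} \sum_{j \in M_q} f^*(b_{pq}, c_j)\, U_{pj}.
\]

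Next I would rewrite $u(X^*)$ in the same form. By the construction of $X^*$, for each $j \in M_q$ there is exactly one agent $i \in N_p$ with $x_{ij}^* = 1$ precisely when $f^*(b_{pq}, c_j) = 1$ (and otherwise no agent is assigned $j$ through the type $N_p$). Under the type-uniform utility assumption, $u(i,j) = U_{pj}$ for every $i \in N_p$, so the contribution of item $j$ to $u(X^*)$ from agents of type $N_p$ is exactly $f^*(b_{pq}, c_j) \cdot U_{pj}$. Summing over $p$, $q$, and $j \in M_q$ (and using that every item lies in exactly one block) yields
\[
u(X^*) \;=\; \sum_{p \in [k]} \sum_{q \in [l]} \sum_{j \in M_q} f^*(b_{pq}, c_j)\, U_{pj}.
\]
Comparing the two displays gives $\gamma(f^*) = -u(X^*)$, which is exactly the claim.

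There is no real obstacle here; the only point that warrants care is making sure the bookkeeping between ``how the flow saturates $(b_{pq}, c_j)$'' and ``which agent of type $N_p$ receives item $j$ in $X^*$'' is done consistently, and that type-uniformity is invoked so that the specific choice of agent $i \in N_p$ in the construction of $X^*$ does not affect the numerical contribution $U_{pj}$. Integrality of $f^*$ (already noted before the lemma) is implicitly used so that $f^*(b_{pq}, c_j) \in \{0,1\}$ and the one-to-one matching between saturated arcs and assigned items is well-defined.
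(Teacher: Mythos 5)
Your proposal is correct and follows essentially the same argument as the paper: both isolate the cost contribution of the $B$-to-$C$ arcs, use integrality of $f^*$ to identify $f^*(b_{pq},c_j)=1$ with the assignment of item $j$ to some agent of type $N_p$ in $X^*$, and invoke type-uniformity so that the particular choice of agent contributes exactly $U_{pj}$. No gaps.
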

\begin{proof}
By construction, the cost of $f^*$ can only be induced by arcs from nodes in $B$ to nodes in $C$, where the cost of all arcs of the form $(b_{pq},c_j)$, with $j \in M_q$, is equal to $-U_{pj}$ (the negative of the uniform utility derived from item $j$ by members of $N_p$). In other words, the cost of $f^*$ can be written as follows:
\begin{align}
\gamma(f^*)=-\sum_{p =1}^k\sum_{q = 1}^l \sum_{j \in M_q} f^*(b_{pq},c_j)U_{pj} \notag
\end{align}
As previously argued, we have that $f^*(b_{pq},c_j) \in \{0,1\}$ for all arcs $(b_{pq},c_j)$; moreover, $f^*(b_{pq},c_j) = 1$ iff item $j$ is assigned to some agent in $N_p$. Therefore, we obtain: 
\begin{align}
\gamma(f^*)=
-\sum_{p =1}^k\sum_{i \in N_p}\sum_{j \in M} x_{ij}^* U_{pj} = -\sum_{i \in N}\sum_{j \in M}x_{ij}^* u(i,j)= - u(X^*)\notag
\end{align}
where the second equality holds since all agents in $N_p$ have the same utility by assumption.
\end{proof}
Finally, we show that for every feasible solution to the $\ASTC$ instance $\cal I$, there exists a flow with a matching cost.

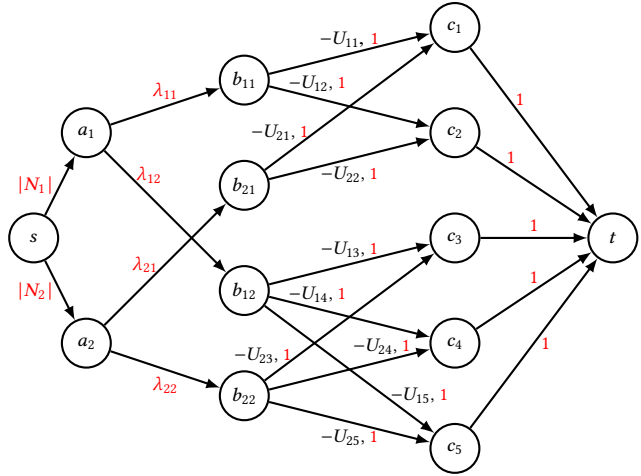
\begin{figure}[t]
\centering
	\footnotesize
	\begin{tikzpicture}[scale=0.7]
	
	\node[draw,circle,minimum height=0.65cm, thick] (S) at (-1,0){$s$};
	
	\node[draw,circle,minimum height=0.65cm, thick] (B1) at (0,2){$a_1$};
	\node[draw,circle,minimum height=0.65cm, thick] (B2) at (0,-2){$a_2$};

	\draw[thick,-latex] (S) -- (B1) node[midway,left]{\textcolor{red}{$|N_1|$}};
	\draw[thick,-latex] (S) -- (B2) node[midway,left]{\textcolor{red}{$|N_2|$}};

	\node[draw,circle,minimum height=0.65cm, thick] (A11) at (3,3){$b_{11}$};
	\node[draw,circle,minimum height=0.65cm, thick] (A21) at (3,1){$b_{21}$};
	\node[draw,circle,minimum height=0.65cm, thick] (A12) at (3,-1){$b_{12}$};
	\node[draw,circle,minimum height=0.65cm, thick] (A22) at (3,-3){$b_{22}$};
	
	\draw[thick,-latex] (B1) -- (A11) node[midway,above]{\textcolor{red}{$\lambda_{11}$}};
	\draw[thick,-latex] (B2) -- (A21) node[midway,left]{\textcolor{red}{$\lambda_{21}$}};
	\draw[thick,-latex] (B1) -- (A12) node[pos=0.2,right]{\textcolor{red}{$\lambda_{12}$}};
	\draw[thick,-latex] (B2) -- (A22) node[midway,below]{\textcolor{red}{$\lambda_{22}$}};

	\node[draw,circle,minimum height=0.65cm, thick] (A1) at (7,4){$c_{1}$};
	\node[draw,circle,minimum height=0.65cm, thick] (A2) at (7,2){$c_{2}$};
	\node[draw,circle,minimum height=0.65cm, thick] (A3) at (7,0){$c_{3}$};
	\node[draw,circle,minimum height=0.65cm, thick] (A4) at (7,-2){$c_{4}$};
	\node[draw,circle,minimum height=0.65cm, thick] (A5) at (7,-4){$c_{5}$};
	
	\draw[thick,-latex] (A11) -- (A1) node[midway,above]{$-U_{11},$ \textcolor{red}{1}};
	\draw[thick,-latex] (A11) -- (A2) node[pos=0.3,above]{$-U_{12}$, \textcolor{red}{1}};
	\draw[thick,-latex] (A21) -- (A1) node[pos=0.3,left]{$-U_{21}$, \textcolor{red}{1}};
	\draw[thick,-latex] (A21) -- (A2) node[midway,below]{$-U_{22}$, \textcolor{red}{1}};
	
	\draw[thick,-latex] (A12) -- (A3) node[midway,above]{$-U_{13}$, \textcolor{red}{1}};
	\draw[thick,-latex] (A12) -- (A4) node[pos=0.3,above]{$-U_{14}$, \textcolor{red}{1}};
	\draw[thick,-latex] (A12) -- (A5) node[pos=0.7,right]{$-U_{15}$, \textcolor{red}{1}};
	\draw[thick,-latex] (A22) -- (A3) node[pos=0.2,left]{$-U_{23}$, \textcolor{red}{1}};
	\draw[thick,-latex] (A22) -- (A4) node[pos=0.7,above]{$-U_{24}$, \textcolor{red}{1}};
	\draw[thick,-latex] (A22) -- (A5) node[midway,below]{$-U_{25}$, \textcolor{red}{1}};
	
	\node[draw,circle,minimum height=0.65cm, thick] (T) at (10,0){$t$};
	
	\draw[thick,-latex] (A1) -- (T) node[pos=0.3,right]{\textcolor{red}{$1$}};
	\draw[thick,-latex] (A2) -- (T) node[pos=0.3,above]{\textcolor{red}{$1$}};
	\draw[thick,-latex] (A3) -- (T) node[midway,above]{\textcolor{red}{$1$}};
	\draw[thick,-latex] (A4) -- (T) node[midway,above]{\textcolor{red}{$1$}};
	\draw[thick,-latex] (A5) -- (T) node[midway,right]{\textcolor{red}{$1$}};
	
	\end{tikzpicture}
	
	\caption{\textnormal{Network flow constructed for the proof of Theorem~\ref{theoPoly1}; in this case, we have $2$ types and $2$ blocks: $M_1 = \{1,2\}$ and $M_2=\{3,4,5\}$. Arc capacities are given in red. All arcs have a cost of $0$, except those between $b_{pq} \in B$ and $c_j \in C$ whose cost equals $-U_{pj}$.}\label{exampleReductionPoly1}}
\end{figure}

\begin{lemma}\label{lem:forall-X-exists-f}
Let $X$ be a feasible assignment for the $\ASTC$ instance $\cal I$; there exists some feasible flow $f$ such that $\gamma(f) = -u(X)$. Moreover, we have $v(f) = |\{i \in N: \sum_{j \in M}x_{ij} = 1\}|$.
\end{lemma}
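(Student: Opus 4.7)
The plan is to construct the flow $f$ explicitly by pushing one unit of flow along the natural $s$-$t$ path associated with each matched pair in $X$. Concretely, for every pair $(i,j)$ with $x_{ij}=1$, where $i\in N_p$ and $j\in M_q$, I add one unit of flow to each of the arcs $(s,a_p)$, $(a_p,b_{pq})$, $(b_{pq},c_j)$, and $(c_j,t)$; all other arc values are $0$ (with the convention $f(b,a)=-f(a,b)$ to enforce skew symmetry). This is well-defined because, by construction, such a path exists in $G_{\cal I}$ for every matched pair.

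Next I would verify the three flow conditions. For capacities: $f(s,a_p)$ equals the number of matched agents of type $N_p$, which is at most $|N_p| = \psi(s,a_p)$; $f(a_p,b_{pq})=\sum_{i\in N_p}\sum_{j\in M_q}x_{ij}\le \lambda_{pq}=\psi(a_p,b_{pq})$ by the type-block constraints~\eqref{eq:capcityconstraint}; $f(b_{pq},c_j)\in\{0,1\}$ because each item $j$ is assigned to at most one agent; and $f(c_j,t)\in\{0,1\}$ for the same reason. Flow conservation at $a_p$, $b_{pq}$, and $c_j$ all follow directly from the fact that each matched pair contributes exactly one unit in and one unit out at every intermediate layer.

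For the cost, I would write
\[
\gamma(f)=\sum_{p=1}^k\sum_{q=1}^l\sum_{j\in M_q}f(b_{pq},c_j)\,(-U_{pj}),
\]
since arcs outside the $B\to C$ layer have zero cost. By construction, $f(b_{pq},c_j)=1$ precisely when some $i\in N_p$ satisfies $x_{ij}=1$, and in that case type-uniformity gives $U_{pj}=u(i,j)$. Grouping by the matched agent thus yields $\gamma(f)=-\sum_{i\in N}\sum_{j\in M}x_{ij}\,u(i,j)=-u(X)$. Finally, $v(f)=\sum_{p}f(s,a_p)$ counts exactly the number of agents matched to some item, which is $|\{i\in N:\sum_{j\in M}x_{ij}=1\}|$, establishing the second claim.

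The argument is essentially bookkeeping, and I do not expect any real obstacle. The only subtlety to be careful about is that the flow's cost is expressed in terms of the type-level utilities $U_{pj}$ while $u(X)$ is defined over individual $(i,j)$ pairs; the type-uniformity assumption is exactly what makes these coincide, and it is important to invoke it explicitly — the lemma would fail without it. Everything else (capacities, conservation, value) is a direct consequence of $X$ being a valid matching that respects the type-block caps.
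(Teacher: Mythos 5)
Your construction is exactly the flow the paper uses --- decomposing it into unit $s$--$t$ paths per matched pair gives the same arc values as the paper's explicit formulas $f(s,a_p)=\sum_{i\in N_p}\sum_{j\in M}x_{ij}$, $f(a_p,b_{pq})=\sum_{i\in N_p}\sum_{j\in M_q}x_{ij}$, etc. --- and your verification of capacities, conservation, value, and the cost identity via type-uniformity matches the paper's argument step for step. The proof is correct and essentially identical to the one in the paper.
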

\begin{proof}
	Given a feasible assignment $X=(x_{ij})_{n\times m}$, we define $f: V \times V \rightarrow \R_+$ as follows:
	$$
	\begin{cases}
	f(s,a_p) = \sum_{i \in N_p}\sum_{j \in M} x_{ij} & \forall a_p \in A\\
	f(a_p,b_{pq})= \sum_{i \in N_p}\sum_{j \in M_q} x_{ij} & \forall (a_p,b_{pq})\in E\\
	f(b_{pq}, c_j) = \sum_{i \in N_p} x_{ij} & \forall (b_{pq},c_j) \in E\\ 
	f(c_j, t) = \sum_{i \in N} x_{ij} & \forall c_j \in C\\
	f(a,b) = - f(b,a) &  \forall (a,b) \in E\\
	f(a,b)  =  0 & \forall (a,b)\notin E
	\end{cases}
	$$
	The function $f$ is indeed a flow:  
	$f$ trivially satisfies the skew symmetry condition by construction; next, we show that $f$ satisfies flow conservation.  
	For all $a_p \in A$, the incoming flow to node $a_p$ from node $s$ is $f(s,a_p) =\sum_{i \in N_p} \sum_{j \in M} x_{ij}$, and the outgoing flow to every $b_{pq}$ is 
	$\sum_{q=1}^l f(a_p,b_{pq}) = \sum_{i \in N_p} \sum_{j \in M}x_{ij}$ 
	since $M$ is partitioned into $M_1,\dots,M_l$; hence flow is conserved. 
For a node $b_{pq} \in B$, the incoming flow equals 
	$f(a_p,b_{pq})=\sum_{i \in N_p}\sum_{j \in M_q} x_{ij}$ and an amount of $f(b_{pq},c_j)=\sum_{i \in N_p}x_{ij}$ flows to every node $c_j$ such that $j \in M_q$, thus flow is conserved. For a node $c_j \in C$ such that $j \in M_q$, its incoming flow equals $f(b_{pq},c_j)=\sum_{i \in N_p}x_{ij}$ from every $b_{pq}$, for a total flow of $\sum_{p =1}^k \sum_{i \in N_p}x_{ij}$, which equals its outgoing flow to $t$. To conclude, $f$ satisfies flow conservation. 
	
	Now let us prove that $f$ satisfies the capacity constraints (i.e. $f(a,b) \le \psi(a,b)$ for all arcs $(a,b)\in E$). 
	For all $(s,a_{p})\in E$, we have $f(s,a_{p}) = \sum_{i \in N_p}\sum_{j \in M} x_{ij} \le |N_p| = \psi(s,a_{p})$ since every agent $i \in N_p$ is matched with at most one item. For all $(a_{p},b_{pq})\in E$, we have $f(a_{p},b_{pq})= \sum_{i \in N_p}\sum_{j \in M_l} x_{ij} \le \lambda_{pq} = \psi(a_{p},b_{pq})$ since $X$ satisfies the type-block constraints. For all arcs $(b_{pq},c_j)\in E$, we have $f(b_{pq}, c_j) = \sum_{i \in N_p} x_{ij} \le 1 = \psi(b_{pq}, c_j)$ since item $j$ is matched with at most one of the agents in $N_p$. 
	For all $(c_j,t) \in E$, we have $f(c_j, t) = \sum_{i \in N} x_{ij} \le 1 = \psi(c_j, t)$ since item $j$ is matched with at most one of the agents in $N$.
	Hence, $f$ satisfies the capacity constraints and is a valid flow. 
	Note that we have:
	\begin{align*}
	v(f) = \! \sum_{a \in V} f(s,a) = \! \sum_{p=1}^k f(s,a_{p}) 
	= \!  \sum_{p=1}^k \sum_{i \in N_p}\sum_{j \in M} x_{ij}= \! \sum_{i \in N} \sum_{j \in M} x_{ij}
	\end{align*} 
	Then, since $X$ is a feasible assignment of the $\ASTC$ instance $\cal I$, we conclude that we have $v(f) = |\{i \in N: \sum_{j\in M}x_{ij} = 1\}|$. 
	We just need to prove that we have $\gamma(f) = -u(X)$, and we are done. 
	By definition of the flow network, only arcs of the form $(b_{pq},c_j)$ contribute to the  cost $\gamma(f)$ and we have $\gamma(b_{pq},c_j) = -U_{pj}$; 
 therefore, $\gamma(f) = - \sum_{(b_{pq},c_j) \in E} f(b_{pq},c_j) U_{pj}$. Since $f(b_{pq},c_j)\!= \! \sum_{i \in N_p} x_{ij}$ (by definition of $f$) and $u(i,j)\! =\! U_{pj}$ for all agents $i\! \in\! N_p$ (by hypothesis), we finally obtain $\gamma(f) \! = \! - \sum_{j \in M} \sum_{p=1}^k  \sum_{i \in N_p} x_{ij} u(i,j)\! =\! - \sum_{j \in M} \sum_{i \in N} x_{ij} u(i,j) = -u(X)$.
\end{proof}
We are now ready to prove Theorem~\ref{theoPoly1}.

\begin{proof}[Proof of Theorem~\ref{theoPoly1}] 
We begin by observing the flow $f^*$ as defined in \eqref{prop:fstar-poly}, and the assignment $X^*$ derived from it. First, according to Lemma~\ref{lem:Xstar-assignment}, $X^*$ is a feasible assignment of the $\ASTC$ instance $\cal I$. Moreover, we have $u(X^*) = -\gamma(f^*)$ according to Lemma~\ref{lem:Xstar-fstar-same-util}. Finally, for any feasible assignment $X$  of the $\ASTC$ instance $\cal I$, there exists a flow $f$ such that $\gamma(f) = -u(X)$; furthermore, since $v(f) = |\{i \in N: \sum_{j \in M} x_{ij} = 1\}|\in [n]$, flow $f$ is a feasible solution of the $\MCF$ instance $\tup{G_{\cal I};\Gamma_{\cal I};\Psi_{\cal I};F}$ for some $F\in [n]$. Therefore, we have:
\begin{align*}
u(X) = -\gamma(f) \le - \gamma(G_{\cal I},\Gamma_{\cal I},\Psi_{\cal I},v(f)) \le -\gamma(f^*) = u(X^*)
\end{align*}
Thus, $X^*$ is an optimal solution of the $\ASTC$ instance $\cal I$; since $X^*$ can be computed in poly-time (Proposition~\ref{prop:fstar-poly}), we are done.
\end{proof}

\section{The Price of Diversity}\label{sec:diversity}
We now turn to the allocative efficiency of the constrained assignment. 
As before, an instance of the \ASTC problem is given by a set of $n$ agents $N$ partitioned into types $N_1,\dots, N_k$, a set of $m$ items $M$ partitioned into $M_1,\dots, M_l$, a list of capacity values $(\lambda_{pq})_{k\times l}$, and agent utilities for items given by $u = (u(i,j))_{n\times m}$. We denote the set of all assignments $X$ of items to agents satisfying only the matching constraints~(\ref{eq:uniqueapt}-\ref{eq:integerallocation}) of Section~\ref{sec:prelim} by $\cal X$, and that of all assignments additionally satisfying the type-block constraints~\eqref{eq:capcityconstraint} by $\cal X_C$; the corresponding optimal social welfares for any given utility matrix $(u(i,j))_{n \times m}$ are: 
\begin{align*}
\OPT(u) \triangleq \max_{X \in \cal X} u(X); \,
\OPT_C(u) \triangleq \max_{X \in \cal X_C} u(X).
\end{align*} 
Clearly, $\OPT_C(u) \le \OPT(u)$ since $\cal X_C \subseteq \cal X$; we define the following natural measure of this welfare loss that lies in $[1,\infty]$:

\begin{definition}\label{def:pod}
For any instance of the \ASTC problem, we define the \emph{Price of Diversity} as follows, along the lines of \citeAY{ahmed2017diverse} and \citeAY{bredereck2018multiwinner}: \defend
$$\PoD(u) \triangleq \frac{\OPT(u)}{\OPT_C(u)}.$$ 
\end{definition}
The main result of this section is to establish an upper bound on $\PoD(u)$ that is independent of the utility model. 
Denote the ratio of a type-block capacity to the size of the corresponding block by:
 $$\alpha_{pq} \triangleq \frac{\lambda_{pq}}{|M_q|}.$$ 
\begin{theorem}\label{thm_generalpod} For any instance of \ASTC, we have:
$$\PoD(u) \le \frac{1}{\min_{(p,q) \in [k] \times [l]} \alpha_{pq}}$$
and the above upper bound is tight.
\end{theorem}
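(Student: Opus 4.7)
The plan is to upper-bound $\PoD(u)$ by exhibiting, from any unconstrained optimal assignment $X^*$, a feasible constrained assignment $Y \in \cal X_C$ with $u(Y) \ge \alpha_{\min} \cdot u(X^*)$, where $\alpha_{\min} \triangleq \min_{(p,q) \in [k] \times [l]} \alpha_{pq}$. Since any subset of the matched pairs in $X^*$ is again a matching, the only real constraints to worry about are the type-block capacities; moreover, if $n_{pq}^*$ denotes the number of agents of type $N_p$ assigned to block $M_q$ under $X^*$, then trivially $n_{pq}^* \le |M_q|$, so the ``fraction we must discard'' in each type-block cell is at most $1 - \lambda_{pq}/|M_q| = 1 - \alpha_{pq} \le 1 - \alpha_{\min}$.

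To turn this intuition into an inequality, I would use an averaging/probabilistic argument. For each pair $(p,q)$, let $S_{pq}$ be the set of agent-item pairs of $X^*$ with an agent in $N_p$ and an item in $M_q$. If $n_{pq}^* \le \lambda_{pq}$, keep all of $S_{pq}$; otherwise, pick a uniformly random subset of $S_{pq}$ of size $\lambda_{pq}$. These choices are made independently across $(p,q)$. Each pair in $S_{pq}$ is retained with probability $\min\{1,\lambda_{pq}/n_{pq}^*\} \ge \lambda_{pq}/|M_q| = \alpha_{pq} \ge \alpha_{\min}$. By linearity of expectation, the expected total utility of the retained pairs is at least $\alpha_{\min} \cdot u(X^*) = \alpha_{\min} \cdot \OPT(u)$, so some realization $Y$ of the procedure attains this bound. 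By construction, $Y$ is a subset of the matching $X^*$ (hence still a matching) and respects every type-block capacity, so $Y \in \cal X_C$; therefore $\OPT_C(u) \ge u(Y) \ge \alpha_{\min} \cdot \OPT(u)$, giving the claimed bound on $\PoD(u)$.

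For tightness, I would exhibit a one-parameter family of instances that saturates the bound: take $k=l=1$, $m = |M_1|$ items in a single block, $n = m$ agents all of the same type with $u(i,j)=1$ for every $(i,j)$, and capacity $\lambda_{11} = 1$, so that $\alpha_{11} = 1/m = \alpha_{\min}$. Here $\OPT(u) = m$ (assign every agent) while $\OPT_C(u) = 1$ (only one agent of the single type may be placed in the block), so $\PoD(u) = m = 1/\alpha_{\min}$.

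The main obstacle is really just the rounding step in the constructive argument: one must ensure that when $n_{pq}^* > \lambda_{pq}$ the retained fraction is bounded below by $\alpha_{pq}$ uniformly over $(p,q)$, and that the independent random choices across different cells do not interact (which is immediate because distinct cells involve disjoint agents and items). The rest of the proof is a direct linearity-of-expectation computation plus the elementary example for tightness.
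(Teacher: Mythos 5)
Your proposal is correct and follows essentially the same route as the paper: both prune the unconstrained optimum $X^*$ cell by cell down to the capacities $\lambda_{pq}$ and use $n_{pq}^* \le |M_q|$ to show each cell retains at least an $\alpha_{pq} \ge \alpha_{\min}$ fraction of its utility, the only (cosmetic) difference being that you select the retained pairs by a uniformly random subset plus linearity of expectation where the paper deterministically keeps the top-$\lambda_{pq}$ pairs by utility. Your single-type, single-block tightness example is also valid and a touch simpler than the paper's.
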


In general, the bound in Theorem~\ref{thm_generalpod} can grow linearly in $m$; in the following family of problem instances where the capacities $\lambda_{pq}$ are fixed constants, the $\PoD$ can be indefinitely large. %(e.g. if the capacities $\lambda_{pq}$ are fixed constants). 
\begin{example}\label{ex:podtight}
Consider any instance of the \ASTC problem with $l=k$ and $|N_p|=|M_p|=\mu$ $\forall p \in [k]$ so that $n=m=k\mu$; the utilities are:
\begin{align*}
u(i,j) = \begin{cases} 1 & \mbox{if } i \in N_{p} \mbox{ and } j \in M_{p} \quad \forall p \in [k],\\ 0 &\mbox{otherwise.}\end{cases}
\end{align*}
Evidently, any complete matching of items in $M_p$ to agents in $N_p$ $\forall p \in [k]$ is an optimal solution for the unconstrained version of the problem, hence $\OPT(u)=k\mu$. But if the capacities are $\lambda_{pq}=1$ $\forall (p,q) \in [k] \times [k]$ then only one agent per group can receive an item for which she has non-zero utility, hence $\OPT_C(u)=k$. Thus, $\PoD=k\mu/k=\mu$. \egend
\end{example}
However, type-block capacities are determined by a central planner in our model; a natural way of setting them is to fix the proportional capacities or quotas $\alpha_{pq}$ in advance, and then compute $\lambda_{pq} = \alpha_{pq} \times |M_q|$ when block sizes become available: by committing to a fixed minimum type-block quota $\alpha^*$ (i.e. $\alpha_{pq} \ge \alpha^*$ for all $(p,q) \in [k]\times [l]$), the planner can ensure a $\PoD(u)$ of at most $1/\alpha^*$, regardless of the problem size and utility function.
Higher values of $\alpha^*$ reduce the upper bound on $\PoD(u)$ but also increase the capacity of a block for every ethnicity, potentially affecting the diversity objective adversely: it thus functions as a tunable tradeoff parameter between ethnic integration and worst-case welfare loss.
In fact, in the Singapore allocation problem, the Ethnic Integration Policy fixes a universal percentage cap for each of the three ethnicities in all blocks; these percentages are set slightly higher than the actual respective population proportions: the current block quotas $\alpha_{pq}$ are $0.87$ for Chinese, $0.25$ for Malays and $0.15$ for Indian/Others \cite{deng2013publichousing}. Hence, for the  Singapore housing system, we have $\min_{(p,q) \in [k] \times [l]} \alpha_{pq}=0.15$ which is achieved for Indian/Others and any block, so that from Theorem~\ref{thm_generalpod},  
\[ \PoD(u) \leq \frac{1}{0.15} \approx 6.67. \]
This bound makes no assumptions on agent utilities; in other words, it holds under {\em any utility model}.\footnote{In practice, the \emph{effective} value of each fractional capacity $\lambda_{pq}/|M_q|$ might be smaller than the corresponding pre-specified fraction $\alpha_{pq}$. Since each $\lambda_{pq}$ must be an integer, we need to set $\lambda_{pq} = \lfloor \alpha_{pq} \times |M_q| \rfloor$ $\forall (p,q) \in [k] \times [l]$ to respect all capacity constraints. Hence, for a given instance of \ASTC, the effective upper bound on $\PoD(u)$ is given by $1/\min_{p,q}\frac{\lfloor \alpha_{pq} \times |M_q| \rfloor}{|M_q|}$, which depends on the $|M_q|$-values and may be higher than $1/\min_{p,q}\alpha_{pq}$. For example, if we have a uniform block size of $10$, the actual numerical capacities for Chinese, Malay, and Indian/Others based on EIP quotas become $8$, $2$, and $1$ respectively, so that the effective $\PoD$-bound is $10$. However, the effective bound is still independent of utility values as well as the population size of agents of any type; moreover, larger block sizes reduce the discrepancy between the effective bound and the theoretical bound $1/\min_{p,q}\alpha_{pq}$ provided by Theorem~\ref{thm_generalpod}. For example, for the values of $\alpha_{pq}$ and $|M_q|$ used in our experiments in Section~\ref{sec:experiments} (see Figure~\ref{figMap}), the minimum effective capacity of any block for any type is $14.42\%$, hence the effective upper bound on $\PoD(u)$ is $6.93$. Similar considerations apply to Theorem~\ref{thm_parampod}.}

The proof relies on the following lemma. Given an assignment $X\in \cal X$, let $u_p(X)$ denote the total utility of agents in $N_p$ under $X$:
\begin{align} 
u_p(X) \triangleq \sum_{i \in N_p} \sum_{j \in M} x_{ij} u(i,j) = \sum_{q \in [l]} \sum_{i \in N_p} \sum_{j \in M_q} x_{ij} u(i,j). \label{def:u_p} 
\end{align}
\begin{lemma}\label{lemma1}
For any instance of \ASTC and any optimal unconstrained assignment $X^* \in \cal X$, we have:
$$
\PoD(u) \leq \frac{u(X^*)}{\sum_{p \in [k]} u_p(X^*) \min_{q\in [l]} \alpha_{pq}}.
$$
\end{lemma}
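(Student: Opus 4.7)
\begin{proofsketch}
The plan is to construct, from any optimal unconstrained assignment $X^*$, a feasible constrained assignment $X'$ whose utility is at least $\sum_{p \in [k]} u_p(X^*) \min_{q \in [l]} \alpha_{pq}$. This immediately yields
$$\PoD(u) = \frac{u(X^*)}{\OPT_C(u)} \le \frac{u(X^*)}{u(X')} \le \frac{u(X^*)}{\sum_{p \in [k]} u_p(X^*) \min_{q \in [l]} \alpha_{pq}}.$$

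First I would decompose $X^*$ by type--block cell: for each $(p,q) \in [k]\times[l]$, let $E^*_{pq}$ be the set of matched pairs in $X^*$ with agent in $N_p$ and item in $M_q$, and write $u_{pq}(X^*) \triangleq \sum_{(i,j)\in E^*_{pq}} u(i,j)$, so that $u_p(X^*) = \sum_{q\in[l]} u_{pq}(X^*)$. To build $X'$, for each cell $(p,q)$ keep the $\min(|E^*_{pq}|, \lambda_{pq})$ pairs of $E^*_{pq}$ with the largest utilities and discard the rest. Since $X' \subseteq X^*$ and $X^*$ is a matching, $X'$ remains a matching (so constraints~(\ref{eq:uniqueapt}--\ref{eq:integerallocation}) are preserved); the selection explicitly enforces at most $\lambda_{pq}$ pairs per cell, so constraint~\eqref{eq:capcityconstraint} also holds. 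Thus $X' \in \mathcal{X}_C$.

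Next I would lower-bound $u(X')$ via a standard averaging inequality. In any cell with $|E^*_{pq}| \le \lambda_{pq}$, the full utility $u_{pq}(X^*)$ is retained. In any cell with $|E^*_{pq}| > \lambda_{pq}$, the sum of the top $\lambda_{pq}$ utilities is at least $(\lambda_{pq}/|E^*_{pq}|) \cdot u_{pq}(X^*)$, since the top-$\lambda_{pq}$ average is at least the overall average. Because $X^*$ is a matching, $|E^*_{pq}| \le |M_q|$, so the retained fraction satisfies $\lambda_{pq}/|E^*_{pq}| \ge \lambda_{pq}/|M_q| = \alpha_{pq}$; combined with the first case and the fact that $\alpha_{pq} \le 1$, every cell contributes at least $\alpha_{pq}\, u_{pq}(X^*)$ to $u(X')$. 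Summing,
$$u(X') \ge \sum_{p\in[k]}\sum_{q\in[l]} \alpha_{pq}\, u_{pq}(X^*) \ge \sum_{p\in[k]} \Bigl(\min_{q\in[l]} \alpha_{pq}\Bigr) \sum_{q\in[l]} u_{pq}(X^*) = \sum_{p\in[k]} u_p(X^*) \min_{q\in[l]} \alpha_{pq},$$
which gives the desired bound.

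The only subtle point is checking that the cell-wise pruning of $X^*$ does not interact badly across cells; this is immediate because distinct cells share neither agents (types partition $N$) nor items (blocks partition $M$), so the per-cell selections can be made independently and the union is automatically a matching satisfying all type--block caps. The averaging step is elementary but is where the factor $\alpha_{pq}$ (and hence $\min_q \alpha_{pq}$) enters, and it is the crux of the argument.
\end{proofsketch}
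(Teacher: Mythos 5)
Your proposal is correct and follows essentially the same route as the paper's proof: prune each type--block cell of $X^*$ down to its $\lambda_{pq}$ highest-utility pairs, apply the top-$\lambda_{pq}$-average-versus-overall-average inequality, and use $|E^*_{pq}| \le |M_q|$ to convert the retained fraction into $\alpha_{pq}$ before taking the minimum over blocks. No gaps; the feasibility check and the final chain of inequalities match the paper's argument.
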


\begin{proof}
Based on the optimal assignment $X^*$, we can construct an assignment $X \in \cal X_C$ satisfying the type-block constraints, by carefully `revoking' the smallest-utility items in $M_q$ from agents in $N_p$ for every $(p,q)$-pair that violates the corresponding type-block constraint. %By revoking from agents with the smallest utilities, we ensure that at least $\alpha_{pq}$ proportion of the utility remains under $X$ for $(p,q)$. 
In other words, let $n_{pq}$ denote the number of items in $M_q$ assigned to agents in $N_p$ under $X^*$. If $n_{pq} \le \lambda_{pq}$, we leave that type-block pair untouched, so that $\sum_{i\in N_p} \sum_{j \in M_q} x_{ij}u(i,j) = \sum_{i\in N_p} \sum_{j \in M_q} x^*_{ij}u(i,j)$. If $n_{pq} > \lambda_{pq}$, we order these $n_{pq}$ agents according to their utilities for the items they are assigned and retain only the top $\lambda_{pq}$ agents in that order (breaking ties lexicographically), setting $x_{ij}=0$ for the remaining agents. This change increases the average utility of assignments for this type-block pair:
\begin{align*} 
&\frac{\sum_{i\in N_p} \sum_{j \in M_q} x_{ij}u(i,j)}{\lambda_{pq}} \ge \frac{\sum_{i\in N_p} \sum_{j \in M_q} x^*_{ij}u(i,j)}{n_{pq}}.
\end{align*}
(To see why this is true, consider a sequence $z_1 \ge z_2 \ge \ldots$, where $z_i \ge 0$ and $z_i \ge z_{i+1}$ $\forall i =1,2,\ldots$, and two positive integers $\nu > \mu \ge 1$. Clearly, $(\nu-\mu)\sum_{i=1}^{\mu} z_i \ge (\nu-\mu)\mu z_{\mu} \ge \mu\sum_{i=\mu+1}^{\nu} z_i$ since $z_i \ge z_\mu$ $\forall i=1,2,\ldots,\mu$ and $z_i \le z_\mu$ $\forall i=\mu+1,\mu+2,\ldots,\nu$. Rearranging and simplifying, we get $\frac{1}{\mu}\sum_{i=1}^{\mu} z_i \ge \frac{1}{\nu}\sum_{i=1}^{\nu} z_i$.)

Further, since $n_{pq} \le |M_q|$, the above inequality implies that
\begin{align*} 
\frac{\sum_{i\in N_p} \sum_{j \in M_q} x_{ij}u(i,j)}{\lambda_{pq}} &\ge \frac{\sum_{i\in N_p} \sum_{j \in M_q} x^*_{ij}u(i,j)}{|M_q|}\\
\Longrightarrow \quad \sum_{i\in N_p} \sum_{j \in M_q} x_{ij}u(i,j) &\ge \frac{\lambda_{pq}}{|M_q|} \sum_{i\in N_p} \sum_{j \in M_q} x^*_{ij}u(i,j)\\
&= \alpha_{pq} \sum_{i\in N_p} \sum_{j \in M_q} x^*_{ij}u(i,j), \quad \text{since $\alpha_{pq}=\frac{\lambda_{pq}}{|M_q|}$}.
\end{align*}
Thus, for every $p \in [k]$ and every $q \in [l]$, we have
\[\sum_{i\in N_p} \sum_{j \in M_q} x_{ij}u(i,j) \ge \left( \min_{q \in [l]} \alpha_{pq} \right) \sum_{i\in N_p} \sum_{j \in M_q} x^*_{ij}u(i,j), \quad \text{since $\min_{q \in [l]} \alpha_{pq} \le \alpha_{pq} \le 1$.} \]
Summing over blocks, we obtain from Equation~\eqref{def:u_p}:
$$u_p(X) \geq u_p(X^*) \min_{q \in [l]} \alpha_{pq}, \, \forall p \in [k].$$
By definition, $u(X^*) = \OPT(u)$. Moreover, since $X \in \cal X_C$, we have $u(X) \le\OPT_C(u)$. Hence, by Definition~\ref{def:pod},
$$\PoD(u) \le \frac{u(X^*)}{u(X)}  \le \frac{u(X^*)}{\sum_{p \in [k]} u_p(X^*) \min_{q\in [l]} \alpha_{pq}}.$$
\end{proof}

We can now complete the proof of the theorem.
\begin{proof}[Proof of Theorem~\ref{thm_generalpod}] 
Since we have $\min_{(p,q) \in [k] \times [l]} \alpha_{pq} \le \min_{q \in [l]} \alpha_{p'q}$ for all $p' \in [k]$, Lemma \ref{lemma1} implies that:
\begin{align*}
\PoD(u) \le \frac{u(X^*)}{\sum\limits_{p\in[k]} u_p(X^*) \min\limits_{(p,q) \in [k] \times [l]} \alpha_{pq}} = \frac{1}{\min\limits_{(p,q)\in [k]\times [l]} \alpha_{pq}}.
\end{align*}
Depending on the utility matrix $u$, this upper bound can be tight whenever $|N_{p_0}| \ge |M_{q_0}|$ for some type-block pair $(p_0,q_0)$ in the set $\argmin_{(p,q)\in [k]\times [l]} \alpha_{pq}$. We identify an agent utility matrix for which the bound holds with equality:
\begin{align*}
u(i,j) = \begin{cases} 1 & \mbox{if } i \in N_{p_0} \mbox{ and } j \in M_{q_0},\\ 0 &\mbox{otherwise.}\end{cases}
\end{align*}
The optimal assignment without type-block constraints fully allocates the items in block $M_{q_0}$ to agents in $N_{p_0}$ for a total utility of $|M_{q_0}|$; furthermore, we know that any optimal constrained assignment allocates exactly $\lambda_{p_0 q_0}$ items in $M_{q_0}$ to agents in $N_{p_0}$ for a total utility of $\lambda_{p_0 q_0}$. Since $\lambda_{p_0 q_0}=\alpha_{p_0 q_0} \times |M_{q_0}|$, we have:
$$\PoD(u) = \frac{|M_{q_0}|}{\alpha_{p_0 q_0} \times |M_{q_0}|}= \frac{1}{\alpha_{p_0 q_0}} = \frac{1}{\min_{(p,q)\in [k]\times [l]} \alpha_{pq}}.$$
\end{proof}

\subsection{The Impact of Disparity among Types}

Theorem~\ref{thm_generalpod} offers a worst-case tight bound on the price of diversity, making no assumptions on agent utilities. However, its proof suggests that this upper bound is attained when social welfare is solely extracted from a single agent type and a single block. 
Intuitively, we can obtain a better bound on the price of diversity if a less `disparate' optimal assignment exists.
To formalize this notion, we introduce a new parameter:
\begin{definition}
For an optimal unconstrained assignment $X^*\! \in\! \cal X$, denote by $\beta_p(X^*)$ the ratio of the average utility of agents in $N_p$ to the average utility of all agents under $X^*$. 
The \emph{inter-type disparity parameter} $\beta(X^*)$ is defined as: \defend
$$\beta(X^*) \triangleq \min_{p \in [k]} \beta_p(X^*) = \min_{p \in [k]} \frac{u_p(X^*)/|N_p|}{u(X^*)/n}.$$
\end{definition}
Notice that $\beta(X^*) \in (0,1]$ can be computed in polynomial time and is fully independent of the type-block capacities. The closer $\beta(X^*)$ is to $1$, the lower the disparity between average agents of different types under $X^*$.
\begin{theorem}\label{thm_parampod}
For any \ASTC instance and any unconstrained optimal assignment $X^* \in \cal X$, we have:
\[ \PoD(u) \leq \frac{1/\beta(X^*)}{\sum_{p \in [k]} \nu_p \min_{q \in [l]} \alpha_{pq}},\]
where $\nu_p = \frac{|N_p|}{n}$ is the proportion of type $p$ in the agent population, for every $p \in [k]$.
%$$
%\PoD(u) \leq \frac{1}{\beta(X^*) \displaystyle \sum\limits_{p \in [k]}\frac{|N_p|}{n} \min\limits_{q \in [l]} \alpha_{pq}}.
%$$
\end{theorem}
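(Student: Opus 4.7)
The plan is to build directly on Lemma~\ref{lemma1}, which already provides a bound of the form
\[ \PoD(u) \le \frac{u(X^*)}{\sum_{p \in [k]} u_p(X^*)\, \min_{q \in [l]} \alpha_{pq}}.\]
All that remains is to lower-bound each per-type welfare $u_p(X^*)$ by a quantity involving the population proportion $\nu_p$ and the inter-type disparity parameter $\beta(X^*)$.

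The key step is to unpack the definition of $\beta(X^*)$. Since $\beta(X^*) = \min_{p \in [k]} \frac{u_p(X^*)/|N_p|}{u(X^*)/n}$, we have for every $p \in [k]$ that $\frac{u_p(X^*)/|N_p|}{u(X^*)/n} \ge \beta(X^*)$, which rearranges to $u_p(X^*) \ge \beta(X^*)\, \nu_p\, u(X^*)$. I would then substitute this pointwise lower bound into the denominator of the Lemma~\ref{lemma1} bound, factor out the (positive) common quantity $\beta(X^*)\, u(X^*)$ from the sum, and cancel $u(X^*)$ from numerator and denominator to obtain exactly
\[ \PoD(u) \le \frac{1/\beta(X^*)}{\sum_{p \in [k]} \nu_p\, \min_{q \in [l]} \alpha_{pq}}. \]

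There is no real obstacle here: the only thing to verify carefully is that $u(X^*) > 0$ (otherwise the optimum is trivial and $\PoD(u)$ is either undefined or $1$, and the statement holds vacuously), and that $\beta(X^*) > 0$ so that division is legitimate, which is guaranteed whenever each type contributes some utility under $X^*$; the degenerate cases can be disposed of in a short remark. So the proof is essentially a two-line chain of substitutions anchored on Lemma~\ref{lemma1} and the defining inequality of $\beta(X^*)$.
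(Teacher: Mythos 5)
Your proposal is correct and follows exactly the same route as the paper's proof: apply the defining inequality of $\beta(X^*)$ to get $u_p(X^*) \ge \beta(X^*)\,\nu_p\,u(X^*)$ for every $p$, substitute into the bound of Lemma~\ref{lemma1}, and cancel $u(X^*)$. The extra care about the degenerate cases $u(X^*)=0$ or $\beta(X^*)=0$ is a harmless addition the paper leaves implicit.
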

\begin{proof}
By definition of $\beta(X^*)$, for every $p \in [k]$, we have: 
\begin{align*}
u_p(X^*) \ge \beta(X^*) \frac{|N_p|}{n} u(X^*) = \beta(X^*) \nu_p u(X^*).
\end{align*}
Substituting this in Lemma \ref{lemma1}, we obtain the desired bound.
\end{proof}

Let us now apply the result to the Singapore public housing domain; we use the ethnic proportions reported in the 2010 census report \cite{Sing2010} to obtain $|N_1|/n= 0.741$ (Chinese), $|N_2|/n=0.134$ (Malay), and $|N_3|/n= 0.125$ (Indian/Others). Using the same block quotas $\alpha_{pq}$ as before, we have:
$$
\PoD(u) \leq \frac{1/\beta(X^*)}{0.87 \times 0.741 + 0.25 \times 0.134 + 0.15 \times 0.125} \approx \frac{1.43}{\beta(X^*)}.
$$
In other words, if inter-type disparity is low, i.e. $\beta(X^*)$ is close to $1$, the $\PoD$ may be significantly lower than the bound provided by Theorem~\ref{thm_generalpod}.\footnote{If there are multiple optimal assignments, they may have different values of the inter-type disparity parameter; if that is the case, we should choose the largest of these values for computing the upper bound provided by Theorem~\ref{thm_parampod} since the inequality holds for \emph{any} unconstrained optimal assignment.}%e.g. suppose there are two types $N_1=\{a_1,a_2\}$ and $N_1=\{b_1\}$ and two items $M=\{1,2\}$ with utilities: $u(a_1,1)=0.75$, $u(a_1,2)=0.25$; $u(i,1)=0.25$ and $u(i,2)=0.75$ $\forall i \in \{a_2,b_1\}$. There are two optimal assignments: $X^*_1$ that assigns $1$ to $a_1$ and $2$ to $a_2$; $X^*_2$ that assigns $1$ to $a_1$ and $2$ to $b_1$. Here, $\beta$

The two upper bounds provided by Theorems~\ref{thm_generalpod} and \ref{thm_parampod} are incomparable due to the dependence of the latter on the parameter $\beta(X^*)$. We will elaborate on this point with the help of two examples (\ref{ex:bound1} and \ref{ex:bound2}) in each of which every agent has a positive utility for every item (unlike Example~\ref{ex:podtight}). 

In each example, we have only one block of items $M=[m]$ and two types $N_1=\{a_1,a_2,\ldots,a_m\}$ and $N_2=\{b_1,b_2,\ldots,b_m\}$ for an arbitrary positive integer $m > 1$. Hence the proportion of each type in the population is $\tfrac12$. Let the proportional capacities of the single block for the two types be $\alpha_1$ and $\alpha_2$ respectively, such that $\alpha_1+\alpha_2 \ge 1$ and  $\alpha_1 m$ and $\alpha_2 m$ are both integers. Thus, for each of these examples, Theorem~\ref{thm_generalpod} puts the upper bound on the price of diversity at $\frac{1}{\min\{\alpha_1,\alpha_2\}}$. In the first example, the bound based on the inter-type disparity parameter turns out to be useless/uninformative.
\begin{example}\label{ex:bound1}
	For some $\eps \ll 1-\frac{1}{m}$, let the utilities be
	\begin{align*}
	u(a_r,j) &=\begin{cases}
	1-\eps & \text{if $j = r$,}\\
	\frac{\eps}{m-1} & \text{otherwise}
	\end{cases} && \forall r \in [m];\\
	u(b_r,j) &= \frac{1}{m} && \forall j \in M, \forall r \in [m].
	\end{align*}
	Evidently, the unique unconstrained optimal assignment is to match item $j$ with agent $a_j$ for every $j \in [m]$. Hence, per-agent average utilities of $N_1$ and $N_2$ are $1-\eps$ and $0$ respectively, making the inter-type disparity parameter zero. Thus, Theorem~\ref{thm_parampod} does not place any finite upper bound on $\PoD(u)$.
	
	However, a little thought reveals that, in a constrained optimal allocation, $\alpha_1 m$ items are allocated to $N_1$ such that each of these items $j$ is assigned to the agent $a_j$, and the remaining $(1-\alpha_1)m$ items are allocated to $N_2$ and arbitrarily assigned to one agent each. Since $(1-\alpha_1)m \le \alpha_2 m$, all type-block capacities are satisfied. Thus,
	\[\PoD(u) = \frac{m(1-\eps)}{\alpha_1 m(1-\eps)+(1-\alpha_1)m\cdot \frac{1}{m}} = \frac{1}{\alpha_1 + \psi \cdot (1-\alpha_1)},\]
	where $\psi=\frac{1}{m(1-\eps)} \in (0,1)$.
	Since $\alpha_1 < 1$, the denominator exceeds $\alpha_1$, so that $\PoD(u) < \frac{1}{\alpha_1}$. If $\alpha_1 > \alpha_2$, then $\alpha_1 + \psi \cdot (1-\alpha_1) = \psi + (1-\psi)\alpha_1 > \psi + (1-\psi)\alpha_2 = \alpha_2 + \psi \cdot (1-\alpha_2) > \alpha_2$; hence $\PoD(u) < \frac{1}{\alpha_2}$. In any case, $\PoD(u) \le \max\{\frac{1}{\alpha_1},\frac{1}{\alpha_2}\} = \frac{1}{\min\{\alpha_1,\alpha_2\}}$, i.e. the realized price of diversity respects the Theorem~\ref{thm_generalpod} bound. \egend
\end{example}
In the next example, the parametrized bound is more informative than the other.
\begin{example}\label{ex:bound2}
	We will further assume that $m$ is even. Let $u(a_r,j)=u(b_r,j)=\frac1m$ for every $r \in [m]$, $j \in [m]$. There is an unconstrained optimal assignment which also achieves $\beta(X^*)=1$: assign $\frac{m}{2}$ items arbitrarily to $\frac{m}{2}$ agents of each type, giving $\OPT(u)=1$. Thus, the Theorem~\ref{thm_parampod} bound $\frac{1}{(\alpha_1+\alpha_2)/2} \le \frac{1}{\min\{\alpha_1,\alpha_2\}}$, the Theorem~\ref{thm_generalpod} bound. Since $\alpha_1+\alpha_2 \ge 1$, there are at least two assignments, respecting capacity constraints, whose social welfare is $\OPT(u)=1$: either $\alpha_1 m$ to $N_1$ and $(1-\alpha_1)m$ to $N_2$, or $(1-\alpha_2)m$ to $N_1$ and $\alpha_2 m$ to $N_2$. In either case, the constrained optimum is $1$; hence $\PoD(u)$ has its ideal value of $1$. \egend
\end{example}	

Finally, combining Theorems~\ref{thm_generalpod} and \ref{thm_parampod}, we obtain the following upper bound on the price of diversity of any instance of \ASTC:
\begin{align} 
\PoD(u) \le \min\left\{\frac{1}{\min_{(p,q) \in [k] \times [l]} \alpha_{pq}}, \frac{1/\beta(X^*)}{\sum_{p \in [k]} \nu_p \min_{q \in [l]} \alpha_{pq}}\right\}. \label{comb_bound}
\end{align}
Thus, if we plot the $\PoD(u)$ against the disparity parameter $\beta(X^*)$, the point corresponding to any \ASTC instance with block quotas and ethnic proportions as in Singapore must lie in the shaded region of Figure~\ref{fig:PoDbeta}.

\begin{figure}[!h]
\centering
\begin{tikzpicture}[scale=0.45] \scriptsize
\coordinate (A) at (0,0);
\coordinate (B) at (11,0);
\coordinate (C) at (0,8);

\draw[thick,-latex] (A) -- (B);
\draw (11,-0.1) node[thick,right]{$\beta(X^*)$};
\draw[thick,-latex] (A) -- (C);
\draw (-0.1,8) node[thick,left]{$\PoD(u) $};

\draw (0,0) node[below,left]{$0$};
\draw (10,0) node{$|$};
\draw (10,-0.1) node[below]{$1$};
\draw (0,6.67) node{$-$};
\draw (0,6.67) node[left]{$6.67$};
\draw (0,1) node{$-$};
\draw (0,1) node[left]{$1$};

\draw  (0.3*10,1.43/0.3) node[right]{$1.43/\beta(X^*)$};

\draw[dashed,thick] (0,6.67) -- (10,6.67);
\draw (0,1) -- (10,1);

\draw[thick]  (0.2*10,1.43/0.2) -- (0.22*10,1.43/0.22)-- (0.25*10,1.43/0.25) -- (0.27*10,1.43/0.27)-- (0.3*10,1.43/0.3) -- (0.32*10,1.43/0.32)--  (0.35*10,1.43/0.35) -- (0.37*10,1.43/0.37) -- (0.4*10,1.43/0.4) -- (0.45*10,1.43/0.45) -- (0.5*10,1.43/0.5)-- (0.55*10,1.43/0.55) -- (0.6*10,1.43/0.6) -- (0.65*10,1.43/0.65) -- (0.7*10,1.43/0.7) -- (0.75*10,1.43/0.75) -- (0.8*10,1.43/0.8) -- (0.85*10,1.43/0.85) -- (0.9*10,1.43/0.9)-- (0.95*10,1.43/0.95) -- (1*10,1.43/1);

\fill[color=blue!80!black, opacity=0.3]  (1.43/6.67*10,6.67) -- (0.25*10,1.43/0.25) -- (0.27*10,1.43/0.27)-- (0.3*10,1.43/0.3) -- (0.32*10,1.43/0.32)-- (0.35*10,1.43/0.35) -- (0.4*10,1.43/0.4) -- (0.45*10,1.43/0.45) -- (0.5*10,1.43/0.5)-- (0.55*10,1.43/0.55) -- (0.6*10,1.43/0.6) -- (0.65*10,1.43/0.65) -- (0.7*10,1.43/0.7) -- (0.75*10,1.43/0.75) -- (0.8*10,1.43/0.8) -- (0.85*10,1.43/0.85) -- (0.9*10,1.43/0.9)-- (0.95*10,1.43/0.95) -- (1*10,1.43/1) -- (10,1) -- (0,1)-- (0,6.67) -- cycle;
\end{tikzpicture}
\caption{$\PoD$ vs disparity parameter for the HDB problem for ethnic proportions $|N_1|/n= 0.741$ (Chinese), $|N_2|/n=0.134$ (Malay), and $|N_3|/n= 0.125$ (Indian/Others), and corresponding quotas $\alpha_{1q} = 0.87, \alpha_{2q} = 0.25$ and $\alpha_{3q} = 0.15$ for every block $M_q$.\label{fig:PoDbeta}}
\end{figure}
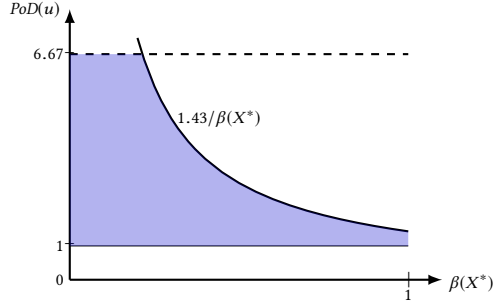

\section{Experimental Analysis}\label{sec:experiments}
In this section, we simulate instances of the \ASTC problem using recent, publicly available Singaporean demographic and housing allocation statistics and Chicago public school admission data. We compare the welfare of three assignment mechanisms: the optimal unconstrained mechanism, the optimal constrained mechanism, and the lottery-based mechanism (see Section \ref{sec:lottery} below). Both the unconstrained and constrained social welfare maximizations are solved using the Gurobi Optimizer.\footnote{Each iteration of the constrained optimization problem took about $30$ minutes while that of the unconstrained optimization and the lottery mechanism took less than $1$ minute of running time on a standard laptop (Intel i7-7600U Processor, 2.8Ghz, 8GB RAM, 256 GB SSD).} We refer the reader to \url{https://git.io/fNhhm} for full implementation details. %\url{https://github.com/DataDrivenStrategicCollaborationGroup/AssignmentProblemWithDiversityConstraints}

\subsection{The Lottery Mechanism}\label{sec:lottery}
Sections~\ref{sec:prelim} and \ref{sec:complexity} study an {\em optimal} mechanism for assigning goods to agents under diversity constraints. To the best of our knowledge, this mechanism is not used for allocating goods in practice; rather, lotteries are used to allocate items in both real-world instances that inspire this work. The mechanisms randomly order agents, and let each agent pick their favorite item in turn, {\em while respecting predetermined quotas}. 
In this section, we formulate a simple one-shot lottery-based mechanism that captures the aspect of the problems described in Sections~\ref{sec:hdb} and \ref{sec:chicago} that we are most interested in: the impact of type-block constraints (as defined by inequalities~\eqref{eq:capcityconstraint}) on assignment by lottery. Algorithm~\ref{lottery} is not the actual mechanism used in Singapore public housing or Chicago school choice (see the discussion in the respective sections). It is easy to see that the algorithm takes $\mathrm{poly}(mn)$ time to run. 
%In this section, we analyze the quality of the assignments returned by the simplified HDB lottery mechanism, which is implemented as shown in Algorithm~\ref{lottery} and takes $\mathrm{poly}(mn)$ time to run. Note that this algorithm can also be used for the unconstrained assignment problem since any instance of this problem can be seen as an instance of $\ASTC$ with only one agent type, one block of items and an unlimited type-block capacity.
\begin{algorithm}[h]
	\DontPrintSemicolon
	\caption{Lottery Mechanism for Assignment with Type-Block Constraints}\label{lottery}
	\KwIn{Agents $N$ grouped into types $N_1,N_2,\ldots,N_k$; items $M$ split into blocks $M_1,M_2,\ldots,M_l$; type-block capacities $\lambda_{pq}$ $\forall (p,q) \in [k] \times [l]$; utility matrix $(u(i,j))_{n \times m}$.}
	\KwInit{Allocation matrix $X = (x_{ij})_{n\times m} \leftarrow (0)_{n \times m}$; remaining agents $N_{\mathrm{rem}} \leftarrow N$; unassigned items $M_{\mathrm{rem}} \leftarrow M$.}
	\For{$t \in \{1,2,\ldots, n\}$}{
		Draw agent uniformly at random: $i_t \sim \mathbb{U} (N_{\mathrm{rem}})$.\\
		Find type of $i_t$: $p_t \leftarrow p \in [k]$ s.t. $i_t \in N_p$.\\
		Find blocks that have not hit capacity for type $p_t$: $Q_t \leftarrow \{ q \in [l] : \sum_{i \in N_{p_t}} \sum_{j \in M_q} x_{i j} < \lambda_{p_t q}\}$.\\
		Find available items: $M_t  \leftarrow M_{\mathrm{rem}} \cap \left( \cup_{q \in Q_t} M_q\right)$.\\
		\If{$M_t \neq \emptyset$}{
			Assign to $i_t$ available item for which she has highest utility, breaking ties lexicographically:\\
			$j_t \leftarrow \arg \max_{j \in M_t} u(i_t,j)$.\\
			$x_{i_t j_t} \leftarrow 1$.\\
			$M_{\mathrm{rem}} \leftarrow M_{\mathrm{rem}} \backslash \{j_t\}$.
		}
		$N_{\mathrm{rem}} \leftarrow N_{\mathrm{rem}} \backslash \{i_t\}$.
	}
	\Return $X$.
\end{algorithm}

We know that this lottery mechanism with quotas cannot produce better welfare than the optimal constrained mechanism (\ASTC); one of the objectives of our experiments is to find out how much worse the lottery performs for various utility models. We define the \textit{price of} (one instance of) \textit{the diverse lottery} as follows. %the \emph{relative loss} of (one instance of) the lottery mechanism as the ratio of $\OPT(u)$ to the  total utility of the assignment produced by a run of Algorithm~\ref{lottery}.
\begin{definition}\label{def:podl}
	Let $a$ denote an arbitary run of Algorithm~\ref{lottery} and $X_a$ the unique matching of items to agents induced by the run $a$. Then, the \emph{price of the diverse lottery instance $a$} under utilities $u = (u(i,j))_{n\times m}$ is given by: \defend
	$$\PoDL(u,a)  \triangleq \frac{\OPT(u)}{u(X_a)}.$$
\end{definition}

  %We have suppressed the dependence of this ratio on the type-block capacities $(\lambda_{pq})_{k\times l}$.  
  We will estimate and report the expected value of $\PoDL(u,a)$ (the expectation being over all possible permutations of agents induced by the uniform random sampling without replacement in Algorithm~\ref{lottery})  alongside the realized value of $\PoD(u)$ (the corresponding performance measure for the optimal constrained mechanism) for the same set of parameter values (agent-item utilities and type-block capacities): see Sections~\ref{sec:HDBexpts} and~\ref{sec:chicago_expts}. Note that the realized $\PoD(u)$ is a \emph{lower bound} on $\PoDL(u,a)$ for any run $a$ of Algorithm~\ref{lottery} for the same problem instance; but the upper bounds from Theorems~\ref{thm_generalpod} and~\ref{thm_parampod} do not apply to $\PoDL(u,a)$ at all.

\subsection{The Singapore Public Housing Allocation Problem}\label{sec:HDBexpts}

\paragraph{Data Collection}
In order to create realistic instances of the \ASTC problem within the Singaporean context, we collected data on the location and number of flats of recent HDB housing development projects advertised over the second and third quarters of 2017.\footnote{\url{http://www.hdb.gov.sg/cs/infoweb/residential/buying-a-flat/new/bto-sbf}} Each of these developments corresponds to a block in our setup, for a total of $m = 1350$ flats partitioned into $l = 9$ blocks (a detailed map is given in Figure \ref{figMap}).
Moreover, each flat in any of these blocks belongs to one of several pre-specified categories, viz. 2-room flexi, 3-room, 4-room, and 5-room; our data set includes lower and upper bounds, $\LB(t,q)$ and $\UB(t,q)$ respectively, on the monthly cost (loan) for a flat of category $t$ in block $M_q$ for every $t$ and $q$.
 We consider two applicant pools whose ethnic composition follows the 2010 Singapore census report \cite{Sing2010}: there are $n = m = 1350$ applicants in the first pool with $|N_1| = 1000$ ($\approx74.1\%$ Chinese), $|N_2| = 180$ ($\approx 13.4\%$ Malay), and $|N_3| = 170$ ($\approx 12.5\%$ Indian/Others); the second pool has $n=3000$ applicants with $|N_1| = 2223$, $|N_2| = 402$, and $|N_3| = 375$. From the 2010 Singapore census report, we also collected the average salary $S(p)$ of each ethnicity group $p \in [k]$, given in Singapore dollars: $S(1) = 7,326$, $S(2)=4,575$ and $S(3) = 7,664$.\footnote{We found no public data on applicant pools for public housing allocation in Singapore. We wanted to test the performance of the constrained optimization approach (mainly) for representative values of $n$ in the interesting domain $n \ge m$. We chose $n = m$ (for which it is possible to achieve a perfect matching) and $n = \lceil 2m \times 10^{-3}\rceil \times 10^3$; we observed a surprisingly small difference in the realized $\PoD(u)$ for these two values of $n$ and did not repeat our expensive experiments for higher values of $n$. Moreover, in both our motivating real-world problems, it is unlikely that the number of agents is orders of magnitude higher than the number of items. Similar reasoning applies to our choices of $n$ in Section~\ref{sec:chicago_expts}.}
 From publicly available data\footnote{\url{https://data.gov.sg/dataset/master-plan-2014-planning-area-boundary-web}} on Singapore's Master Plan 2014,\footnote{\url{https://www.ura.gov.sg/Corporate/Planning/Master-Plan/}} we collected the locations of the geographic centers of the $55$ planning areas that Singapore is divided into; we also obtained the population sizes of the three ethnicity groups under consideration in each planning area from the General Household Survey 2015 data available from the Department of Statistics, Singapore.\footnote{\url{https://www.singstat.gov.sg/publications/ghs/ghs2015content} > Statistical Tables > Basic Demographic Characteristics}
 Finally, we use a uniform block capacity using the latest HDB block quotas~\cite{deng2013publichousing}: for every block $M_q$, we have $\alpha_{1q} = 0.87, \alpha_{2q} = 0.25$ and $\alpha_{3q} = 0.15$. 

\begin{figure}[!t]
\centering
\begin{subfigure}{.6\columnwidth}
\centering
\includegraphics[scale=0.1]{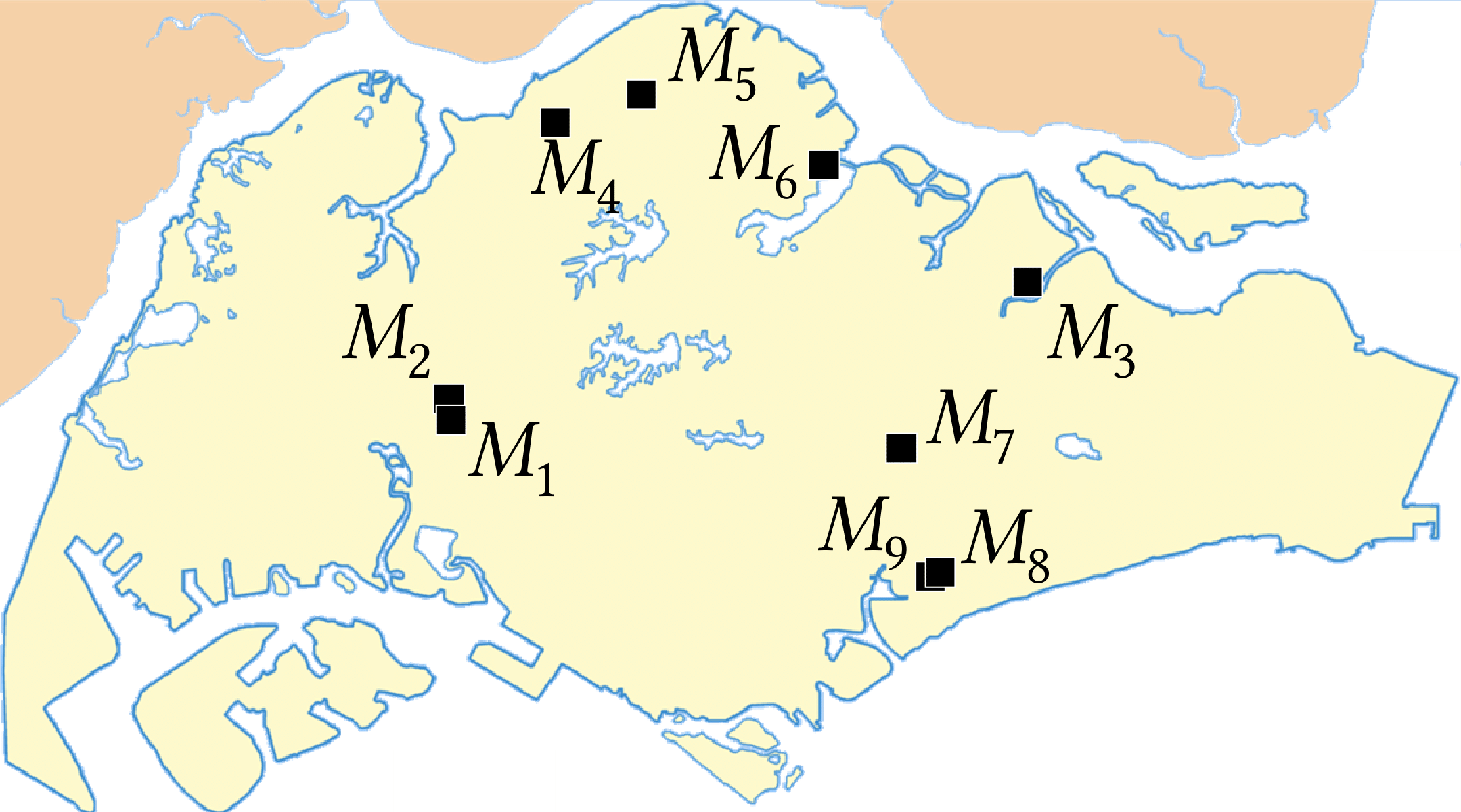}%scale=0.179
\end{subfigure}%
\begin{subfigure}{.4\columnwidth}

\begin{itemize} 
\item $M_1$: Sky Vista (128 flats)
\item $M_2$: West Scape (162 flats)
\item $M_3$: Rivervale Shores (156 flats)
\item $M_4$: Marsiling Grove (249 flats)
\item $M_5$: Woodlands Spring (108 flats)
\item $M_6$: Forest Spring (94 flats)
\item $M_7$: Woodleigh Hillside (104 flats)
\item $M_8$: Dakota Breeze (190 flats)
\item $M_9$: Pine Vista (159 flats)

\end{itemize}

\end{subfigure}%
\caption{Block locations and number of flats.\label{figMap}}
\end{figure}

\paragraph{Utility Models}

All parameters used to generate \ASTC instances in our simulations are based on real data, except for agent utilities over apartments. Conducting large-scale surveys that elicit user preferences over apartments is beyond the scope of this work; thus, we base our agent utility models on simulated utilities. We examine four utility models, each characterized by a parameter whose value does not come from the data: {\em distance-based} ($\Dist(\sigma^2)$), {\em type-based} ($\Ethn(\sigma^2)$), {\em project approval-based} ($\Proj(\rho)$), and {\em price-based} ($\Price(\sigma^2)$). 
\begin{itemize}
\item In the distance-based utility model, each agent $i \in N$ has a preferred geographic location $\vec a_i \in \R^2$ (chosen uniformly at random within the physical landmass of Singapore)\footnote{More precisely, we considered the largest rectangle that fits within the main island of Singapore, constructed a two-dimensional Cartesian coordinate system with one degree of latitude/longitude (each roughly equal to 111 km at the location of Singapore), and then picked x- and y-coordinates independently and uniformly within this rectangle.} that she would like to live as close as possible to (say, the location of her parents' apartment, workplace, or preferred school). For every block $M_q$, we generate the utility that agent $i$ derives from apartment $j \in M_q$ by first drawing a sample from the normal distribution $\cal N(1/ d(\vec a_i,\loc(M_q)),\sigma^2)$, where $\loc(M_q) \in \R^2$ is the geographical location of block $M_q$ and $d(\cdot,\cdot)$ represents Euclidean distance, and then renormalizing to make the sum of utilities of each agent for all apartments in $M$ equal to one. 
\item In the type-based utility model, we assume that all agents of the same type (i.e. ethnic group) have the same preferred location (i.e.  $\forall p \in [k], \forall i,i' \in N_p, \vec a_i = \vec a_{i'}$). The rest of the model description follows the above distance-based model.
\item In the project approval-based utility model, we construct, for each type, a categorical distribution over the $55$ planning areas of Singapore, the probability of each area being proportional to the fraction of the sub-population of that type living in that area; for each agent $i$, we sample a preferred planning area from the above distribution corresponding to $i$'s type; if a project $M_q$ is within a radius $\rho$ of the geographic center of agent $i$'s preferred planning area, then $i$ approves of the project, i.e. $u(i,j) = 1$ $\forall j \in M_q$, else $i$ disapproves of the project, i.e. $u(i,j) = 0$ $\forall j \in M_q$.\footnote{This is a  specific instance of the \emph{approval model} (or \emph{dichotomous preferences} over items) where an agent either wants or does not want an item but does not distinguish among the items she wants. This also corresponds to an unweighted bipartite matching setting (between agents and items) where there is an edge between an agent and item if and only if the agent approves/wants the item. In such a situation, the utilities of an agent are usually not normalized over items (see, e.g. \cite{bogomolnaia2004random}).}
\item In the price-based utility model, each agent $i \in N_p$ has a salary $s_i$ that is generated according to the normal distribution $\cal N(S(p),\sigma^2)$. Each flat $j \in M_q$ of category $t$ has a monthly cost $p_j$ that is chosen uniformly in $[LB(t,q), UB(t,q)]$. We assume that  agent $i$ is willing to pay one-third of her monthly salary on mortgage installments.\footnote{The choice of the one-third fraction is inspired by the ``3-3-5 rule" for deciding whether one can afford a flat given one's income (\url{https://www.areyouready.sg/YourInfoHub/Pages/News-How-to-use-the-3-3-5-rule-to-consider-if-you-can-afford-your-new-home.aspx}), endorsed by the Central Provident Fund Board of Singapore (\url{https://www.cpf.gov.sg/members}).}
The utility that agent $i$ derives from flat $j$ is then defined by $$u(i,j)= \dfrac{1/(p_j - \frac{s_i}{3})^2}{\widehat{U}_i},$$ where $\widehat{U}_i \triangleq \sum_{z \in M} 1/(p_z - \frac{s_i}{3})^2$ is the normalization factor. The rationale for the utility formula is that a much higher cost relative to the budget makes the flat unaffordable while a much lower cost indicates unsatisfactory quality, making the agent unhappy in both scenarios. 
\end{itemize}

\paragraph{Evaluation}
For each of our treatments (Figures~\ref{figTests1}-\ref{figTests3}); we report the price of diversity $\PoD(u)$ as per Definition~\ref{def:pod} (hatched bar); the theoretical upper bound on $\PoD(u)$ as per Theorem~\ref{thm_parampod} (dark gray bar); and the price of the diverse lottery $\PoDL(u,\cdot)$ as per Definition~\ref{def:podl}, averaged over $100$ agent permutations (light gray bar).\footnote{The error bars in Figures~\ref{figTests1}-\ref{figTests3} and~\ref{figTests4} represent one (estimated) standard error of the mean on either side; see e.g. \citet{clymo2019many} for recommendations on how many significant digits of the mean to report based on its standard error, which we have tried to follow.}

First, we want to compare the distance-based utility model $\Dist(\sigma^2)$ and the type-based model $\Ethn(\sigma^2)$ in order to estimate the welfare loss due to imposing ethnicity constraints. To do so, we vary both $\sigma^2$ in $\{1,5,10\}$ and $n$ in $\{1350, 3000\}$; the results reported in Figures~\ref{figTests1} are on average performance over $100$ randomly generated instances. Our first observation is that, in all our experiments, the $\Dist(\sigma^2)$ exhibits virtually no reduction in welfare due to the imposition of type-block constraints (see the hatched bars in the charts on the left). This is because utilities in $\Dist(\sigma^2)$ are independent of ethnicities, resulting in a very low value for the inter-type disparity parameter (see the dark gray bars) --- in fact, for any utility model where utilities are independent of ethnicities, the value of the disparity parameter should intuitively be close to $1$ with a high probability. 
For utilities generated based on the $\Ethn(\sigma^2)$ model, the disparity parameter is somewhat higher (utilities do strongly depend on ethnicities), resulting in a higher $\PoD(u)$. 
Despite making no attempt to optimize social welfare under type-block constraints, the HDB lottery mechanism does surprisingly well when the number of agents equals the number of apartments (Figure~\ref{figTests1} (a)), extracting at least $84\%$ of the optimal unconstrained welfare under the $\Dist(\sigma^2)$ utility model, and at least $79\%$ of the social welfare under the $\Ethn(\sigma^2)$ model. However, the welfare loss induced by the lottery mechanism is negatively impacted by the number of agents (Figure~\ref{figTests1} (b)); for instance, it only extracts $65\%$ of the optimal unconstrained welfare under $\Dist(1)$ with $n=3000$ and, in fact, the lottery-induced welfare loss for this treatment even exceeds the theoretical upper bound on the price of diversity.

\begin{figure}[h]
	\centering
	\begin{subfigure}{.5\columnwidth}%Dist 1350 agents
		\begin{tikzpicture}[scale=1] \normalsize
		\coordinate (A) at (-1/2,0);
		\coordinate (B) at (12.7/2,0);
		\coordinate (C) at (-1/2,2.6);
		
		\draw[thick,-latex] (A) -- (B);
		\draw (12.5/2,-0.02) node[below]{$\sigma^2$};
		\draw[thick,-latex] (A) -- (C);
		
		\draw (-1/2,1) node{$\_$};
		\draw (-1.5/2,1) node{$1$};
		\draw (-1.5/2,2.4) node{\rotatebox{90}{welfare ratio}};
		
		\draw (1.1/2,0) node{$|$};
		\draw (1.1/2,-0.1) node[below]{$1$};
		\draw (5.6/2,0) node{$|$};
		\draw (5.6/2,-0.1) node[below]{$5$};
		\draw (10/2,0) node{$|$};
		\draw (10/2,-0.1) node[below]{$10$};
		
		%random var 1
		\draw (-1/2,0) -- (-1/2,1.00) -- (0.4/2,1.00) -- (0.4/2,0) --cycle;
		\filldraw[pattern=north east lines] (-1/2,0) -- (-1/2,1.00) -- (0.4/2,1.00) -- (0.4/2,0) --cycle;
		\draw (-0.3/2,0) node[above]{\hlw{$1.00$}};
		\draw (-0.3/2,1.00-0.0002) -- (-0.3/2,1.00+0.0002);
		\draw (-0.8/2, 1.00-0.0002) -- (0.2/2, 1.00-0.0002);
		\draw (-0.8/2, 1.00+0.0002) -- (0.2/2, 1.00+0.0002);
		
		\draw (0.4/2,0) rectangle (1.8/2,1.1807711);
		\filldraw[color=gray!80!black, opacity=0.3]  (0.4/2,0) rectangle (1.8/2,1.1807711);
		\draw (1.1/2,1/4) node[above]{\hlw{$1.18$}};
		\draw (1.1/2,1.1807711-0.011465986657428689) -- (1.1/2,1.1807711+0.011465986657428689);
		\draw (0.6/2, 1.1807711-0.011465986657428689) -- (1.6/2, 1.1807711-0.011465986657428689);
		\draw (0.6/2, 1.1807711+0.011465986657428689) -- (1.6/2, 1.1807711+0.011465986657428689);
		
		\draw (1.8/2,0) -- (1.8/2,1.4877853) -- (3.2/2,1.4877853) -- (3.2/2,0)  --cycle;
		\filldraw[color=gray!80!black, opacity=0.9] (1.8/2,0) -- (1.8/2,1.4877853) -- (3.2/2,1.4877853) -- (3.2/2,0)  --cycle;
		\draw (2.5/2,2/4) node[above]{\hlw{$1.49$}};
		\draw (2.5/2,1.4877853-0.036494393754673574) -- (2.5/2,1.4877853+0.036494393754673574);
		\draw (2/2, 1.4877853-0.036494393754673574) -- (3/2, 1.4877853-0.036494393754673574);
		\draw (2/2, 1.4877853+0.036494393754673574) -- (3/2, 1.4877853+0.036494393754673574);

		%random var 5
		\draw (3.5/2,0) -- (3.5/2,1.00) -- (4.9/2,1.00) -- (4.9/2,0) --cycle;
		\filldraw[pattern=north east lines] (3.5/2,0) -- (3.5/2,1.00) -- (4.9/2,1.00) -- (4.9/2,0) --cycle;
		\draw (4.2/2,0) node[above]{\hlw{$1.00$}};
		\draw (4.2/2,1.00-0.00009) -- (4.2/2,1.00+0.00009);
		\draw (3.7/2, 1.00-0.00009) -- (4.7/2, 1.0-0.00009);
		\draw (3.7/2, 1.00+0.00009) -- (4.7/2, 1.00+0.00009);
		
		\draw (4.9/2,0) rectangle  (6.3/2,1.125433);
		\filldraw[color=gray!80!black, opacity=0.3] (4.9/2,0) rectangle  (6.3/2,1.125433);
		%\draw[pattern=horizontal lines, pattern color=black] (4.9/2,0) -- (4.9/2,1.125433) -- (6.3/2,1.125433) -- (6.3/2,0)--cycle;
		\draw (5.6/2,1/4) node[above]{\hlw{$1.13$}};
		\draw (5.6/2,1.125433-0.007479383910016952) -- (5.6/2,1.125433+0.007479383910016952);
		\draw (5.1/2, 1.125433-0.007479383910016952) -- (6.1/2, 1.125433-0.007479383910016952);
		\draw (5.1/2, 1.125433+0.007479383910016952) -- (6.1/2, 1.125433+0.007479383910016952);
		
		\draw (6.3/2,0) -- (6.3/2,1.4750199) -- (7.7/2,1.4750199) -- (7.7/2,0)  --cycle;
		\filldraw[color=gray!80!black, opacity=0.9] (6.3/2,0) -- (6.3/2,1.4750199) -- (7.7/2,1.4750199) -- (7.7/2,0)  --cycle;
		\draw (7/2,2/4) node[above]{\hlw{$1.48$}};
		\draw (7/2,1.4750199-0.027128398973331067) -- (7/2,1.4750199+0.027128398973331067);
		\draw (6.5/2, 1.4750199-0.027128398973331067) -- (7.5/2, 1.4750199-0.027128398973331067);
		\draw (6.5/2, 1.4750199+0.027128398973331067) -- (7.5/2, 1.4750199+0.027128398973331067);

		%random var 10 - constraints
		\draw (8/2,0) -- (8/2,1.00) -- (9.4/2,1.00) -- (9.4/2,0) --cycle;
		\filldraw[pattern=north east lines] (8/2,0) -- (8/2,1.00) -- (9.4/2,1.00) -- (9.4/2,0) --cycle;
		\draw (8.7/2,0) node[above]{\hlw{$1.00$}};
		\draw (8.7/2,1.00-0.000059) -- (8.7/2,1.00+0.000059);
		\draw (8.2/2, 1.00-0.000059) -- (9.2/2, 1.00-0.0000599);
		\draw (8.2/2, 1.000+0.000059) -- (9.2/2, 1.00+0.000059);
		
		%random var 10 - lottery
		\draw (9.4/2,0) -- (9.4/2,1.0925231) -- (10.8/2,1.0925231) -- (10.8/2,0)--cycle;
		\filldraw[color=gray!80!black, opacity=0.3] (9.4/2,0) -- (9.4/2,1.0925231) -- (10.8/2,1.0925231) -- (10.8/2,0)--cycle;
		\draw (10.1/2,1/4) node[above]{\hlw{$1.09$}};
		\draw (10.1/2,1.0925231-0.006144323297759245) -- (10.1/2,1.0925231+0.006144323297759245);
		\draw (9.6/2, 1.0925231-0.006144323297759245) -- (10.6/2, 1.0925231-0.006144323297759245);
		\draw (9.6/2, 1.0925231+0.006144323297759245) -- (10.6/2, 1.0925231+0.006144323297759245);
		
		%random var 10 - bound
		\draw (10.8/2,0) -- (10.8/2,1.4693056) -- (12.2/2,1.4693056) -- (12.2/2,0) --cycle;
		\filldraw[color=gray!80!black, opacity=0.9] (10.8/2,0) -- (10.8/2,1.4693056) -- (12.2/2,1.4693056) -- (12.2/2,0) --cycle;
		\draw (11.5/2,2/4) node[above]{\hlw{$1.47$}};
		\draw (11.5/2,1.4693056-0.017873566077744946) -- (11.5/2,1.4693056+0.017873566077744946);
		\draw (11/2, 1.4693056-0.017873566077744946) -- (12/2, 1.4693056-0.017873566077744946);
		\draw (11/2, 1.4693056+0.017873566077744946) -- (12/2, 1.4693056+0.017873566077744946);
		
		\end{tikzpicture}
		%  \caption{$\Dist(\sigma^2)$}
		%\label{fig:1-sub1}
	\end{subfigure}%
		\begin{subfigure}{.5\columnwidth}%Dist 3000 agents
		\begin{tikzpicture}[scale=1] \normalsize
		\coordinate (A) at (-1/2,0);
		\coordinate (B) at (12.7/2,0);
		\coordinate (C) at (-1/2,2.6);
		
		\draw[thick,-latex] (A) -- (B);
		\draw (12.5/2,-0.02) node[below]{$\sigma^2$};
		\draw[thick,-latex] (A) -- (C);
		
		\draw (-1/2,1) node{$\_$};
		\draw (-1.5/2,1) node{$1$};
		\draw (-1.5/2,2.4) node{\rotatebox{90}{welfare ratio}};
		
		\draw (1.1/2,0) node{$|$};
		\draw (1.1/2,-0.1) node[below]{$1$};
		\draw (5.6/2,0) node{$|$};
		\draw (5.6/2,-0.1) node[below]{$5$};
		\draw (10/2,0) node{$|$};
		\draw (10/2,-0.1) node[below]{$10$};
		
		%random var 1
		\draw (-1/2,0) -- (-1/2,1.00) -- (0.4/2,1.00) -- (0.4/2,0) --cycle;
		\filldraw[pattern=north east lines] (-1/2,0) -- (-1/2,1.00) -- (0.4/2,1.00) -- (0.4/2,0) --cycle;
		\draw (-0.3/2,0) node[above]{\hlw{$1.00$}};
		\draw (-0.3/2, 1.00-0.0000234243827763512) -- (-0.3/2,1.00+0.0000234243827763512);
		\draw (-0.8/2, 1.00-0.0000234243827763512) -- (0.2/2, 1.00-0.0000234243827763512);
		\draw (-0.8/2, 1.00+0.000023424382776351) -- (0.2/2, 1.00+0.000023424382776351);
		
		\draw (0.4/2,0) -- (0.4/2,1.5485175) -- (1.8/2,1.5485175) -- (1.8/2,0)--cycle;
		\filldraw[color=gray!80!black, opacity=0.3] (0.4/2,0) -- (0.4/2,1.5485175) -- (1.8/2,1.5485175) -- (1.8/2,0)--cycle;
		\draw (1.1/2,1/4) node[above]{\hlw{$1.55$}};
		\draw (1.1/2, 1.5485175-0.001363332721) -- (1.1/2, 1.5485175+0.001363332721);
		\draw (0.6/2, 1.5485175-0.001363332721) -- (1.6/2, 1.5485175-0.001363332721);
		\draw (0.6/2, 1.5485175+0.001363332721) -- (1.6/2, 1.5485175+0.001363332721);
		
		\draw  (1.8/2,0) -- (1.8/2,1.5139613) -- (3.2/2,1.5139613) -- (3.2/2,0)  --cycle;
		\filldraw[color=gray!80!black, opacity=0.9] (1.8/2,0) -- (1.8/2,1.5139613) -- (3.2/2,1.5139613) -- (3.2/2,0)  --cycle;
		\draw (2.5/2,2/4) node[above]{\hlw{$1.51$}};
		\draw (2.5/2,1.5139613-0.006296925045) -- (2.5/2,1.5139613+0.006296925045);
		\draw (2/2, 1.5139613-0.006296925045) -- (3/2, 1.5139613-0.006296925045);
		\draw (2/2, 1.5139613+0.006296925045) -- (3/2, 1.5139613+0.006296925045);

		%random var 5
		\draw (3.5/2,0) -- (3.5/2,1.00) -- (4.9/2,1.00) -- (4.9/2,0) --cycle;
		\filldraw[pattern=north east lines] (3.5/2,0) -- (3.5/2,1.00) -- (4.9/2,1.00) -- (4.9/2,0) --cycle;
		\draw (4.2/2,0) node[above]{\hlw{$1.00$}};
		\draw (4.2/2,1.00-0.00000867640051942029) -- (4.2/2,1.00+0.00000867640051942029);
		\draw (3.7/2, 1.00-0.00000867640051942029) -- (4.7/2, 1.00-0.00000867640051942029);
		\draw (3.7/2, 1.00+0.00000867640051942029) -- (4.7/2, 1.00+0.00000867640051942029);
		
		\draw (4.9/2,0) -- (4.9/2,1.3832498) -- (6.3/2,1.3832498) -- (6.3/2,0)--cycle;
		\filldraw[color=gray!80!black, opacity=0.3] (4.9/2,0) -- (4.9/2,1.3832498) -- (6.3/2,1.3832498) -- (6.3/2,0)--cycle;
		\draw (5.6/2,1/4) node[above]{\hlw{$1.38$}};
		\draw (5.6/2,1.3832498-0.00115900778791413) -- (5.6/2,1.3832498+0.00115900778791413);
		\draw (5.1/2, 1.3832498-0.00115900778791413) -- (6.1/2, 1.3832498-0.00115900778791413);
		\draw (5.1/2, 1.3832498+0.00115900778791413) -- (6.1/2, 1.3832498+0.00115900778791413);
		
		\draw (6.3/2,0) -- (6.3/2,1.5173264) -- (7.7/2,1.5173264) -- (7.7/2,0)  --cycle;
		\filldraw[color=gray!80!black, opacity=0.9] (6.3/2,0) -- (6.3/2,1.5173264) -- (7.7/2,1.5173264) -- (7.7/2,0)  --cycle;
		\draw (7/2,2/4) node[above]{\hlw{$1.52$}};
		\draw (7/2,1.5173264-0.00806715244709548) -- (7/2,1.5173264+0.00806715244709548);
		\draw (6.5/2, 1.5173264-0.00806715244709548) -- (7.5/2, 1.5173264-0.00806715244709548);
		\draw (6.5/2, 1.5173264+0.00806715244709548) -- (7.5/2, 1.5173264+0.00806715244709548);

		%random var 10 - constraints
		\draw (8/2,0) -- (8/2,1.00) -- (9.4/2,1.00) -- (9.4/2,0) --cycle;
		\filldraw[pattern=north east lines] (8/2,0) -- (8/2,1.00) -- (9.4/2,1.00) -- (9.4/2,0) --cycle;
		\draw (8.7/2,0) node[above]{\hlw{$1.00$}};
		\draw (8.7/2,1.00-0.0000115071590806874) -- (8.7/2,1.00+0.0000115071590806874);
		\draw (8.2/2, 1.00-0.0000115071590806874) -- (9.2/2, 1.0-0.0000115071590806874);
		\draw (8.2/2, 1.00+0.0000115071590806874) -- (9.2/2, 1.0+0.0000115071590806874);
		
		%random var 10 - lottery
		\draw (9.4/2,0) -- (9.4/2,1.2788694) -- (10.8/2,1.2788694) -- (10.8/2,0)--cycle;
		\filldraw[color=gray!80!black, opacity=0.3] (9.4/2,0) -- (9.4/2,1.2788694) -- (10.8/2,1.2788694) -- (10.8/2,0)--cycle;
		\draw (10.1/2,1/4) node[above]{\hlw{$1.28$}};
		\draw (10.1/2,1.2788694-0.000838425464258841) -- (10.1/2,1.2788694+0.000838425464258841);
		\draw (9.6/2, 1.2788694-0.000838425464258841) -- (10.6/2, 1.2788694-0.000838425464258841);
		\draw (9.6/2, 1.2788694+0.000838425464258841) -- (10.6/2, 1.2788694+0.000838425464258841);
		
		%random var 10 - bound
		\draw (10.8/2,0) -- (10.8/2,1.5084184) -- (12.2/2,1.5084184) -- (12.2/2,0) --cycle;
		\filldraw[color=gray!80!black, opacity=0.9] (10.8/2,0) -- (10.8/2,1.5084184) -- (12.2/2,1.5084184) -- (12.2/2,0) --cycle;
		\draw (11.5/2,2/4) node[above]{\hlw{$1.51$}};
		\draw (11.5/2,1.5084184-0.00582180556153554) -- (11.5/2,1.5084184+0.00582180556153554);
		\draw (11/2, 1.5084184-0.00582180556153554) -- (12/2, 1.5084184-0.00582180556153554);
		\draw (11/2, 1.5084184+0.00582180556153554) -- (12/2, 1.5084184+0.00582180556153554);
		
		\end{tikzpicture}
	\end{subfigure}\\
	(a)\\
	\hspace{-8mm}
	\begin{subfigure}{.5\columnwidth}%Ethn 1350 agents
		
		\hfill
		\begin{tikzpicture}[scale=1] \normalsize
		
		\coordinate (A) at (-1/2,0);
		\coordinate (B) at (12.7/2,0);
		\coordinate (C) at (-1/2,2.6);
		
		\draw[thick,-latex] (A) -- (B);
		\draw (12.5/2,-0.02) node[below]{\hlw{$\sigma^2$}};
		\draw[thick,-latex] (A) -- (C);
		
		\draw (-1/2,1) node{$\_$};
		\draw (-1.5/2,1) node{$1$};
		\draw (-1.5/2,2.4) node{\rotatebox{90}{welfare ratio}};
		
		\draw (1.1/2,0) node{$|$};
		\draw (1.1/2,-0.1) node[below]{$1$};
		\draw (5.6/2,0) node{$|$};
		\draw (5.6/2,-0.1) node[below]{$5$};
		\draw (10/2,0) node{$|$};
		\draw (10/2,-0.1) node[below]{$10$};

		%ethnicity var 1
		\draw  (-1/2,0) -- (-1/2,1.2020627) -- (0.4/2,1.2020627) -- (0.4/2,0) --cycle;
		\filldraw[pattern=north east lines] (-1/2,0) -- (-1/2,1.2020627) -- (0.4/2,1.2020627) -- (0.4/2,0) --cycle;
		\draw (-0.3/2,0) node[above]{\hlw{$1.20$}};
		\draw (-0.3/2,1.2020627-0.00005) -- (-0.3/2,1.2020627+0.00005);
		\draw (-0.8/2, 1.2020627-0.00005) -- (0.2/2, 1.2020627-0.00005);
		\draw (-0.8/2, 1.2020627+0.00005) -- (0.2/2, 1.2020627+0.00005);
		
		\draw (0.4/2,0) -- (0.4/2,1.259233) -- (1.8/2,1.259233) -- (1.8/2,0)--cycle;
		\filldraw[color=gray!80!black, opacity=0.3] (0.4/2,0) -- (0.4/2,1.259233) -- (1.8/2,1.259233) -- (1.8/2,0)--cycle;
		\draw (1.1/2,1/4) node[above]{\hlw{$1.26$}};
		\draw (1.1/2,1.259233-0.00821508808292104) -- (1.1/2,1.259233+0.00821508808292104);
		\draw (0.6/2, 1.259233-0.00821508808292104) -- (1.6/2, 1.259233-0.00821508808292104);
		\draw (0.6/2, 1.259233+0.00821508808292104) -- (1.6/2, 1.259233+0.00821508808292104);
		
		\draw (1.8/2,0) -- (1.8/2,1.8128535) -- (3.2/2,1.8128535) -- (3.2/2,0)  --cycle;
		\filldraw[color=gray!80!black, opacity=0.9] (1.8/2,0) -- (1.8/2,1.8128535) -- (3.2/2,1.8128535) -- (3.2/2,0)  --cycle;
		\draw (2.5/2,2/4) node[above]{\hlw{$1.81$}};
		\draw (2.5/2,1.8128535-0.08735448555596104) -- (2.5/2,1.8128535+0.08735448555596104);
		\draw (2/2, 1.8128535-0.08735448555596104) -- (3/2, 1.8128535-0.08735448555596104);
		\draw (2/2, 1.8128535+0.08735448555596104) -- (3/2, 1.8128535+0.08735448555596104);

		%ethn var 5 - constraints
		\draw (3.5/2,0) -- (3.5/2,1.1061177) -- (4.9/2,1.1061177) -- (4.9/2,0) --cycle;
		\filldraw[pattern=north east lines] (3.5/2,0) -- (3.5/2,1.1061177) -- (4.9/2,1.1061177) -- (4.9/2,0) --cycle;
		\draw (4.2/2,0) node[above]{\hlw{$1.11$}};
		\draw (4.2/2,1.1061177-0.06252642141974589) -- (4.2/2,1.1061177+0.06252642141974589);
		\draw (3.7/2, 1.1061177-0.06252642141974589) -- (4.7/2, 1.1061177-0.06252642141974589);
		\draw (3.7/2, 1.1061177+0.06252642141974589) -- (4.7/2, 1.1061177+0.06252642141974589);
		
		\draw  (4.9/2,0) -- (4.9/2,1.2008036) -- (6.3/2,1.2008036) -- (6.3/2,0)--cycle;
		\filldraw[color=gray!80!black, opacity=0.3] (4.9/2,0) -- (4.9/2,1.2008036) -- (6.3/2,1.2008036) -- (6.3/2,0)--cycle;
		\draw (5.6/2,1/4) node[above]{\hlw{$1.20$}};
		\draw (5.6/2,1.2008036-0.0700855584689059) -- (5.6/2,1.2008036+0.0700855584689059);
		\draw (5.1/2, 1.2008036-0.0700855584689059) -- (6.1/2, 1.2008036-0.0700855584689059);
		\draw (5.1/2, 1.2008036+0.0700855584689059) -- (6.1/2, 1.2008036+0.0700855584689059);
		
		\draw (6.3/2,0) -- (6.3/2,1.637792) -- (7.7/2,1.637792) -- (7.7/2,0)  --cycle;
		\filldraw[color=gray!80!black, opacity=0.9] (6.3/2,0) -- (6.3/2,1.637792) -- (7.7/2,1.637792) -- (7.7/2,0)  --cycle;
		\draw (7/2,2/4) node[above]{\hlw{$1.64$}};
		\draw (7/2,1.637792-0.09007021432646573) -- (7/2,1.637792+0.09007021432646573);
		\draw (6.5/2, 1.637792-0.09007021432646573) -- (7.5/2, 1.637792-0.09007021432646573);
		\draw (6.5/2, 1.637792+0.09007021432646573) -- (7.5/2, 1.637792+0.09007021432646573);

		%ethn var 10 - constraints
		\draw (8/2,0) -- (8/2,1.0625844) -- (9.4/2,1.0625844) -- (9.4/2,0) --cycle;
		\filldraw[pattern=north east lines] (8/2,0) -- (8/2,1.0625844) -- (9.4/2,1.0625844) -- (9.4/2,0) --cycle;
		\draw (8.7/2,0) node[above]{\hlw{$1.06$}};
		\draw (8.7/2,1.0625844-0.048821856588766845) -- (8.7/2,1.0625844+0.048821856588766845);
		\draw (8.2/2, 1.0625844-0.048821856588766845) -- (9.2/2, 1.0625844-0.048821856588766845);
		\draw (8.2/2, 1.0625844+0.048821856588766845) -- (9.2/2, 1.0625844+0.048821856588766845);
		
		%ethn var 10 - lottery
		\draw (9.4/2,0) -- (9.4/2,1.1554017) -- (10.8/2,1.1554017) -- (10.8/2,0)--cycle;
		\filldraw[color=gray!80!black, opacity=0.3] (9.4/2,0) -- (9.4/2,1.1554017) -- (10.8/2,1.1554017) -- (10.8/2,0)--cycle;
		\draw (10.1/2,1/4) node[above]{\hlw{$1.16$}};
		\draw (10.1/2,1.1554017-0.05412302618886995) -- (10.1/2,1.1554017+0.05412302618886995);
		\draw (9.6/2, 1.1554017-0.05412302618886995) -- (10.6/2, 1.1554017-0.05412302618886995);
		\draw (9.6/2, 1.1554017+0.05412302618886995) -- (10.6/2, 1.1554017+0.05412302618886995);
		
		%ethn var 10 - bound
		\draw (10.8/2,0) -- (10.8/2,1.5638157) -- (12.2/2,1.5638157) -- (12.2/2,0) --cycle;
		\filldraw[color=gray!80!black, opacity=0.9] (10.8/2,0) -- (10.8/2,1.5638157) -- (12.2/2,1.5638157) -- (12.2/2,0) --cycle;
		\draw (11.5/2,2/4) node[above]{\hlw{$1.56$}};
		\draw (11.5/2,1.5638157-0.07724360627477216) -- (11.5/2,1.5638157+0.07724360627477216);
		\draw (11/2, 1.5638157-0.07724360627477216) -- (12/2, 1.5638157-0.07724360627477216);
		\draw (11/2, 1.5638157+0.07724360627477216 ) -- (12/2, 1.5638157+0.07724360627477216);
		\end{tikzpicture}
		%  \caption{$\Ethn(\sigma^2)$}
		%\label{fig:1-sub3}
	\end{subfigure}
	\begin{subfigure}{.5\columnwidth}%Ethn 3000 agents
		
		\hfill
		\begin{tikzpicture}[scale=1] \normalsize
		\coordinate (A) at (-1/2,0);
		\coordinate (B) at (12.7/2,0);
		\coordinate (C) at (-1/2,2.6);
		
		\draw[thick,-latex] (A) -- (B);
		\draw (12.5/2,-0.02) node[below]{$\sigma^2$};
		\draw[thick,-latex] (A) -- (C);
		
		\draw (-1/2,1) node{$\_$};
		\draw (-1.5/2,1) node{$1$};
		\draw (-1.5/2,2.4) node{\rotatebox{90}{welfare ratio}};

		\draw (1.1/2,0) node{$|$};
		\draw (1.1/2,-0.1) node[below]{$1$};
		\draw (5.6/2,0) node{$|$};
		\draw (5.6/2,-0.1) node[below]{$5$};
		\draw (10/2,0) node{$|$};
		\draw (10/2,-0.1) node[below]{$10$};

		%ethnicity var 1
		\draw (-1/2,0) -- (-1/2,1.2984678) -- (0.4/2,1.2984678) -- (0.4/2,0) --cycle;
		\filldraw[pattern=north east lines] (-1/2,0) -- (-1/2,1.2984678) -- (0.4/2,1.2984678) -- (0.4/2,0) --cycle;
		\draw (-0.3/2,0) node[above]{\hlw{$1.30$}};
		\draw (-0.3/2,1.2984678-0.0186077841222681) -- (-0.3/2,1.2984678+0.0186077841222681);
		\draw (-0.8/2, 1.2984678-0.0186077841222681) -- (0.2/2, 1.2984678-0.0186077841222681);
		\draw (-0.8/2, 1.2984678+0.0186077841222681) -- (0.2/2, 1.2984678+0.0186077841222681);
		
		\draw  (0.4/2,0) -- (0.4/2,1.3897196) -- (1.8/2,1.3897196) -- (1.8/2,0)--cycle;
		\filldraw[color=gray!80!black, opacity=0.3] (0.4/2,0) -- (0.4/2,1.3897196) -- (1.8/2,1.3897196) -- (1.8/2,0)--cycle;
		\draw (1.1/2,1/4) node[above]{\hlw{$1.39$}};
		\draw (1.1/2,1.3897196-0.0200686782890189) -- (1.1/2,1.3897196+0.0200686782890189);
		\draw (0.6/2, 1.3897196-0.0200686782890189) -- (1.6/2, 1.3897196-0.0200686782890189);
		\draw (0.6/2, 1.3897196+0.0200686782890189) -- (1.6/2, 1.3897196+0.0200686782890189);
		
		\draw (1.8/2,0) -- (1.8/2,2.5163732) -- (3.2/2,2.5163732) -- (3.2/2,0)  --cycle;
		\filldraw[color=gray!80!black, opacity=0.9] (1.8/2,0) -- (1.8/2,2.5163732) -- (3.2/2,2.5163732) -- (3.2/2,0)  --cycle;
		\draw (2.5/2,2/4) node[above]{\hlw{$2.52$}};
		\draw (2.5/2,2.5163732-0.0538922408271425) -- (2.5/2,2.5163732+0.0538922408271425);
		\draw (2/2, 2.5163732-0.0538922408271425) -- (3/2, 2.51637325-0.0538922408271425);
		\draw (2/2, 2.5163732+0.0538922408271425) -- (3/2, 2.5163732+0.0538922408271425);

		%ethn var 5 - constraints
		\draw (3.5/2,0) -- (3.5/2,1.1601377) -- (4.9/2,1.1601377) -- (4.9/2,0) --cycle;
		\filldraw[pattern=north east lines] (3.5/2,0) -- (3.5/2,1.1601377) -- (4.9/2,1.1601377) -- (4.9/2,0) --cycle;
		\draw (4.2/2,0) node[above]{\hlw{$1.16$}};
		\draw (4.2/2,1.1601377-0.0102516273558165) -- (4.2/2,1.1601377+0.0102516273558165);
		\draw (3.7/2, 1.1601377-0.0102516273558165) -- (4.7/2, 1.1601377-0.0102516273558165);
		\draw (3.7/2, 1.1601377+0.0102516273558165) -- (4.7/2, 1.1601377+0.0102516273558165);
		
		\draw (4.9/2,0) -- (4.9/2,1.33763126) -- (6.3/2,1.3376312) -- (6.3/2,0)--cycle;
		\filldraw[color=gray!80!black, opacity=0.3] (4.9/2,0) -- (4.9/2,1.33763126) -- (6.3/2,1.3376312) -- (6.3/2,0)--cycle;
		\draw (5.6/2,1/4) node[above]{\hlw{$1.34$}};
		\draw (5.6/2,1.3376312-0.0120853279007776) -- (5.6/2,1.3376312+0.0120853279007776);
		\draw (5.1/2, 1.3376312-0.0120853279007776) -- (6.1/2, 1.3376312-0.0120853279007776);
		\draw (5.1/2, 1.3376312+0.0120853279007776) -- (6.1/2, 1.3376312+0.0120853279007776);
		
		\draw  (6.3/2,0) -- (6.3/2,2.2683122) -- (7.7/2,2.2683122) -- (7.7/2,0)  --cycle;
		\filldraw[color=gray!80!black, opacity=0.9] (6.3/2,0) -- (6.3/2,2.2683122) -- (7.7/2,2.2683122) -- (7.7/2,0)  --cycle;
		\draw (7/2,2/4) node[above]{\hlw{$2.27$}};
		\draw (7/2,2.2683122-0.0430439805488617) -- (7/2,2.2683122+0.0430439805488617);
		\draw (6.5/2, 2.2683122-0.0430439805488617) -- (7.5/2, 2.2683122-0.0430439805488617);
		\draw (6.5/2, 2.2683122+0.0430439805488617) -- (7.5/2, 2.2683122+0.0430439805488617);

		%ethn var 10 - constraints
		\draw  (8/2,0) -- (8/2,1.0958045) -- (9.4/2,1.0958045) -- (9.4/2,0) --cycle;
		\filldraw[pattern=north east lines] (8/2,0) -- (8/2,1.0958045) -- (9.4/2,1.0958045) -- (9.4/2,0) --cycle;
		\draw (8.7/2,0) node[above]{\hlw{$1.10$}};
		\draw (8.7/2,1.0958045-0.00973695861674025) -- (8.7/2,1.0958045+0.00973695861674025);
		\draw (8.2/2, 1.0958045-0.00973695861674025) -- (9.2/2, 1.0958045-0.00973695861674025);
		\draw (8.2/2, 1.0958045+0.00973695861674025) -- (9.2/2, 1.0958045+0.00973695861674025);
		
		%ethn var 10 - lottery
		\draw (9.4/2,0) -- (9.4/2,1.2907251) -- (10.8/2,1.2907251) -- (10.8/2,0)--cycle;
		\filldraw[color=gray!80!black, opacity=0.3] (9.4/2,0) -- (9.4/2,1.2907251) -- (10.8/2,1.2907251) -- (10.8/2,0)--cycle;
		\draw (10.1/2,1/4) node[above]{\hlw{$1.29$}};
		\draw (10.1/2,1.2907251-0.0124853575208772) -- (10.1/2,1.2907251+0.0124853575208772);
		\draw (9.6/2, 1.2907251-0.0124853575208772) -- (10.6/2, 1.2907251-0.0124853575208772);
		\draw (9.6/2, 1.2907251+0.0124853575208772) -- (10.6/2, 1.2907251+0.0124853575208772);
		
		%ethn var 10 - bound
		\draw (10.8/2,0) -- (10.8/2,2.12354) -- (12.2/2,2.12354) -- (12.2/2,0) --cycle;
		\filldraw[color=gray!80!black, opacity=0.9] (10.8/2,0) -- (10.8/2,2.12354) -- (12.2/2,2.12354) -- (12.2/2,0) --cycle;
		\draw (11.5/2,2/4) node[above]{\hlw{$2.12$}};
		\draw (11.5/2,2.12354-0.0452269438134879) -- (11.5/2,2.12354+0.0452269438134879);
		\draw (11/2, 2.12354-0.0452269438134879) -- (12/2, 2.12354-0.0452269438134879);
		\draw (11/2, 2.12354+0.0452269438134879) -- (12/2, 2.12354+0.0452269438134879);
		\end{tikzpicture}
		%  \caption{$\Ethn(\sigma^2)$}
		%\label{fig:1-sub3}
	\end{subfigure}\\
	(b)\\
	\caption{Averaged welfare ratios obtained for (a) $\Dist(\sigma^2)$ and (b) $\Ethn(\sigma^2)$ with $n = m = 1350$ (left) and $n = 3000$ (right), $m = 1350$ in our simulated instances of the Singapore public housing allocation problem. Horizontal axis shows values of utility model parameter $\sigma^2$ (variance); vertical axis shows ratio (or upper bound on ratio) of optimal unconstrained welfare to welfare induced by some constrained mechanism --- hatched bar: $\PoD(u)$ as per Definition~\ref{def:pod}, light gray bar: upper bound from Theorem~\ref{thm_parampod}, dark gray bar: $\PoDL(u,\cdot)$ as per Definition~\ref{def:podl}, averaged over $100$ agent permutations. \label{figTests1}}
\end{figure}

Let us now turn to the project approval-based utility model $\Proj(\rho)$. To compute distances in km, we make use of the fact that one degree of latitude or longitude at the location of Singapore corresponds to roughly 111 km; we vary the radius $\rho$ in $\{5,7.5,10\}$ (in km). The results averaged over 100 runs are provided in Figure~\ref{fig:ProjApprov}. In all instances, not only is the price of diversity almost one but the lottery-induced welfare is also nearly as good, achieving at least $87\%$ of the unconstrained optimum for $1350$ agents and practically $100\%$ for $3000$ agents; the disparity parameter is also consistently close to its ideal value of $1$, keeping the upper bound at around $1.45$ regardless of the radius. Thus, this can be considered an example of a utility model for which the lottery mechanism virtually implements a constrained optimal allocation for a wide range of model parameters.

%Figure for project approval-based utilities

\begin{figure}[h]
	\centering
	\hspace{-1.4mm}
	\begin{subfigure}{.5\columnwidth}%1350 agents
	\begin{tikzpicture}[scale=1] \normalsize
	\coordinate (A) at (-1/2,0);
	\coordinate (B) at (12.7/2,0);
	\coordinate (C) at (-1/2,2.6);
	
	\draw[thick,-latex] (A) -- (B);
	\draw (12.5/2,-0.1) node[below]{$\rho$};
	\draw[thick,-latex] (A) -- (C);
	
	\draw (-1/2,1) node{$\_$};
	\draw (-1.5/2,1) node{$1$};
	\draw (-1.5/2,2.4) node{\rotatebox{90}{welfare ratio}};
	
	\draw (1.1/2,0) node{$|$};
	\draw (1.1/2,-0.1) node[below]{$5$};
	\draw (5.6/2,0) node{$|$};
	\draw (5.6/2,-0.1) node[below]{$7.5$};
	\draw (10/2,0) node{$|$};
	\draw (10/2,-0.1) node[below]{$10$};

	%radius 5
	\draw  (-1/2,0) -- (-1/2,1.00) -- (0.4/2,1.00) -- (0.4/2,0) --cycle;
	\filldraw[pattern=north east lines] (-1/2,0) -- (-1/2,1.00) -- (0.4/2,1.00) -- (0.4/2,0) --cycle;
	\draw (-0.3/2,0) node[above]{\hlw{$1.00$}};
	\draw (-0.3/2, 1.00-0.00005282286161185068) -- (-0.3/2,1.00+0.00005282286161185068);
	\draw (-0.8/2, 1.00-0.00005282286161185068) -- (0.2/2, 1.0-0.00005282286161185068);
	\draw (-0.8/2, 1.0+0.00005282286161185068) -- (0.2/2, 1.00+0.00005282286161185068);
	
	\draw  (0.4/2,0) -- (0.4/2,1.0779319) -- (1.8/2,1.0779319) -- (1.8/2,0)--cycle;
	\filldraw[color=gray!80!black, opacity=0.3]  (0.4/2,0) -- (0.4/2,1.0779319) -- (1.8/2,1.0779319) -- (1.8/2,0)--cycle;
	\draw (1.1/2,1/4) node[above]{\hlw{$1.08$}};
	\draw (1.1/2, 1.0779319-0.0008012275854516601) -- (1.1/2, 1.0779319+0.0008012275854516601);
	\draw (0.6/2, 1.0779319-0.0008012275854516601) -- (1.6/2, 1.0779319-0.0008012275854516601);
	\draw (0.6/2, 1.0779319+0.0008012275854516601) -- (1.6/2, 1.0779319+0.0008012275854516601);
	
	\draw (1.8/2,0) -- (1.8/2,1.4538779) -- (3.2/2,1.4538779) -- (3.2/2,0)  --cycle;
	\filldraw[color=gray!80!black, opacity=0.9]  (1.8/2,0) -- (1.8/2,1.4538779) -- (3.2/2,1.4538779) -- (3.2/2,0)  --cycle;
	\draw (2.5/2,2/4) node[above]{\hlw{$1.45$}};
	\draw (2.5/2,1.4538779-0.0017859091582984148) -- (2.5/2,1.4538779+0.0017859091582984148);
	\draw (2/2, 1.4538779-0.0017859091582984148) -- (3/2, 1.4538779-0.0017859091582984148);
	\draw (2/2, 1.4538779+0.0017859091582984148) -- (3/2, 1.4538779+0.0017859091582984148);

	%radius 7.5
	\draw (3.5/2,0) -- (3.5/2,1.00) -- (4.9/2,1.00) -- (4.9/2,0) --cycle;
	\filldraw[pattern=north east lines] (3.5/2,0) -- (3.5/2,1.00) -- (4.9/2,1.00) -- (4.9/2,0) --cycle;
	\draw (4.2/2,0) node[above]{\hlw{$1.00$}};
	\draw (4.2/2,1.00-0.00005257287812715034) -- (4.2/2,1.00+0.00005257287812715034);
	\draw (3.7/2, 1.0-0.00005257287812715034) -- (4.7/2, 1.00-0.00005257287812715034);
	\draw (3.7/2, 1.00+0.00005257287812715034) -- (4.7/2, 1.000+0.00005257287812715034);
	
	\draw  (4.9/2,0) -- (4.9/2,1.1503563) -- (6.3/2,1.1503563) -- (6.3/2,0)--cycle;
	\filldraw[color=gray!80!black, opacity=0.3]  (4.9/2,0) -- (4.9/2,1.1503563) -- (6.3/2,1.1503563) -- (6.3/2,0)--cycle;
	\draw (5.6/2,1/4) node[above]{\hlw{$1.15$}};
	\draw (5.6/2,1.1503563-0.002245484061291389) -- (5.6/2,1.1503563+0.002245484061291389);
	\draw (5.1/2, 1.1503563-0.002245484061291389) -- (6.1/2, 1.1503563-0.002245484061291389);
	\draw (5.1/2, 1.1503563+0.002245484061291389) -- (6.1/2, 1.1503563+0.002245484061291389);
	
	\draw (6.3/2,0) -- (6.3/2,1.4560245) -- (7.7/2,1.4560245) -- (7.7/2,0)  --cycle;
	\filldraw[color=gray!80!black, opacity=0.9] (6.3/2,0) -- (6.3/2,1.4560245) -- (7.7/2,1.4560245) -- (7.7/2,0)  --cycle;
	\draw (7/2,2/4) node[above]{\hlw{$1.46$}};
	\draw (7/2,1.4560245-0.0011169362777883698) -- (7/2,1.4560245+0.0011169362777883698);
	\draw (6.5/2, 1.4560245-0.0011169362777883698) -- (7.5/2,1.4560245-0.0011169362777883698);
	\draw (6.5/2, 1.4560245+0.0011169362777883698) -- (7.5/2, 1.4560245+0.0011169362777883698);

	%radius 10
	\draw (8/2,0) -- (8/2,1.00) -- (9.4/2,1.00) -- (9.4/2,0) --cycle;
	\filldraw[pattern=north east lines] (8/2,0) -- (8/2,1.00) -- (9.4/2,1.00) -- (9.4/2,0) --cycle;
	\draw (8.7/2,0) node[above]{\hlw{$1.00$}};
	\draw (8.7/2,1.00-0.0) -- (8.7/2,1.00+0.0);
	\draw (8.2/2, 1.00-0.0) -- (9.2/2, 1.00-0.0);
	\draw (8.2/2, 1.00+0.0) -- (9.2/2, 1.00+0.0);
	
	\draw  (9.4/2,0) -- (9.4/2,1.0677265) -- (10.8/2,1.0677265) -- (10.8/2,0)--cycle;
	\filldraw[color=gray!80!black, opacity=0.3]  (9.4/2,0) -- (9.4/2,1.0677265) -- (10.8/2,1.0677265) -- (10.8/2,0)--cycle;
	\draw (10.1/2,1/4) node[above]{\hlw{$1.07$}};
	\draw (10.1/2,1.0677265-0.001060258302101601) -- (10.1/2,1.0677265+0.001060258302101601);
	\draw (9.6/2, 1.0677265-0.001060258302101601) -- (10.6/2, 1.0677265-0.001060258302101601);
	\draw (9.6/2, 1.0677265+0.001060258302101601) -- (10.6/2, 1.0677265+0.001060258302101601);
	
	\draw  (10.8/2,0) -- (10.8/2,1.4509482) -- (12.2/2,1.4509482) -- (12.2/2,0) --cycle;
	\filldraw[color=gray!80!black, opacity=0.9] (10.8/2,0) -- (10.8/2,1.4509482) -- (12.2/2,1.4509482) -- (12.2/2,0) --cycle;
	\draw (11.5/2,2/4) node[above]{\hlw{$1.45$}};
	\draw (11.5/2,1.4509482-0.0001920647767167375) -- (11.5/2,1.4509482+0.0001920647767167375);
	\draw (11/2, 1.4509482-0.0001920647767167375) -- (12/2, 1.4509482-0.0001920647767167375);
	\draw (11/2, 1.4509482+0.0001920647767167375) -- (12/2, 1.4509482+0.0001920647767167375);
	
	\end{tikzpicture}
    \end{subfigure}
	\begin{subfigure}{.5\columnwidth}%3000 agents
		
		\hfill
		\begin{tikzpicture}[scale=1] \normalsize
		\coordinate (A) at (-1/2,0);
		\coordinate (B) at (12.7/2,0);
		\coordinate (C) at (-1/2,2.6);
		
		\draw[thick,-latex] (A) -- (B);
		\draw (12.5/2,-0.1) node[below]{$\rho$};
		\draw[thick,-latex] (A) -- (C);
		
		\draw (-1/2,1) node{$\_$};
		\draw (-1.5/2,1) node{$1$};
		\draw (-1.5/2,2.4) node{\rotatebox{90}{welfare ratio}};
		
		\draw (1.1/2,0) node{$|$};
		\draw (1.1/2,-0.1) node[below]{$5$};
		\draw (5.6/2,0) node{$|$};
		\draw (5.6/2,-0.1) node[below]{$7.5$};
		\draw (10/2,0) node{$|$};
		\draw (10/2,-0.1) node[below]{$10$};

		%radius 5
		\draw (-1/2,0) -- (-1/2,1.00) -- (0.4/2,1.0) -- (0.4/2,0) --cycle;
		\filldraw[pattern=north east lines] (-1/2,0) -- (-1/2,1.0) -- (0.4/2,1.00) -- (0.4/2,0) --cycle;
		\draw (-0.3/2,0) node[above]{\hlw{$1.00$}};
		\draw (-0.3/2, 1.0-0.0001519354742898406) -- (-0.3/2,1.000+0.0001519354742898406);
		\draw (-0.8/2, 1.0-0.0001519354742898406) -- (0.2/2, 1.00-0.0001519354742898406);
		\draw (-0.8/2, 1.00+0.0001519354742898406) -- (0.2/2, 1.0+0.0001519354742898406);
		
		\draw  (0.4/2,0) -- (0.4/2,1.051549) -- (1.8/2,1.051549) -- (1.8/2,0)--cycle;
		\filldraw[color=gray!80!black, opacity=0.3] (0.4/2,0) -- (0.4/2,1.051549) -- (1.8/2,1.051549) -- (1.8/2,0)--cycle;
		\draw (1.1/2,1/4) node[above]{\hlw{$1.05$}};
		\draw (1.1/2, 1.051549-0.00036593556883131523) -- (1.1/2, 1.051549+0.00036593556883131523);
		\draw (0.6/2, 1.0515499-0.00036593556883131523) -- (1.6/2, 1.051549-0.00036593556883131523);
		\draw (0.6/2, 1.051549+0.00036593556883131523) -- (1.6/2, 1.051549+0.00036593556883131523);
		
		\draw  (1.8/2,0) -- (1.8/2,1.4515908) -- (3.2/2,1.4515908) -- (3.2/2,0)  --cycle;
		\filldraw[color=gray!80!black, opacity=0.9] (1.8/2,0) -- (1.8/2,1.4515908) -- (3.2/2,1.4515908) -- (3.2/2,0)  --cycle;
		\draw (2.5/2,2/4) node[above]{\hlw{$1.45$}};
		\draw (2.5/2,1.4515908-0.0) -- (2.5/2,1.4515908+0.0);
		\draw (2/2, 1.4515908-0.0) -- (3/2, 1.4515908-0.0);
		\draw (2/2, 1.4515908+0.0) -- (3/2, 1.45159084+0.0);
		%0.0 for 2.1945377249831633E-16
		
		%radius 7.5
		\draw (3.5/2,0) -- (3.5/2,1.0) -- (4.9/2,1.0) -- (4.9/2,0) --cycle;
		\filldraw[pattern=north east lines] (3.5/2,0) -- (3.5/2,1.0) -- (4.9/2,1.0) -- (4.9/2,0) --cycle;
		\draw (4.2/2,0) node[above]{\hlw{$1.00$}};
		\draw (4.2/2,1.0-0.0) -- (4.2/2,1.0+0.0);
		\draw (3.7/2, 1.0-0.0) -- (4.7/2, 1.0-0.0);
		\draw (3.7/2, 1.0+0.0) -- (4.7/2, 1.0+0.0);
		
		\draw (4.9/2,0) -- (4.9/2,1.0) -- (6.3/2,1.0) -- (6.3/2,0)--cycle;
		\filldraw[color=gray!80!black, opacity=0.3] (4.9/2,0) -- (4.9/2,1.0) -- (6.3/2,1.0) -- (6.3/2,0)--cycle;
		\draw (5.6/2,1/4) node[above]{\hlw{$1.00$}};
		\draw (5.6/2,1.0-0.0) -- (5.6/2,1.0+0.0);
		\draw (5.1/2, 1.0-0.0) -- (6.1/2, 1.0-0.0);
		\draw (5.1/2, 1.0+0.0) -- (6.1/2, 1.0+0.0);
		
		\draw (6.3/2,0) -- (6.3/2,1.4515908) -- (7.7/2,1.4515908) -- (7.7/2,0)  --cycle;
		\filldraw[color=gray!80!black, opacity=0.9] (6.3/2,0) -- (6.3/2,1.4515908) -- (7.7/2,1.4515908) -- (7.7/2,0)  --cycle;
		\draw (7/2,2/4) node[above]{\hlw{$1.45$}};
		\draw (7/2,1.4515908-0.0) -- (7/2,1.4515908+0.0);
		\draw (6.5/2, 1.4515908-0.0) -- (7.5/2,1.4515908-0.0);
		\draw (6.5/2, 1.4515908+0.0) -- (7.5/2, 1.4515908+0.0);
		%0.0 for 6.605593752270662E-17

		%radius 10
		\draw (8/2,0) -- (8/2,1.0) -- (9.4/2,1.0) -- (9.4/2,0) --cycle;
		\filldraw[pattern=north east lines] (8/2,0) -- (8/2,1.0) -- (9.4/2,1.0) -- (9.4/2,0) --cycle;
		\draw (8.7/2,0) node[above]{\hlw{$1.00$}};
		\draw (8.7/2,1.0-0.0) -- (8.7/2,1.0+0.0);
		\draw (8.2/2, 1.0-0.0) -- (9.2/2, 1.0-0.0);
		\draw (8.2/2, 1.0+0.0) -- (9.2/2, 1.0+0.0);
		
		\draw (9.4/2,0) -- (9.4/2,1.0) -- (10.8/2,1.0) -- (10.8/2,0)--cycle;
		\filldraw[color=gray!80!black, opacity=0.3] (9.4/2,0) -- (9.4/2,1.0) -- (10.8/2,1.0) -- (10.8/2,0)--cycle;
		\draw (10.1/2,1/4) node[above]{\hlw{$1.00$}};
		\draw (10.1/2,1.0-0.0) -- (10.1/2,1.0+0.0);
		\draw (9.6/2, 1.0-0.0) -- (10.6/2, 1.0-0.0);
		\draw (9.6/2, 1.0+0.0) -- (10.6/2, 1.0+0.0);
		
		\draw (10.8/2,0) -- (10.8/2,1.4513265) -- (12.2/2,1.4513265) -- (12.2/2,0) --cycle;
		\filldraw[color=gray!80!black, opacity=0.9] (10.8/2,0) -- (10.8/2,1.4513265) -- (12.2/2,1.4513265) -- (12.2/2,0) --cycle;
		\draw (11.5/2,2/4) node[above]{\hlw{$1.45$}};
		\draw (11.5/2,1.4515908-0.0) -- (11.5/2,1.4515908+0.0);
		\draw (11/2, 1.4515908-0.0) -- (12/2, 1.4515908-0.0);
		\draw (11/2, 1.4515908+0.0) -- (12/2, 1.4515908+0.0);
		%0.0 for 8.390818541177929E-17
		
		\end{tikzpicture}
	\end{subfigure}
	\caption{Averaged welfare ratios obtained for $\Proj(\rho)$ with $n = m = 1350$ (left) and $n = 3000$, $m = 1350$ (right) in our simulated instances of the Singapore public housing allocation problem. Horizontal axis shows values of the utility model parameter $\rho$ (radius); vertical axis shows ratio (or upper bound on ratio) of optimal unconstrained welfare to welfare induced by some constrained mechanism --- hatched bar: $\PoD(u)$ as per Definition~\ref{def:pod}, light gray bar: upper bound from Theorem~\ref{thm_parampod}, dark gray bar: $\PoDL(u,\cdot)$ as per Definition~\ref{def:podl}, averaged over $100$ agent permutations. \label{fig:ProjApprov}}
\end{figure}

Finally, we study the price-based utility model $\Price(\sigma^2)$, varying $\sigma^2$ in $\{0,10,50\}$; the results obtained by averaging over 100 runs are given in Figure~\ref{figTests3}.
While the price of diversity is practically equal to one in all instances, the welfare loss observed with the lottery mechanism drastically increases with $\sigma^2$ (recall that agents from the same ethnicity group have identical preferences when $\sigma^2 = 0$): for instance, for $1350$ agents, it extracts $98\%$ of the optimal unconstrained welfare under $\Price(0)$ while it only extracts $35\%$ of this value under $\Price(50)$. 
%Moreover, for $\Price(50)$, %we found problem instances in which the the welfare loss induced by the lottery mechanism exceeds the corresponding theoretical upper bound on $\PoD(u)$.
 %we observe that the welfare loss induced by the lottery mechanism exceeds the theoretical upper bound on $\PoD(u)$ in some instances with $1350$ agents. %(but this effect appears to be mitigated by a larger number of agents in our experiments). 
These numerical tests show that utility models exist for which the lottery mechanism may perform poorly compared to the optimal constrained allocation mechanism, even in allocation problems with a very low price of diversity.

%Figure for price based utilities

\begin{figure}[h]
\centering
\hspace{-1.4mm}
\begin{subfigure}{.5\columnwidth}%1350 agents
\begin{tikzpicture}[scale=1] \normalsize
\coordinate (A) at (-1/2,0);
\coordinate (B) at (12.7/2,0);
\coordinate (C) at (-1/2,5);

\draw[thick,-latex] (A) -- (B);
\draw (12.5/2,-0.02) node[below]{$\sigma^2$};
\draw[thick,-latex] (A) -- (C);

\draw (-1/2,1) node{$\_$};
\draw (-1.5/2,1) node{$1$};
\draw (-1.5/2,4.2) node{\rotatebox{90}{welfare ratio}};

\draw (1.1/2,0) node{$|$};
\draw (1.1/2,-0.1) node[below]{$0$};
\draw (5.6/2,0) node{$|$};
\draw (5.6/2,-0.1) node[below]{$10$};
\draw (10/2,0) node{$|$};
\draw (10/2,-0.1) node[below]{$50$};

%random var 0
\draw  (-1/2,0) -- (-1/2,1.00) -- (0.4/2,1.00) -- (0.4/2,0) --cycle;
\filldraw[pattern=north east lines] (-1/2,0) -- (-1/2,1.00) -- (0.4/2,1.00) -- (0.4/2,0) --cycle;
\draw (-0.3/2,0) node[above]{\hlw{$1.00$}};
\draw (-0.3/2, 1.00-0.000010066365846028857) -- (-0.3/2,1.00+0.000010066365846028857);
\draw (-0.8/2, 1.00-0.000010066365846028857) -- (0.2/2, 1.00-0.000010066365846028857);
\draw (-0.8/2, 1.00+0.000010066365846028857) -- (0.2/2, 1.00+0.000010066365846028857);

\draw (0.4/2,0) -- (0.4/2,1.0249861) -- (1.8/2,1.0249861) -- (1.8/2,0)--cycle;
\filldraw[color=gray!80!black, opacity=0.3] (0.4/2,0) -- (0.4/2,1.0249861) -- (1.8/2,1.0249861) -- (1.8/2,0)--cycle;
\draw (1.1/2,1/4) node[above]{\hlw{$1.02$}};
\draw (1.1/2, 1.0249861-0.002655117164065021) -- (1.1/2, 1.0249861+0.002655117164065021);
\draw (0.6/2, 1.0249861-0.002655117164065021) -- (1.6/2, 1.0249861-0.002655117164065021);
\draw (0.6/2, 1.0249861+0.002655117164065021) -- (1.6/2, 1.0249861+0.002655117164065021);

\draw (1.8/2,0) -- (1.8/2,3.2287123) -- (3.2/2,3.2287123) -- (3.2/2,0)  --cycle;
\filldraw[color=gray!80!black, opacity=0.9] (1.8/2,0) -- (1.8/2,3.2287123) -- (3.2/2,3.2287123) -- (3.2/2,0)  --cycle;
\draw (2.5/2,2/4) node[above]{\hlw{$3.23$}};
\draw (2.5/2,3.2287123-0.0012919861748411875) -- (2.5/2,3.2287123+0.0012919861748411875);
\draw (2/2, 3.2287123-0.0012919861748411875) -- (3/2, 3.2287123-0.0012919861748411875);
\draw (2/2, 3.2287123+0.0012919861748411875) -- (3/2, 3.2287123+0.0012919861748411875);

%random var 10
\draw  (3.5/2,0) -- (3.5/2,1.000) -- (4.9/2,1.00) -- (4.9/2,0) --cycle;
\filldraw[pattern=north east lines] (3.5/2,0) -- (3.5/2,1.002) -- (4.9/2,1.00) -- (4.9/2,0) --cycle;
\draw (4.2/2,0) node[above]{\hlw{$1.00$}};
\draw (4.2/2,1.0-0.000003415081036494972) -- (4.2/2,1.000+0.000003415081036494972);
\draw (3.7/2, 1.00-0.000003415081036494972) -- (4.7/2, 1.00-0.000003415081036494972);
\draw (3.7/2, 1.00+0.000003415081036494972) -- (4.7/2, 1.0+0.000003415081036494972);

\draw  (4.9/2,0) -- (4.9/2,2.1536946) -- (6.3/2,2.1536946) -- (6.3/2,0)--cycle;
\filldraw[color=gray!80!black, opacity=0.3] (4.9/2,0) -- (4.9/2,2.1536946) -- (6.3/2,2.1536946) -- (6.3/2,0)--cycle;
\draw (5.6/2,1/4) node[above]{\hlw{$2.15$}};
\draw (5.6/2,2.1536946-0.02978231642766816) -- (5.6/2,2.1536946+0.02978231642766816);
\draw (5.1/2, 2.1536946-0.02978231642766816) -- (6.1/2, 2.1536946-0.02978231642766816);
\draw (5.1/2, 2.1536946+0.02978231642766816) -- (6.1/2, 2.1536946+0.02978231642766816);

\draw (6.3/2,0) -- (6.3/2,3.8367848) -- (7.7/2,3.8367848) -- (7.7/2,0)  --cycle;
\filldraw[color=gray!80!black, opacity=0.9] (6.3/2,0) -- (6.3/2,3.8367848) -- (7.7/2,3.8367848) -- (7.7/2,0)  --cycle;
\draw (7/2,2/4) node[above]{\hlw{$3.84$}};
\draw (7/2,3.8367848-0.13431284321414882) -- (7/2,3.8367848+0.13431284321414882);
\draw (6.5/2, 3.8367848-0.13431284321414882) -- (7.5/2,3.8367848-0.13431284321414882);
\draw (6.5/2, 3.8367848+0.13431284321414882) -- (7.5/2, 3.8367848+0.13431284321414882);

%random var 50
\draw (8/2,0) -- (8/2,1.00) -- (9.4/2,1.0) -- (9.4/2,0) --cycle;
\filldraw[pattern=north east lines] (8/2,0) -- (8/2,1.000) -- (9.4/2,1.00) -- (9.4/2,0) --cycle;
\draw (8.7/2,0) node[above]{\hlw{$1.00$}};
\draw (8.7/2,1.00-0.000001812873141482601) -- (8.7/2,1.00+0.000001812873141482601);
\draw (8.2/2, 1.000-0.000001812873141482601) -- (9.2/2, 1.0-0.000001812873141482601);
\draw (8.2/2, 1.000+0.000001812873141482601) -- (9.2/2, 1.00+0.000001812873141482601);

\draw (9.4/2,0) -- (9.4/2,2.8566687) -- (10.8/2,2.8566687) -- (10.8/2,0)--cycle;
\filldraw[color=gray!80!black, opacity=0.3] (9.4/2,0) -- (9.4/2,2.8566687) -- (10.8/2,2.8566687) -- (10.8/2,0)--cycle;
\draw (10.1/2,1/4) node[above]{\hlw{$2.86$}};
\draw (10.1/2,2.8566687-0.02915663322998673) -- (10.1/2,2.8566687+0.02915663322998673);
\draw (9.6/2, 2.8566687-0.02915663322998673) -- (10.6/2, 2.8566687-0.02915663322998673);
\draw (9.6/2, 2.8566687+0.02915663322998673) -- (10.6/2, 2.8566687+0.02915663322998673);

\draw (10.8/2,0) -- (10.8/2,3.5933943) -- (12.2/2,3.5933943) -- (12.2/2,0) --cycle;
\filldraw[color=gray!80!black, opacity=0.9] (10.8/2,0) -- (10.8/2,3.5933943) -- (12.2/2,3.5933943) -- (12.2/2,0) --cycle;
\draw (11.5/2,2/4) node[above]{\hlw{$3.59$}};
\draw (11.5/2,3.5933943-0.1160429889854055) -- (11.5/2,3.5933943+0.1160429889854055);
\draw (11/2, 3.5933943-0.1160429889854055) -- (12/2, 3.5933943-0.1160429889854055);
\draw (11/2, 3.5933943+0.1160429889854055) -- (12/2, 3.5933943+0.1160429889854055);

\end{tikzpicture}
\end{subfigure}
\begin{subfigure}{.5\columnwidth}%3000 agents
	\begin{tikzpicture}[scale=1] \normalsize
	\coordinate (A) at (-1/2,0);
	\coordinate (B) at (12.7/2,0);
	\coordinate (C) at (-1/2,5);
	
	\draw[thick,-latex] (A) -- (B);
	\draw (12.5/2,-0.02) node[below]{$\sigma^2$};
	\draw[thick,-latex] (A) -- (C);
	
	\draw (-1/2,1) node{$\_$};
	\draw (-1.5/2,1) node{$1$};
	\draw (-1.5/2,4.2) node{\rotatebox{90}{welfare ratio}};
	
	\draw (1.1/2,0) node{$|$};
	\draw (1.1/2,-0.1) node[below]{$0$};
	\draw (5.6/2,0) node{$|$};
	\draw (5.6/2,-0.1) node[below]{$10$};
	\draw (10/2,0) node{$|$};
	\draw (10/2,-0.1) node[below]{$50$};
	
	%random var 0
	\draw (-1/2,0) -- (-1/2,1.0) -- (0.4/2,1.00) -- (0.4/2,0) --cycle;
	\filldraw[pattern=north east lines] (-1/2,0) -- (-1/2,1.0) -- (0.4/2,1.0) -- (0.4/2,0) --cycle;
	\draw (-0.3/2,0) node[above]{\hlw{$1.00$}};
	\draw (-0.3/2, 1.0-0.00006749640745528032) -- (-0.3/2,1.0+0.00006749640745528032);
	\draw (-0.8/2, 1.0-0.00006749640745528032) -- (0.2/2, 1.0-0.00006749640745528032);
	\draw (-0.8/2, 1.0+0.00006749640745528032) -- (0.2/2, 1.00+0.00006749640745528032);
	
	\draw (0.4/2,0) -- (0.4/2,1.0261301) -- (1.8/2,1.0261301) -- (1.8/2,0)--cycle;
	\filldraw[color=gray!80!black, opacity=0.3] (0.4/2,0) -- (0.4/2,1.0261301) -- (1.8/2,1.0261301) -- (1.8/2,0)--cycle;
	\draw (1.1/2,1/4) node[above]{\hlw{$1.03$}};
	\draw (1.1/2, 1.0261301-0.003010456922195181) -- (1.1/2, 1.0261301+0.003010456922195181);
	\draw (0.6/2, 1.0261301-0.003010456922195181) -- (1.6/2, 1.0261301-0.003010456922195181);
	\draw (0.6/2, 1.0261301+0.0030104569221951813) -- (1.6/2, 1.0261301+0.003010456922195181);
	
	\draw (1.8/2,0) -- (1.8/2,3.2310586) -- (3.2/2,3.2310586) -- (3.2/2,0)  --cycle;
	\filldraw[color=gray!80!black, opacity=0.9] (1.8/2,0) -- (1.8/2,3.2310586) -- (3.2/2,3.2310586) -- (3.2/2,0)  --cycle;
	\draw (2.5/2,2/4) node[above]{\hlw{$3.23$}};
	\draw (2.5/2,3.2310586 -0.0012140550406617855) -- (2.5/2,3.2310586+0.0012140550406617855);
	\draw (2/2, 3.2310586-0.0012140550406617855) -- (3/2, 3.2310586-0.0012140550406617855);
	\draw (2/2, 3.2310586+0.0012140550406617855) -- (3/2, 3.2310586+0.0012140550406617855);

	%random var 10
	\draw (3.5/2,0) -- (3.5/2,1.00) -- (4.9/2,1.0) -- (4.9/2,0) --cycle;
	\filldraw[pattern=north east lines] (3.5/2,0) -- (3.5/2,1.0) -- (4.9/2,1.000) -- (4.9/2,0) --cycle;
	\draw (4.2/2,0) node[above]{\hlw{$1.00$}};
	\draw (4.2/2,1.00-0.000029920885694495303) -- (4.2/2,1.000+0.000029920885694495303);
	\draw (3.7/2, 1.0-0.000029920885694495303) -- (4.7/2, 1.0-0.000029920885694495303);
	\draw (3.7/2, 1.00+0.000029920885694495303) -- (4.7/2, 1.00+0.000029920885694495303);
	
	\draw (4.9/2,0) -- (4.9/2,2.2648854) -- (6.3/2,2.2648854) -- (6.3/2,0)--cycle;
	\filldraw[color=gray!80!black, opacity=0.3] (4.9/2,0) -- (4.9/2,2.2648854) -- (6.3/2,2.2648854) -- (6.3/2,0)--cycle;
	\draw (5.6/2,1/4) node[above]{\hlw{$2.26$}};
	\draw (5.6/2,2.2648854-0.03179460038863851) -- (5.6/2,2.2648854+0.03179460038863851);
	\draw (5.1/2, 2.2648854-0.03179460038863851) -- (6.1/2, 2.2648854-0.03179460038863851);
	\draw (5.1/2, 2.2648854+0.03179460038863851) -- (6.1/2, 2.2648854+0.03179460038863851);
	
	\draw (6.3/2,0) -- (6.3/2,3.7084708 ) -- (7.7/2,3.7084708 ) -- (7.7/2,0)  --cycle;
	\filldraw[color=gray!80!black, opacity=0.9] (6.3/2,0) -- (6.3/2,3.7084708 ) -- (7.7/2,3.7084708 ) -- (7.7/2,0)  --cycle;
	\draw (7/2,2/4) node[above]{\hlw{$3.71$}};
	\draw (7/2,3.7084708 -0.10108855611470575) -- (7/2,3.7084708 +0.10108855611470575);
	\draw (6.5/2, 3.7084708 -0.10108855611470575) -- (7.5/2,3.7084708 -0.10108855611470575);
	\draw (6.5/2, 3.7084708 +0.10108855611470575) -- (7.5/2, 3.7084708 +0.10108855611470575);

	%random var 50
	\draw (8/2,0) -- (8/2,1.00) -- (9.4/2,1.00) -- (9.4/2,0) --cycle;
	\filldraw[pattern=north east lines] (8/2,0) -- (8/2,1.0) -- (9.4/2,1.0) -- (9.4/2,0) --cycle;
	\draw (8.7/2,0) node[above]{\hlw{$1.00$}};
	\draw (8.7/2, 1.000-0.00000355633007674229) -- (8.7/2,1.000+0.00000355633007674229);
	\draw (8.2/2, 1.000-0.00000355633007674229) -- (9.2/2, 1.00-0.00000355633007674229);
	\draw (8.2/2, 1.00+0.00000355633007674229) -- (9.2/2, 1.000+0.00000355633007674229);
	
	\draw (9.4/2,0) -- (9.4/2,2.9919255) -- (10.8/2,2.9919255) -- (10.8/2,0)--cycle;
	\filldraw[color=gray!80!black, opacity=0.3] (9.4/2,0) -- (9.4/2,2.9919255) -- (10.8/2,2.9919255) -- (10.8/2,0)--cycle;
	\draw (10.1/2,1/4) node[above]{\hlw{$2.99$}};
	\draw (10.1/2, 2.9919255-0.03323042870932185) -- (10.1/2, 2.9919255+0.03323042870932185);
	\draw (9.6/2, 2.9919255-0.03323042870932185) -- (10.6/2, 2.9919255-0.03323042870932185);
	\draw (9.6/2, 2.9919255+0.03323042870932185) -- (10.6/2, 2.9919255+0.03323042870932185);
	
	\draw (10.8/2,0) -- (10.8/2,3.8903532) -- (12.2/2,3.8903532) -- (12.2/2,0)  --cycle;
	\filldraw[color=gray!80!black, opacity=0.9] (10.8/2,0) -- (10.8/2,3.8903532) -- (12.2/2,3.8903532) -- (12.2/2,0)  --cycle;
	\draw (11.5/2,2/4) node[above]{\hlw{$3.89$}};
	\draw (11.5/2,3.8903532-0.11756406246190147) -- (11.5/2,3.8903532+0.11756406246190147);
	\draw (11/2, 3.8903532-0.11756406246190147) -- (12/2, 3.8903532-0.11756406246190147);
	\draw (11/2, 3.8903532+0.11756406246190147) -- (12/2, 3.8903532+0.11756406246190147);
	
	\end{tikzpicture}
\end{subfigure}
\caption{Averaged welfare ratios obtained for $\Price(\sigma^2)$ with $n = m = 1350$ (left) and $n = 3000$, $m = 1350$ (right) in our simulated instances of the Singapore public housing allocation problem. Horizontal axis shows values of utility model parameter $\sigma^2$ (variance); vertical axis shows ratio (or upper bound on ratio) of optimal unconstrained welfare to welfare induced by some constrained mechanism --- hatched bar: $\PoD(u)$ as per Definition~\ref{def:pod}, light gray bar: upper bound from Theorem~\ref{thm_parampod}, dark gray bar: $\PoDL(u,\cdot)$ as per Definition~\ref{def:podl}, averaged over $100$ agent permutations. \label{figTests3}}
\end{figure}

\subsection{Chicago Public School Admissions}\label{sec:chicago_expts}

\paragraph{Data Collection} From the Chicago Public Schools website,\footnote{\url{http://cps.edu/Pages/home.aspx}} we collected data on the locations of magnet schools in Chicago; we focus on these schools because they use a lottery mechanism to select students. 
We also collected the total number of students enrolled in these schools in 2018, and divided this number by $9$ to obtain the approximate number of students that can be accepted to the first grade (there are nine grades in total). This leads us to instances with $l=37$ blocks (schools) and $m = 2261$ items in total (available spots). 
In this school admission problem, students are partitioned into $k=4$ types, viz. Tiers 1, 2, 3 and 4, depending on their residence (see Figure \ref{ChicagoMap}). In our experiments, we consider a pool of $n \in \{2261, 5000\}$ students whose type composition follows the real-world proportion: we have $|N_1| = 613$ ($\approx 27.1 \%$: Tier 1), $|N_2| = 622$ ($\approx 27.5 \%$: Tier 2), $|N_3| = 533$ ($\approx 23.6 \%$: Tier 3) and $|N_4| = 493$ ($\approx 21.8 \%$: Tier 4) for $n = 2261$ and we have $|N_1| = 1355$, $|N_2| = 1375$, $|N_3| = 1180$ and $|N_4| = 1090$ for $n=5000$. We use equal fractional quotas across schools and tiers, i.e. our type-block capacities are $\lambda_{pq} = 0.25 |M_q|$ $\forall (p,q) \in [k]\times [l]$. Recall also that we have one-shot lotteries (Section~\ref{sec:lottery}) in our simulations.
%the same type-block capacities $\lambda_{pq},(p,q) \in [k]\times [l]$ as in the Chicago public school admission problem, which are simply defined by $\lambda_{pq} = 0.25 |M_q|$.

\begin{figure}[!t]
\centering
\includegraphics[scale=0.4]{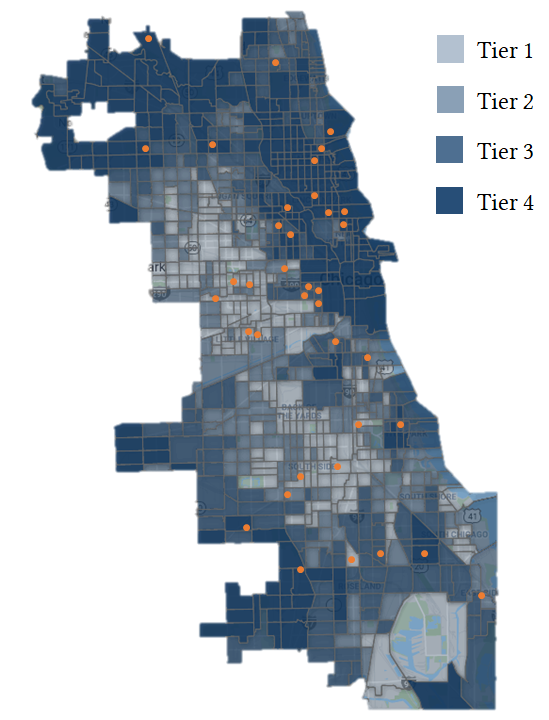}
\caption{Map of Chicago showing the tier statuses of census tracts (\url{http://cpstiers.opencityapps.org/}) and locations of magnet schools (orange dots) based on \url{http://cps.edu/ScriptLibrary/Map-SchoolLocator/index.html}.\label{ChicagoMap}}
\end{figure}

\paragraph{Utility model} Since we do not have access to students' utilities for schools, we simulate them as follows. We use the distance-based utility model $\Dist(\sigma^2)$ introduced in Section~\ref{sec:HDBexpts} with the following important modifications:
\begin{itemize}
	\item We choose the preferred location of a student uniformly at random from the collection of census tracts (polygons) belonging to her tier (see Figure \ref{ChicagoMap}); the position of every polygon is approximated by taking the average of the coordinates of its extreme points.
	\item We reset each student's utility to $0$ for any school ranked $21$st or lower in the preference ordering induced by her utilities computed as in the distance-based model of Section~\ref{sec:HDBexpts}, and then renormalize the utilities; this is because students are allowed to apply to at most 20 schools.
\end{itemize}

\paragraph{Evaluation} In our experiments, we vary both $\sigma^2$ in $\{0,10,50\}$ and $n$ in $\{2261,5000\}$, and for each setting, we compute the following quantities: the price of diversity $\PoD(u)$ as per Definition~\ref{def:pod} (hatched bar); the theoretical upper bound on $\PoD(u)$ as per Theorem~\ref{thm_parampod} (dark gray bar); and the price of the diverse lottery $\PoDL(u,\cdot)$ as per Definition~\ref{def:podl}, averaged over $100$ agent permutations (light gray bar).
Results obtained by averaging over 100 runs are given in Figure \ref{figTests4}. First, we observe that both the price of diversity and the loss of the lottery mechanism decrease as $\sigma^2$ increases; this is unsurprising since utilities are less type-dependent as $\sigma^2$ grows. However, we observe that the loss of the lottery mechanism is quite high in all instances. 
%This shows that we need a better approximation algorithm for the $\ASTC$ problem. 
Moreover, just as in the Singapore public housing allocation problem, the outcome of the lottery mechanism is negatively impacted by the number of students; therefore, we can conclude that this mechanism seems to be better suited to problems with an equal number of agents and items.

\begin{figure}[!t]
\centering
\begin{subfigure}{.5\columnwidth}
\begin{tikzpicture}[scale=1] \normalsize
\coordinate (A) at (-1/2,0);
\coordinate (B) at (12.7/2,0);
\coordinate (C) at (-1/2,6.5);

\draw[thick,-latex] (A) -- (B);
\draw (12.5/2,-0.02) node[below]{$\sigma^2$};
\draw[thick,-latex] (A) -- (C);

\draw (-1/2,1) node{$\_$};
\draw (-1.5/2,1) node{$1$};
\draw (-1.5/2,5.7) node{\rotatebox{90}{welfare ratio}};

\draw (1.1/2,0) node{$|$};
\draw (1.1/2,-0.1) node[below]{$0$};
\draw (5.6/2,0) node{$|$};
\draw (5.6/2,-0.1) node[below]{$10$};
\draw (10/2,0) node{$|$};
\draw (10/2,-0.1) node[below]{$50$};

%random var 1
\draw (-1/2,0) -- (-1/2,1.2118467) -- (0.4/2,1.2118467) -- (0.4/2,0) --cycle;
\filldraw[pattern=north east lines] (-1/2,0) -- (-1/2,1.2118467) -- (0.4/2,1.2118467) -- (0.4/2,0) --cycle;
\draw (-0.3/2,0) node[above]{\hlw{$1.21$}};
\draw (-0.3/2, 1.2118467-0.012177202313235398) -- (-0.3/2,1.2118467+0.012177202313235398);
\draw (-0.8/2, 1.2118467-0.012177202313235398) -- (0.2/2, 1.2118467-0.012177202313235398);
\draw (-0.8/2, 1.2118467+0.012177202313235398) -- (0.2/2, 1.2118467+0.012177202313235398);

\draw (0.4/2,0) -- (0.4/2,2.0566387) -- (1.8/2,2.0566387) -- (1.8/2,0)--cycle;
\filldraw[color=gray!80!black, opacity=0.3] (0.4/2,0) -- (0.4/2,2.0566387) -- (1.8/2,2.0566387) -- (1.8/2,0)--cycle;
\draw (1.1/2,1/4) node[above]{\hlw{$2.06$}};
\draw (1.1/2, 2.0566387-0.029619403903660566) -- (1.1/2, 2.0566387+0.029619403903660566);
\draw (0.6/2, 2.0566387-0.029619403903660566) -- (1.6/2, 2.0566387-0.029619403903660566);
\draw (0.6/2, 2.0566387+0.029619403903660566) -- (1.6/2, 2.0566387+0.029619403903660566);

\draw  (1.8/2,0) -- (1.8/2,5.503533) -- (3.2/2,5.503533) -- (3.2/2,0)  --cycle;
\filldraw[color=gray!80!black, opacity=0.9] (1.8/2,0) -- (1.8/2,5.503533) -- (3.2/2,5.503533) -- (3.2/2,0)  --cycle;
\draw (2.5/2,2/4) node[above]{\hlw{$5.50$}};
\draw (2.5/2,5.503533-0.26256064648175814) -- (2.5/2,5.503533+0.26256064648175814);
\draw (2/2, 5.503533-0.26256064648175814) -- (3/2, 5.503533-0.26256064648175814);
\draw (2/2, 5.503533+0.26256064648175814) -- (3/2, 5.503533+0.26256064648175814);

%random var 5
\draw  (3.5/2,0) -- (3.5/2,1.1326774) -- (4.9/2,1.1326774) -- (4.9/2,0) --cycle;
\filldraw[pattern=north east lines] (3.5/2,0) -- (3.5/2,1.1326774) -- (4.9/2,1.1326774) -- (4.9/2,0) --cycle;
\draw (4.2/2,0) node[above]{\hlw{$1.13$}};
\draw (4.2/2,1.1326774-0.007462567628421884) -- (4.2/2,1.1326774+0.007462567628421884);
\draw (3.7/2, 1.1326774-0.007462567628421884) -- (4.7/2, 1.1326774-0.007462567628421884);
\draw (3.7/2, 1.1326774+0.007462567628421884) -- (4.7/2, 1.1326774+0.007462567628421884);

\draw  (4.9/2,0) -- (4.9/2,1.7563305) -- (6.3/2,1.7563305) -- (6.3/2,0)--cycle;
\filldraw[color=gray!80!black, opacity=0.3] (4.9/2,0) -- (4.9/2,1.7563305) -- (6.3/2,1.7563305) -- (6.3/2,0)--cycle;
\draw (5.6/2,1/4) node[above]{\hlw{$1.76$}};
\draw (5.6/2,1.7563305-0.021859431351834397) -- (5.6/2,1.7563305+0.021859431351834397);
\draw (5.1/2, 1.7563305-0.021859431351834397) -- (6.1/2, 1.7563305-0.021859431351834397);
\draw (5.1/2, 1.7563305+0.021859431351834397) -- (6.1/2, 1.7563305+0.021859431351834397);

\draw  (6.3/2,0) -- (6.3/2,5.6126537) -- (7.7/2,5.6126537) -- (7.7/2,0)  --cycle;
\filldraw[color=gray!80!black, opacity=0.9] (6.3/2,0) -- (6.3/2,5.6126537) -- (7.7/2,5.6126537) -- (7.7/2,0)  --cycle;
\draw (7/2,2/4) node[above]{\hlw{$5.61$}};
\draw (7/2,5.6126537-0.21768159779801774) -- (7/2,5.6126537+0.21768159779801774);
\draw (6.5/2,5.6126537-0.21768159779801774) -- (7.5/2, 5.6126537-0.21768159779801774);
\draw (6.5/2,5.6126537+0.21768159779801774) -- (7.5/2, 5.6126537+0.21768159779801774);

%random var 10 - constraints
\draw  (8/2,0) -- (8/2,1.0384176) -- (9.4/2,1.0384176) -- (9.4/2,0) --cycle;
\filldraw[pattern=north east lines] (8/2,0) -- (8/2,1.0384176) -- (9.4/2,1.0384176) -- (9.4/2,0) --cycle;
\draw (8.7/2,0) node[above]{\hlw{$1.04$}};
\draw (8.7/2,1.0384176-0.0026611483744819983) -- (8.7/2,1.0384176+0.0026611483744819983);
\draw (8.2/2, 1.0384176-0.0026611483744819983) -- (9.2/2, 1.0384176-0.0026611483744819983);
\draw (8.2/2, 1.0384176+0.0026611483744819983) -- (9.2/2, 1.0384176+0.0026611483744819983);

%random var 10 - lottery
\draw (9.4/2,0) -- (9.4/2,1.4161681) -- (10.8/2,1.4161681) -- (10.8/2,0)--cycle;
\filldraw[color=gray!80!black, opacity=0.3] (9.4/2,0) -- (9.4/2,1.4161681) -- (10.8/2,1.4161681) -- (10.8/2,0)--cycle;
\draw (10.1/2,1/4) node[above]{\hlw{$1.42$}};
\draw (10.1/2,1.4161681-0.009918654783187215) -- (10.1/2,1.4161681+0.009918654783187215);
\draw (9.6/2, 1.4161681-0.009918654783187215) -- (10.6/2, 1.4161681-0.009918654783187215);
\draw (9.6/2, 1.4161681+0.009918654783187215) -- (10.6/2, 1.4161681+0.009918654783187215);

%random var 10 - bound
\draw (10.8/2,0) -- (10.8/2,4.8575587) -- (12.2/2,4.8575587) -- (12.2/2,0) --cycle;
\filldraw[color=gray!80!black, opacity=0.9] (10.8/2,0) -- (10.8/2,4.8575587) -- (12.2/2,4.8575587) -- (12.2/2,0) --cycle;
\draw (11.5/2,2/4) node[above]{\hlw{$4.86$}};
\draw (11.5/2,4.8575587-0.15814840728147608) -- (11.5/2,4.8575587+0.15814840728147608);
\draw (11/2, 4.8575587-0.15814840728147608) -- (12/2, 4.8575587-0.15814840728147608);
\draw (11/2, 4.8575587+0.15814840728147608) -- (12/2, 4.8575587+0.15814840728147608);

\end{tikzpicture}
%  \caption{$\Dist(\sigma^2)$}
  %\label{fig:1-sub1}
\end{subfigure}%
\begin{subfigure}{.5\columnwidth}

\hfill
\begin{tikzpicture}[scale=1] \normalsize
\coordinate (A) at (-1/2,0);
\coordinate (B) at (12.7/2,0);
\coordinate (C) at (-1/2,6.5);

\draw[thick,-latex] (A) -- (B);
\draw (12.5/2,-0.02) node[below]{$\sigma^2$};
\draw[thick,-latex] (A) -- (C);

\draw (-1/2,1) node{$\_$};
\draw (-1.5/2,1) node{$1$};
\draw (-1.5/2,5.7) node{\rotatebox{90}{welfare ratio}};

\draw (1.1/2,0) node{$|$};
\draw (1.1/2,-0.1) node[below]{$0$};
\draw (5.6/2,0) node{$|$};
\draw (5.6/2,-0.1) node[below]{$10$};
\draw (10/2,0) node{$|$};
\draw (10/2,-0.1) node[below]{$50$};

%ethnicity var 1
\draw (-1/2,0) -- (-1/2,1.2518184) -- (0.4/2,1.2518184) -- (0.4/2,0) --cycle;
\filldraw[pattern=north east lines] (-1/2,0) -- (-1/2,1.2518184) -- (0.4/2,1.2518184) -- (0.4/2,0) --cycle;
\draw (-0.3/2,0) node[above]{\hlw{$1.25$}};
\draw (-0.3/2,1.2518184-0.041640607392458374) -- (-0.3/2,1.2518184+0.041640607392458374);
\draw (-0.8/2, 1.2518184-0.041640607392458374) -- (0.2/2, 1.2518184-0.041640607392458374);
\draw (-0.8/2, 1.2518184+0.041640607392458374) -- (0.2/2, 1.2518184+0.041640607392458374);

\draw (0.4/2,0) -- (0.4/2,2.4225519) -- (1.8/2,2.4225519) -- (1.8/2,0)--cycle;
\filldraw[color=gray!80!black, opacity=0.3] (0.4/2,0) -- (0.4/2,2.4225519) -- (1.8/2,2.4225519) -- (1.8/2,0)--cycle;
\draw (1.1/2,1/4) node[above]{\hlw{$2.42$}};
\draw (1.1/2,2.4225519-0.3671208483056494) -- (1.1/2,2.4225519+0.3671208483056494);
\draw (0.6/2, 2.4225519-0.3671208483056494) -- (1.6/2, 2.4225519-0.3671208483056494);
\draw (0.6/2,2.4225519+0.3671208483056494) -- (1.6/2, 2.4225519+0.3671208483056494);

\draw (1.8/2,0) -- (1.8/2,5.7169294) -- (3.2/2,5.7169294) -- (3.2/2,0)  --cycle;
\filldraw[color=gray!80!black, opacity=0.9] (1.8/2,0) -- (1.8/2,5.7169294) -- (3.2/2,5.7169294) -- (3.2/2,0)  --cycle;
\draw (2.5/2,2/4) node[above]{\hlw{$5.72$}};
\draw (2.5/2,5.7169294 -0.35556812450077185) -- (2.5/2,5.7169294 +0.35556812450077185);
\draw (2/2, 5.7169294 -0.35556812450077185) -- (3/2, 5.7169294 -0.35556812450077185);
\draw (2/2,5.7169294 +0.35556812450077185) -- (3/2,5.7169294 +0.35556812450077185);

%ethn var 5 - constraints
\draw (3.5/2,0) -- (3.5/2,1.1677887) -- (4.9/2,1.1677887) -- (4.9/2,0) --cycle;
\filldraw[pattern=north east lines] (3.5/2,0) -- (3.5/2,1.1677887) -- (4.9/2,1.1677887) -- (4.9/2,0) --cycle;
\draw (4.2/2,0) node[above]{\hlw{$1.17$}};
\draw (4.2/2,1.1677887-0.03608559367976477) -- (4.2/2,1.1677887+0.03608559367976477);
\draw (3.7/2, 1.1677887-0.03608559367976477) -- (4.7/2, 1.1677887-0.03608559367976477);
\draw (3.7/2, 1.1677887+0.03608559367976477) -- (4.7/2, 1.1677887+0.03608559367976477);

\draw  (4.9/2,0) -- (4.9/2,2.0379956) -- (6.3/2,2.0379956) -- (6.3/2,0)--cycle;
\filldraw[color=gray!80!black, opacity=0.3] (4.9/2,0) -- (4.9/2,2.0379956) -- (6.3/2,2.0379956) -- (6.3/2,0)--cycle;
\draw (5.6/2,1/4) node[above]{\hlw{$2.04$}};
\draw (5.6/2,2.0379956-0.28259715638611543) -- (5.6/2,2.0379956+0.28259715638611543);
\draw (5.1/2, 2.0379956-0.28259715638611543) -- (6.1/2,2.0379956-0.28259715638611543);
\draw (5.1/2, 2.0379956+0.28259715638611543) -- (6.1/2,2.0379956+0.28259715638611543);

\draw (6.3/2,0) -- (6.3/2,5.9224505) -- (7.7/2,5.9224505) -- (7.7/2,0)  --cycle;
\filldraw[color=gray!80!black, opacity=0.9] (6.3/2,0) -- (6.3/2,5.9224505) -- (7.7/2,5.9224505) -- (7.7/2,0)  --cycle;
\draw (7/2,2/4) node[above]{\hlw{$5.92$}};
\draw (7/2,5.9224505-0.41942978839418443) -- (7/2,5.9224505+0.41942978839418443);
\draw (6.5/2,5.9224505-0.41942978839418443) -- (7.5/2,5.9224505-0.41942978839418443);
\draw (6.5/2,5.9224505+0.41942978839418443) -- (7.5/2,5.9224505+0.41942978839418443);

%ethn var 10 - constraints
\draw (8/2,0) -- (8/2,1.0431257) -- (9.4/2,1.0431257) -- (9.4/2,0) --cycle;
\filldraw[pattern=north east lines] (8/2,0) -- (8/2,1.0431257) -- (9.4/2,1.0431257) -- (9.4/2,0) --cycle;
\draw (8.7/2,0) node[above]{\hlw{$1.04$}};
\draw (8.7/2,1.0431257-0.005621286665752604) -- (8.7/2,1.0431257+0.005621286665752604);
\draw (8.2/2, 1.0431257-0.005621286665752604) -- (9.2/2, 1.0431257-0.005621286665752604);
\draw (8.2/2, 1.0431257+0.005621286665752604) -- (9.2/2, 1.0431257+0.005621286665752604);

%ethn var 10 - lottery
\draw (9.4/2,0) -- (9.4/2,1.5519581) -- (10.8/2,1.5519581) -- (10.8/2,0)--cycle;
\filldraw[color=gray!80!black, opacity=0.3] (9.4/2,0) -- (9.4/2,1.5519581) -- (10.8/2,1.5519581) -- (10.8/2,0)--cycle;
\draw (10.1/2,1/4) node[above]{\hlw{$1.55$}};
\draw (10.1/2,1.5519581-0.13621049867099336) -- (10.1/2,1.5519581+0.13621049867099336);
\draw (9.6/2, 1.5519581-0.13621049867099336) -- (10.6/2, 1.5519581-0.13621049867099336);
\draw (9.6/2, 1.5519581+0.13621049867099336) -- (10.6/2, 1.5519581+0.13621049867099336);

%ethn var 10 - bound
\draw (10.8/2,0) -- (10.8/2,5.0881653) -- (12.2/2,5.0881653) -- (12.2/2,0) --cycle;
\filldraw[color=gray!80!black, opacity=0.9] (10.8/2,0) -- (10.8/2,5.0881653) -- (12.2/2,5.0881653) -- (12.2/2,0) --cycle;
\draw (11.5/2,2/4) node[above]{\hlw{$5.09$}};
\draw (11.5/2,5.0881653-0.3004501757244344) -- (11.5/2,5.0881653+0.3004501757244344);
\draw (11/2,5.0881653-0.3004501757244344) -- (12/2, 5.0881653-0.3004501757244344);
\draw (11/2,5.0881653+0.3004501757244344) -- (12/2, 5.0881653+0.3004501757244344);
\end{tikzpicture}
%  \caption{$\Ethn(\sigma^2)$}
  %\label{fig:1-sub3}
\end{subfigure}%
\caption{Averaged welfare ratios obtained for $\Dist(\sigma^2)$ with $n = m = 2261$ (left) and $n = 5000, m = 2261$ (right) in our simulated instances of the Chicago public school admission problem. Horizontal axis shows values of utility model parameter $\sigma^2$ (variance); vertical axis shows ratio (or upper bound on ratio) of optimal unconstrained welfare to welfare induced by some constrained mechanism --- hatched bar: $\PoD(u)$ as per Definition~\ref{def:pod}, light gray bar: upper bound from Theorem~\ref{thm_parampod}, dark gray bar: $\PoDL(u,\cdot)$ as per Definition~\ref{def:podl}, averaged over $100$ agent permutations. \label{figTests4}}
\end{figure}

\section{Discussion and Future Work}
Our work constitutes a first step towards a better understanding of  the  effect of hard diversity constraints on social welfare. We can summarize our results as follows. One of our main contributions is to offer computational insights into the \ASTC problem: a general hardness result, sufficient conditions for tractability, and a $\tfrac12$-approximation algorithm. We derive two upper bounds on the price of diversity defined as the ratio of the optimal welfare achievable with and without type-block constraints: the first is in terms of block quotas only, independent of the utility model, hence under the planner's control; the second is parametrized in terms of inter-type disparity, which shows that when the disparity is low, the welfare loss is much closer to its ideal value of $1$ than the first bound would suggest. We analyze our model's behavior in simulations based on two real-world data sets.

 We will conclude with further remarks on the above results, a few  straightforward implications, and directions for further research.

\subsection{Further questions on complexity and approximation}
Some questions that remain open are:
\begin{itemize}
	\item Is a better approximation to \ASTC possible?
	\item Does there exist a more natural/elegant polynomial-time approximation algorithm for \ASTC that does not involve reduction to \BCM?
	\item Are there any other non-trivial special cases of \ASTC that admit polynomial-time solutions?
\end{itemize}
However, since we have shown that \ASTC is a special (still NP-complete) case of the \BCM problem, some easy generalizations of our results follow from the literature on \BCM.
For example, there is a polynomial-time approximation scheme (PTAS) for \BCM if one allows $(1+\eps)$-violations of the color constraints~\cite{grandoni2009iterative}; this immediately implies a PTAS for \ASTC where one allows $(1+\eps)$-violations of the type-block constraints. Intuitively, this means that we can efficiently achieve near-optimal constrained optimal welfare if we are willing to \emph{soften} our hard constraints within well-defined limits. 

\subsection{\emph{Soft} diversity concepts}
 There are alternative approaches towards diversity promotion in various branches of research that do not rely on \emph{hard bounds} (fixed quotas):
\begin{itemize}
	\item maximization of a carefully constructed monotone, submodular function that itself trades diversity off against ``solution quality'' (similar in spirit to the welfare concept we have used) in a matching/allocation/subset selection setting \cite{ahmed2017diverse,schumann2019diverse,dickerson2019balancing};
	\item soft diversity constraints in a matching setting (e.g. school seat allocation) with ordinal preferences \cite[and references therein]{ehlers2014school,kurata2017controlled}: quotas are used as ``flexible limits that regulate ... priorities" when assigning scarce items \cite{ehlers2014school}.
	\item sampling datasets to ensure an adequate representation of labeled categories (\emph{combinatorial diversity} or \emph{diversity indices}) and/or a broad coverage of the feature space (\emph{geometric diversity} via \emph{determinantal point processes}) for machine learning applications \cite[and references therein]{celis2016fair}.	 
\end{itemize}
An important future research direction would be to compare and contrast diversity quotas with these different approaches towards diverse solutions in terms of their impact on welfare, both theoretically and experimentally.

\subsection{Types based on ``overlapping'' attributes} 
The types considered in our paper essentially constitute a partition over the agents with respect to one \emph{attribute}: ethnicity in Singapore public housing and SES-based tiers in Chicago Public Schools. But in many situations, quotas/bounds are specified for multiple attributes. Consider for example, a hypothetical public housing allocation problem in a city that has the same three ethnic groups (and the respective quotas) as in Singapore; but the population is also divided into high-income and low-income groups and the city additionally requires that no block of flats should have more than $55\%$ of its occupants from either of these two groups. Flats, in addition to be being grouped into blocks, could also be partitioned based on categories (see Section~\ref{sec:HDBexpts}) and any number of capacity constraints could be specified combinatorially.

It is easy to see that \ASTC is a special case of this class of problems. So, the NP-hardness of \ASTC (Theorem~\ref{theoNP}) readily implies that of this more general class. However, it is not obvious whether this general version reduces to \BCM, hence approximability results of \ASTC may not generalize. It is also unclear how to extend the $\PoD$ analysis of Section~\ref{sec:diversity} -- if it can be done at all -- to the situation with multiple partitions each with its own set of arbitrary percentage caps. This is a challenge we wish to tackle in the future. For existing work on diversity requirements based on multiple overlapping attributes in a subset selection setting (not matching), please refer to \citet{bredereck2018multiwinner,lang2016multi}. 

%An obvious attempt is to take the coarsest common refinement of partitions induced by all attributes on either side (agents and items) and treat each part of the resulting single partition as an effective type (resp. effective block) of agents (resp. items); e.g. for the above example, the effective types would be ``high-income Chinese", ``low-income Chinese", ``high-income Malay", ``low-income Malay", ``high-income Indian/Others", and ``low-income Indian/Others". However, the (arbitrary) percentage caps for each partition may not translate fixed percentage caps for effective types and blocks. For example, consider the above partition of agents based on ethnicity and income and a partition of flats into blocks only; let the subset of agents of ethnicity $e \in [3]$ and income-group $g \in [2]$ be denoted by $N_{e,g}$. This affects not just the complexity results but also the welfare properties. utility-free upper bound on $\PoD$ applies?
 
\subsection{Utility models and experiments}
Simulating agent utilities is still a major challenge: ideally, one would elicit applicants' utilities directly via large-scale national surveys. 
The aim of our experiments was to see how much worse the diversity-constrained optimal allocation performs than our theoretical bounds (and, to some extent, how much better it is compared to its lottery version) depending on the utility model, and not to compare utility models based on the $\PoD$ as a performance measure. Hence, we formulated clean utility models that focus on individual factors that, we believe, bear on individuals' preferences in the problem domains considered. For example, our simulations with the $\Dist(\sigma^2)$ and $\Ethn(\sigma^2)$ models in Section~\ref{sec:HDBexpts} tested two `extreme' cases: one where there is no correlation between ethnicity and utility, and one where utility is artificially correlated to ethnicity. The truth is likely somewhere in between. Ethnic groups in Singapore most likely do have some correlation between their utility models; this can be due to socioeconomic factors (there is some correlation between ethnicity and socioeconomic status), the location of cultural or religious centers, or other unknown factors. Developing and investigating refined utility models is an interesting direction for future work.

\subsection{Lottery mechanisms with diversity quotas}

Allocation based on constrained optimization under known utilities can serve as a benchmark for any mechanism respecting the same set of diversity constraints, including the lottery mechanism in Algorithm~\ref{lottery}.
Obviously, the relative loss of the lottery mechanism cannot be smaller than the $\PoD$ for the same problem instance; but, in our experiments, the diverse lottery performs surprisingly well for several utility models (e.g. Figures~\ref{figTests1} and~\ref{fig:ProjApprov}), although it appears to be more sensitive to the utility model and its parameters than the constrained optimization benchmark (e.g. Figures~\ref{figTests3} and~\ref{figTests4}). As yet, we do not have satisfactory explanations for many of our observations. Offering theoretical guarantees on the performance of the lottery mechanism with diversity constraints (and in some sense, complementing the analysis by \citet{raffles17immorlica}) would provide better insights on our experimental results. In particular, we could look at the ratio of the expected utilitarian social welfare induced by a lottery without diversity constraints to that under diversity constraints for the same utilities.\footnote{This ratio can never exceed the expected price of the diverse lottery as defined in  Section~\ref{sec:lottery} (see Definition~\ref{def:podl} and the following paragraph), but is not guaranateed to be larger than the $\PoD$ for the same problem instance. A detailed analysis, both theoretical and experimental, of this concept is an important research topic in its own right, hence we have not reported the ratio of the expected welfare of the lottery mechanism without constraints to that with type-block constraints for our experiments in Section~\ref{sec:experiments}.} 

Moreover, recall that the lottery mechanism operates on the basis of partial information on the cardinal utilities to which the benchmark has access: specifically, for each agent in the random sequence, the algorithm only elicits the agent's top choice among the items in blocks for which the corresponding capacity constraint has not been satisfied, which also depends on the predecessors of the agent in the random sequence. This is even more restricted information than some recent work on matching/assignment \cite{anshelevich2016blind} and subset selection \cite[and references therein]{anshelevich2018approximating} has considered: these approaches assume access to ordinal information (each agent's ranking over alternatives) but the performance is measured in terms of the hidden cardinal utilities that generate these rankings -- the concept of  \emph{distortion} \cite{procaccia2006distortion} is often used in this line of work as a measure of fractional loss in (cardinal) solution quality due to ordinal information. It will be interesting to study (optimal) algorithms working with ranking data under diversity constraints, and the resulting combination of distortion and price of diversity -- intuitively, the welfare loss with full rankings cannot be worse than that of our lottery (with significantly less information), which already performs close to the constrained optimum (with the true cardinal inputs) in many of our experiments.\footnote{We thank an anonymous reviewer for pointing out this possible connection to the literature on distortion.}

A relevant practical question that also needs attention is why the constrained optimization approach is not used in our motivating real-world problems instead of allocations based on lotteries. There could be several problem-specific reasons for this. We offer \ASTC as a benchmark for the \emph{utilitarian} objective under one particular diversity-promoting policy (hard capacity bounds); but a central planner/allocator might have several objectives in mind at the same time. In a mechanism that relies on agents' subjective valuations of items (that need to be elicited in some form), one might also care about \emph{individual fairness} and \emph{discouraging strategic behavior}. In the absence of diversity constraints, a (uniform) lottery is identical to \emph{random serial dictatorship} which satisfies many such desiderata \cite{abdulkadirouglu1998random,bogomolnaia2001new}. As such, an extensive investigation of diversity constraints with properties other than the utilitarian social welfare is also imperative.  

\subsection{Closing remarks}
Our results describe an inevitable tradeoff between diversity and social welfare; however, we emphasize that this does not constitute a moral judgment on the authors' part. Economic considerations are certainly important, but they are by no means an exclusive or even a first-order consideration. That said, understanding the impact of diversity constraints on social welfare is key if one is to justify their implementation.

\begin{acks}
This research was funded by an MOE Grant (no. R-252-000-625-133) and an NRF Research Fellowship (no. R-252-000-750-733). The authors would like to thank the anonymous reviewers and attendees of both AAMAS 2018 and COMSOC 2018, where part of this work was presented, and the anonymous reviewers of TEAC, whose insightful comments on previous drafts greatly improved the paper. Thanks are also due to members of the Research and Planning Group, Housing and Development Board of Singapore, who met with the authors and gave valuable feedback on an early version of the paper.
\end{acks}

\bibliographystyle{ACM-Reference-Format}
\bibliography{abb,paper}

\end{document}